\newtheorem{theorem}{Theorem}
\newtheorem{lemma}{Lemma}
\newtheorem{proposition}{Proposition}
\newtheorem{corollary}{Corollary}
\theoremstyle{definition}
\title{\bf Linear Convergence of SDCA in Statistical Estimation }
\date{}
\author[1]{Chao Qu}
\author[2]{Huan Xu}
\affil[1]{Department of Mechanical Engineering, National University of Singapore}
\affil[2]{H. Milton Stewart School of Industrial and Systems Engineering, Georgia Institute of Technology}
\begin{document}
	
	\maketitle
	
\begin{abstract}
	In this paper, we consider stochastic dual coordinate (SDCA) {\em without} strongly convex assumption or convex assumption. We show that SDCA converges linearly under mild conditions termed restricted strong convexity. This covers a wide array of popular statistical models including  Lasso, group Lasso, and logistic regression with $\ell_1$ regularization, corrected Lasso  and linear regression with SCAD regularizer. This significantly improves previous convergence results on SDCA for problems that are not strongly convex. As a by product, we derive a dual free form of SDCA that can handle general regularization term, which is of interest by itself.

\end{abstract}

	\section{Introduction}
	First order methods have again become a central focus of research in optimization and particularly in machine learning in recent years, thanks to its ability to address very large scale empirical risk minimization problems that are ubiquitous in machine learning, a task that is often challenging for other algorithms such interior point methods.  The randomized (dual) coordinate version of the first order method samples one data point and updates the objective function at each time step, which avoids the computations of the full gradient and  pushes the speed to a higher level. Related methods have been implemented in various software packages \cite{vedaldi08vlfeat}. In particular, the randomized dual coordinate method considers the following problem.
	\begin{equation}\label{ERM}\begin{split}
	\min_{w\in \Omega} F(w)&:=\frac{1}{n}\sum_{i=1}^{n} f_i (w)+\lambda g(w) \\
	&= f(w)+\lambda g(w),
	\end{split}
	\end{equation}
	where $f_i(w)$ is a convex loss function of each sample and $g(w)$ is the regularization, $\Omega$ is a convex compact set.
	Instead of directly solving the primal problem it look at the dual problem
	$$ D(\alpha)=\frac{1}{n}\sum_{i=1}^{n}-\psi_i^* (-\alpha_i)-\lambda g^*\big( \frac{1}{\lambda n}\sum_{i=1}^{n}X_i\alpha_i \big),$$
	where it assumes the loss function $f_i(w)$ has the form $\psi_i(X_i^Tw)$.  \citet{shalev2013stochastic} consider this dual form and proved linear convergence of the stochastic dual coordinate ascent method (SDCA) when $g(w)=\|w\|_2^2$. They further extend the result into a general form, which allows the regulerizer $g(w)$
	to be a general strongly convex function \cite{shalev2014accelerated}. \citet{qu2015stochastic} propose the AdaSDCA, an adaptive variant of SDCA, to allow the method to adaptively change the probability distribution over the dual variables through the iterative process. Their experiment results outperform the non-adaptive methods. \citet{dunner2016primal} consider a primal-dual frame work and extend SDCA to the non-strongly convex case with sublinear rate. \citet{allen2015even} further improve the convergece speed using a novel non-uniform sampling that selects each coordinate with a probability proportional to the square root of the smoothness parameter. Other acceleration techniques \cite{qu2014coordinate,qu2015quartz,nesterov2012efficiency},  as well as mini-batch and distributed variants on coordinate method \cite{liu2015asynchronous,zhao2014accelerated,jaggi2014communication,mahajan2014distributed} have been studied in literature. See \citet{wright2015coordinate} for a review on the coordinate method. 
	
	Our goal is to investigate how to extend SDCA to non-convex statistical problems. Non-convex optimization problems attract fast growning attentions due to the rise of numerous applications, notably non-convex M estimators (e.g.,SCAD \cite{fan2001variable}, MCP \cite{zhang2010nearly}),  deep learning \cite{Goodfellow-et-al-2016} and robust regression \citep[e.g., corrrect Lasso in][]{loh2011high}. \citet{shalev2016sdca} proposes a dual free version of SDCA and proves the linear convergence of it, which addresses the case that each individual loss function $f_i(w)$ may be non-convex, but their sum $f(w)$ is strongly convex.  This extends the applicability of SDCA.  
	From a technical perspective, due to non-convexity of $f_i(w)$, the paper avoids explicitly using its dual (hence the name ``dual free''), by introducing pseudo-dual variables.
	
	In this paper, we consider using SDCA to solve M-estimators that are not strongly convex, or even non-convex. We show that under the {\em restricted strong convexity} condition, SDCA converges linearly. This setup includes well known formulations such as   Lasso, Group Lasso, Logistic regression with $\ell_1$ regularization, linear regression with SCAD regularizer \cite{fan2001variable}, Corrected Lasso \cite{loh2011high}, to name a few. This significantly improves upon existing theoretic results~\cite{shalev2013stochastic,shalev2014accelerated,shalev2016sdca} which only established sublinear convergence on convex objectives.
	
	
	To this end, we first adapt  SDCA \cite{shalev2014accelerated} into  a  generalized dual free form in  Algorithm \ref{alg:proximal_SDCA}. This is because to apply SDCA in our setup, we need to introduce a non-convex term and thus demand a dual free analysis.
	We remark that the theory about dual free SDCA established in \citet{shalev2016sdca} {\em does  not} apply to our setup for the following reasons:
	\begin{enumerate}
		\item In \citet{shalev2016sdca}, $F(w)$ needs to be strongly convex, thus does not apply to Lasso or non-convex M-estimators.
		\item \citet{shalev2016sdca} only studies the special case where $g(w)=\frac{1}{2}\|w\|_2^2$, while the M-estimators we consider include non-smooth regularization such as $\ell_1$ or $\ell_{1,2}$ norm.
	\end{enumerate}

	\begin{algorithm}[t]
		\caption{Generalized Dual free SDCA}
		\label{alg:proximal_SDCA}
		\begin{algorithmic}
			\STATE {\bfseries Input:}  Step size $\eta$, number of iterations $T$, and smoothness parameters $L_1,...,L_n$ 
			\STATE {\bfseries Initializition:}   $v^0=\frac{1}{\lambda n} \sum_{i=1}^{n} a_i^0$ for some $a^0=(a_1^0,...,a_n^0)$, $q_i=\frac{L_i+\bar{L}}{2n\bar{L}}$, where $\bar{L}=\frac{1}{n}\sum_{i=1}^{n}L_i$, probability distribution $Q=(q_1,q_2,...,q_n)$ on $\{1,...,n\}$.
			\FOR{$t=1,...,T$}
			\STATE pick $i$ randomly according to $Q$, $\eta_i=\frac{\eta}{q_i n}$; \\
			\STATE update: $a_i^t=a_i^{t-1}-\eta_i \lambda n (\nabla f_i(w^{t-1})+a_i^{t-1})$;\\
			\STATE update: $v^t=v^{t-1}-\eta_i  (\nabla f_i(w^{t-1})+a_i^{t-1})$;\\
			\STATE update:  $w^t=\arg \max_{w\in \Omega} \langle w,v^t\rangle-g(w) $.
			\ENDFOR
		\end{algorithmic}
	\end{algorithm}

	We   illustrate the update rule of  Algorithm \ref{alg:proximal_SDCA} using an example. While our main focus  is for   non-strongly convex problems, we start with a strongly convex example to obtain some intuitions. Suppose $f_i(w)=\frac{1}{2}(y_i-w^T x_i)^2$ and $g(w)=\frac{1}{2}\|w\|_2^2+ \|w\|_1$. It is easy to see $\nabla f_i(w)= x_i(w^T x_i-y_i)$ and $	w_t=\arg\min_{w\in\Omega} \frac{1}{2}\|w-v_t\|_2^2+ \|w\|_1.$
	
	It is then clear that to apply SDCA  $g(w)$ needs to be strongly convex, or otherwise the proximal step $w_t=\arg\max_{w\in \Omega} w^Tv^t-\|w\|_1  $ becomes ill-defined: $w_t$ may be infinity (if $\Omega$ is unbounded) or not unique. This observation motivates the following preprocessing step: if $g(\cdot)$ is not strongly convex, for example Lasso where  $f_i(w)=\frac{1}{2}(y_i-w^T x_i)^2$ and $g(w)=\|w\|_1$, we redefine the formulation by adding a strongly convex term to $g(w)$ and subtract it from $f(w)$.
	More precisely, for $i=1,...,n$, we define $	\phi_i(w)=\frac{n+1}{2n} \|y_i-w^Tx_i\|_2^2,$ $\phi_{n+1}(w)=-\frac{ \tilde{\lambda} (n+1)}{2}\|w\|_2^2,$ $\tilde{g}(w)= \frac{1}{2}\|w\|_2^2+\frac{\lambda}{\tilde{\lambda}}\|w\|_1,$
	and apply Algorithm \ref{alg:proximal_SDCA} on $\frac{1}{n+1} \sum_{i=1}^{n+1} \phi_i(w)+\tilde{\lambda} \tilde{g}(w)$ (which is equivalent to Lasso), where the value of $\tilde{\lambda}$ will be specified later. Our analysis  is thus focused on this alternative representation. The main challenge arises from the  non-convexity of the newly defined $\phi_{n+1}$, which precludes the use of the {\em dual method} as in \cite{shalev2014accelerated}. While in \citet{shalev2016sdca}, a dual free SDCA algorithm is proposed and analyzed, the results do not apply to out setting for reasons discussed above.  
	
	Our contributions are two-fold. 1. We prove   linear convergence  of SDCA for a class of problem that are not strongly convex   or even non-convex, making use of the concept  {\em restricted strong convexity} (RSC). To the best of our knowledge, this is the first work to prove  linear convergence   of SDCA under this setting which includes several statistical model such as Lasso, Group Lasso, logistic regression, linear regression with SCAD regularization, corrected Lasso, to name a few.  2. As a byproduct,  we derive a dual free from SDCA that extends the work of \citet{shalev2016sdca} to account for more general regularization $g(w)$.

	\textbf{Related work.}\,\, \citet{agarwal2010fast} prove  linear convergence  of the {\em batched} gradient and composite gradient method in Lasso, Low rank matrix completion problem  using the RSC condition. \citet{loh2013regularized} extend this to a class of non-convex M-estimators. In spirit, our work can be thought as a stochastic counterpart. Recently, \citet{qu2016linear,qu2017saga} consider SVRG and SAGA in the similar setting as ours, but they look at the primal problem. \citet{shalev2016sdca} considers dual free SDCA, but the analysis does not apply to the case where $F(\cdot)$ is not strongly convex.  Similarly, \citet{allen2015improved}  consider the non-strongly convex setting in SVRG ($\mbox{SVRG}^{++}$), where $F(w)$ is convex, each individual $f_i(w)$ may be non-convex, but only establish sub-linear convergence. Recently, \citet{li2016stochastic} consider SVRG with a zero-norm constraint and prove   linear convergence for this specific formulation. In comparison,  our results hold more generally,  covering not only sparsity model but also corrected Lasso with noisy covariate, group sparsity model, and beyond.

	\section{Problem setup}
	In this paper we consider two setups, namely, (1) convex but not strongly convex $F(w)$  and (2) non-convex $F(w)$.
	For the convex case, we consider 
	\begin{equation}\label{equ.formulation.convex}
	\begin{split}
	\min_{g(w)\leq \rho} F(w)&:=f(w)+\lambda 
	g(w)\\
	&:=\frac{1}{n}\sum_{i=1}^{n} f_i (w)+\lambda g(w),
	\end{split}
	\end{equation}
	where $\rho>0$ is some pre-defined radius, $f_i(w)$ is the loss function for  sample $i$, and $g(w)$ is a norm.
	Here we assume each $f_i(w)$ is convex and $L_i$-smooth.

	For the non-convex $F(w)$ we consider
	\begin{equation}\label{obj:non_convex}
	\begin{split}
	\min_{d_\lambda(w)\leq \rho}F(w):&=f(w)+d_{\lambda,\mu}(w)\\
	&=\frac{1}{n}\sum_{i=1}^{n} f_i(w)+d_{\lambda,\mu}(w) ,
	\end{split}
	\end{equation}
	where $\rho>0$ is some pre-defined radius, $f_i(w)$ is convex and $L_i$ smooth, $d_{\lambda,\mu}(w)$ is a non-convex regularizer depending on a tuning parameter $\lambda$  and a parameter $\mu$ explained in section \ref{section:Assumption_non_convex}. This M-estimator includes a side constraint depending on $d_\lambda (w)$, which needs to be a convex function and admits a lower bound $d_{\lambda}(w)\geq\|w\|_1$. It is also closed w.r.t.\ to $d_{\lambda,\mu}(w)$,  more details are deferred to section \ref{section:Assumption_non_convex}.
	
	We list some examples that belong to these two setups.
	\begin{itemize}
		\item Lasso: $F(w)=\sum_{i=1}^{n}\frac{1}{2}(y_i-w^Tx_i)^2+\lambda \|w\|_1$.
		\item Logistic regression with $\ell_1$ penalty:
		$$F(w)=\sum_{i=1}^{n} \log (1+\exp (-y_ix_i^Tw))+\lambda \|w\|_1.$$
		\item Group Lasso $F(w)=\sum_{i=1}^{n}\frac{1}{2}(y_i-w^Tx_i)^2+\lambda \|w\|_{1,2}$.
		\item Corrected Lasso \cite{loh2011high}:
		$F(w)=\sum_{i=1}^{n}\frac{1}{2n} (\langle w,x_i \rangle-y_i)^2-\frac{1}{2}w^T\Sigma w+\lambda \|w\|_1$, where $\Sigma$ is some positive definite matrix.
		\item linear regression with SCAD regularizer \cite{fan2001variable}: $F(w)=\sum_{i=1}^{n}\frac{1}{2n} (\langle w,x_i \rangle-y_i)^2+SCAD(w).$
	\end{itemize}  
	
	The first three examples belong to first setup while the last two belong to the second setup.
	
	\subsection{Restricted Strongly Convexity}
	
	Restricted Strongly Convexity (RSC) is proposed in \citet{negahban2009unified,van2009conditions} and has been explored in several other work \cite{agarwal2010fast,loh2013regularized}. We say the loss function $f(w)$ satisfies the RSC condition with curvature $\kappa$ and the tolerance parameter $\tau$ with respect to the norm $g(w)$ if 
	\begin{equation}
	\begin{split}
	& \Delta f(w_1,w_2)\triangleq f(w_1)-f(w_2)-\langle \nabla f(w_2),w_1-w_2 \rangle\\
	&\geq \frac{\kappa}{2} \|w_1-w_2\|_2^2-\tau g^2(w_1-w_2).
	\end{split}
	\end{equation}
	When $f(w)$ is $\gamma$ strongly convex,  then it is RSC with  $\kappa=\gamma$ and $\tau=0$. However in many case,  $f(w)$ may not be strongly convex, especially in the high dimensional case where the ambient dimension $p>n$. On the other hand, RSC is easier to be satisfied.  Take Lasso as an example, under some mild condition, it is shown that \cite{negahban2009unified} 
	$ \Delta f(w_1,w_2)\geq c_1 \|w_1-w_2\|_2^2-c_2\frac{\log p}{n}\|w_1-w_2\|_1^2,$
	where $c_1$, $c_2$ are some  positive constants. Besides Lasso, the RSC condition holds for a large range of statistical models including log-linear model, group sparsity model, and low-rank model. See \citet{negahban2009unified} for more detailed discussions. 
	\subsection{Assumption on convex regularizer $g(w)$}
	Decomposability is the other ingredient needed to analyze the algorithm.
	
	\mbox{\textbf{Definition:}} A regularizer is {\em decomposable} with respect to a pair of subspaces $\mathcal{A}\subseteq \mathcal{B}$ if 
	$$ g(\alpha+\beta)=g(\alpha)+g(\beta) \quad \text{for all} \quad \alpha\in \mathcal{A}, \beta\in \mathcal{B^\perp}, $$ where $\perp$ means the orthogonal complement.
	
	
	A concrete example is $l_1$ regularization for sparse vector supported on subset $S$. We define the subspace pairs with respect to the subset $S\subset\{1,...,p\}$, 
	$ \mathcal{A}(S)=\{w\in \mathbb{R}^p| w_j=0 \text{~for all~} j\notin S \} $ and $\mathcal{B}(S)=\mathcal{A}(S).$ The decomposability is thus easy to verify. Other widely used examples include non-overlap group norms such as$\|\cdot\|_{1,2}$, and the nuclear norm $\|| \cdot\||_{*}$~\cite{negahban2009unified}. In the rest of the paper, we denote $w_{\mathcal{B}}$ as the projection of $w$ on the subspace $\mathcal{B}$.
	
	\subsubsection*{Subspace compatibility constant}
	For any subspace $\mathcal{A}$ of $\mathbb{R}^p$, the {\em subspace compatibility constant }with respect to the pair $(g,\|\cdot\|_2)$ is give by 
	$$ \Psi (\mathcal{A})=\sup_{u\in \mathcal{A}\backslash\{0\}} \frac{g(u)}{\|u\|_2}.$$
	That is, it is the Lipschitz constant of the regularizer restricted in $\mathcal{A}$. For example, for the above-mentioned sparse vector with cardinality $s$, $\Psi(\mathcal{A})=\sqrt{s}$ for $g(u)=\|u\|_1$.
	
	\subsection{Assumption on non-convex regularizer $d_{\lambda,\mu} (w)$}\label{section:Assumption_non_convex}
	We consider  regularizers that are {\em separable} across coordinates, i.e., $d_{\lambda,\mu}(w)=\sum_{j=1}^{p} \bar{d}_{\lambda,\mu} (w_j)$.
	Besides the separability, we make further assumptions on the univariate function $\bar{d}_{\lambda,\mu} (t)$:
	\begin{enumerate}
		\item  $\bar{d}_{\lambda,\mu}(\cdot)$ satisfies $\bar{d}_{\lambda,\mu}(0)=0$ and is symmetric about zero (i.e.,~$\bar{d}_{\lambda,\mu}(t)=\bar{d}_{\lambda,\mu}(-t)$).
		\item On the nonnegative real line, $\bar{d}_{\lambda,\mu} (\cdot) $ is nondecreasing.
		\item For $t>0$,  $\frac{\bar{d}_{\lambda,\mu} (t)}{t}$ is nonincreasing in t.
		\item $\bar{d}_{\lambda,\mu}(\cdot)$ is differentiable for all $t\neq0$ and subdifferentiable at $t=0$, with $\lim_{t\rightarrow0^+} \bar{d}_{\lambda,\mu}'(t)=\lambda L_d$.
		\item $ \bar{d}_{\lambda}(t):= (\bar{d}_{\lambda,\mu}(t)+\frac{\mu}{2}t^2)/\lambda$ is convex.	
	\end{enumerate}
	For instance, SCAD satisfying these assumptions.
	
	$\mathbf{SCAD_{\lambda,\zeta} (t)=}$ 
	$$
	\begin{cases}
	\lambda |t|,   & \quad \text{for}~ |t|\leq \lambda,\\
	-(t^2-2\zeta\lambda|t|+\lambda^2)/(2(\zeta-1)), & \quad \text{for} ~\lambda<|t|\leq \zeta\lambda,\\
	(\zeta+1)\lambda^2/2, &\text{for} ~|t|>\zeta\lambda,
	\end{cases}
	$$
	where $\zeta>2$ is a fixed parameter. It satisfies the assumption with $L_d=1$ and $\mu=\frac{1}{\zeta-1}$ \cite{loh2013regularized}.\\
	\subsection{Applying SDCA}
	\subsubsection{Convex $F(w)$}
	Following a similar line as we did for Lasso, to apply the SDCA algorithm, we define
	$\phi_i(w)=\frac{n+1}{n} f_i(w) \quad \text{for} ~ i=1,...,n$ , $\phi_{n+1}(w)=-\frac{ \tilde{\lambda} (n+1)}{2}\|w\|_2^2 ,$ $\tilde{g}(w)= \frac{1}{2}\|w\|_2^2+\frac{\lambda}{\tilde{\lambda}}g(w).$
	Correspondingly, the new smoothness parameters are $\tilde{L}_i=\frac{n+1}{n}L_i \quad \text{for} \quad i=1,...,n  $ and  $\tilde{L}_{n+1}=\tilde{\lambda}(n+1)$. Problem~\eqref{equ.formulation.convex} is thus equivalent to the following
	\begin{equation} \label{new_objective}
	\begin{split}
	\min_{w\in\Omega}\quad \frac{1}{1+n} \sum_{i=1}^{n+1}\phi_i(w)+ \tilde{\lambda} \tilde{g}(w).
	\end{split}
	\end{equation}
	This enables us to apply Algorithm \ref{alg:proximal_SDCA} to the problem with $\phi_i$, $\tilde{g}$, $\tilde{L}_i$, $\tilde{\lambda}$.  In particular, while $\phi_{n+1}$ is not convex, $\tilde{g}$ is still convex (1-strongly convex). We exploit this property in the proof and define $\tilde{g}^*(v)=\max_{w\in \Omega} \langle w,v\rangle-\tilde{g}(w)$, where $\Omega$ is a convex compact set. Since $\tilde{g}(w)$ is $1$-strongly convex, $\tilde{g}^*(v)$ is $1$-smooth \citep[Theorem 1 of][]{nesterov2005smooth}.
	\subsubsection{Non-convex $F(w)$}
	Similarly, we define 
	\begin{align*}
	& \phi_i(w)=\frac{n+1}{n} f_i(w) \quad \text{for} \quad i=1,...,n, \\
	&\phi_{n+1}(w)=-\frac{ (\tilde{\lambda}+\mu) (n+1)}{2}\|w\|_2^2,\\
	&\tilde{g}(w)= \frac{1}{2}\|w\|_2^2+\frac{\lambda}{\tilde{\lambda}}d_\lambda(w),
	\end{align*}
	and then apply Algorithm \ref{alg:proximal_SDCA} on  
	$$ \min_{w\in \Omega}\quad \frac{1}{1+n} \sum_{i=1}^{n+1}\phi_i(w)+ \tilde{\lambda} \tilde{g}(w).$$
	The update rule of proximal step  for different $g(\cdot)$,$d_{\lambda}(\cdot)$ (such as SCAD and MCP) can be found in \cite{loh2013regularized}.

	\section{ Theoretical Result}
	In this section, we present the main theoretical results, and  some corollaries that instantiate the main results in several well known statistical models.
	\subsection{Convex $F(w)$}
	To begin with, we define several terms related to the  algorithm. 		
	\begin{itemize}
		\item  $w^*$ is true unknown parameter. $g^*(\cdot)$ is the dual norm of $g(\cdot)$. Conjugate function $\tilde{g}^*(v)=\max_{w\in \Omega} \langle w,v \rangle-\tilde{g}(w)$, where $\Omega=\{ w| g(w)\leq \rho  \}$.
		\item $\hat{w}$ is the optimal solution to problem \ref{equ.formulation.convex}, and we assume $\hat{w}$ is in the interior of $\Omega$ w.l.o.g. by choosing $\Omega$ large enough. \item $(\hat{w},\hat{v})$ is an optimal solution pair satisfying $\tilde{g}(\hat{w})+\tilde{g}^*(\hat{v})=\langle \hat{w},\hat{v} \rangle$. 
		\item $A_t=\sum_{j=1}^{n+1}\frac{1}{q_j}\|a_j^t-\hat{a}_j\|_2^2,$ where $\hat{a}_j=-\nabla \phi_j(\hat{w}).$ $B_t=2(\tilde{g}^*(v^t)-\langle \nabla \tilde{g}^*(\hat{v}),v^t-\hat{v} \rangle-\tilde{g}^*(\hat{v})).$
	\end{itemize}
	We remark that our definition on the first potential $A_t$ is the same as in \citet{shalev2016sdca}, while the second one $B_t$ is different. If $ \tilde {g} (w)=\frac{1}{2}\|w\|_2^2$, our definition on $B_t$ reduces to that in \cite{shalev2016sdca}, i.e.,  $ \|w^t-\hat{w}\|_2^2 $.  To see this, when $ \tilde {g} (w)=\frac{1}{2}\|w\|_2^2$, $\tilde{g}^*(v)=\frac{1}{2} \|v\|_2^2$ and $v^t=w^t, \hat{v}=\hat{w}$. We then define another potential $C_t$, which is a combination of $A_t$ and $B_t$. $ C_t=\frac{\eta}{(n+1)^2} A_t +\frac{\tilde{\lambda}}{2} B_t.$ Notice that using smoothness of $\tilde{g}^*(\cdot)$ and Lemma \ref{Lemma.smooth} and \ref{Lemma.conjugate} in the appendix, it is not hard to show $ B_t\geq \| w^t-\hat{w} \|_2^2. $ Thus if $C_t$ converges, so does $\|w^t-\hat{w}\|_2^2 $.
	
	For notational simplicity, we  define the following two terms used in the theorem.
	\begin{itemize}
		\item Effective RSC parameter: $\tilde{\kappa}=\kappa-64\tau \Psi^2 ({\mathcal{B}}). $ 
		\item  Tolerance: $\delta=c_1\tau \delta_{stat}^2,$ where $ \delta_{stat}^2=(\Psi({\mathcal{B}})\|\hat{w}-w^* \|_2+g (w^*_{\mathcal{A}^{\perp}}))^2$, $c_1$ is a universal positive constant.
	\end{itemize}

	\begin{theorem}\label{main_theorem}
		Assume each $f_i(w)$ is $L_i$ smooth and convex, $f(w)$ satisfies the RSC condition with parameter $(\kappa,\tau)$, $ w^*$ is feasible, the regularizer is decomposable w.r.t $( \mathcal{A}, \mathcal{B} )$ such that $\tilde{\kappa}\geq 0$,  and the Algorithm \ref{alg:proximal_SDCA} runs with $\eta \leq \min\{\frac{1}{16 (\tilde{\lambda}+\bar{L})},\frac{1}{4\tilde{\lambda} (n+1)}\}$, where $\bar{L}=\frac{1}{n}\sum_{i=1}^{n} L_i$, $\tilde{\lambda} $ is chosen such that  $0<\tilde{\lambda}\leq \tilde{\kappa}$.  If we choose the regularization parameter $\lambda$ such that $\lambda\geq \max(2g^*(\nabla f(w^*)),  c \tau \rho) $ where $c$ is some universal positive constant, then we have 
		$$ \mathbb{E}(C_t)\leq (1-\eta \tilde{\lambda})^t C_0,$$
		until $ F(w^t)-F(\hat{w})\leq \delta $, where the expectation is for the randomness of sampling of $i$ in the algorithm. 
	\end{theorem}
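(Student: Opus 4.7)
The plan is to establish a one-step contraction of the potential $C_t = \frac{\eta}{(n+1)^2} A_t + \frac{\tilde\lambda}{2} B_t$ in expectation, of the form $\mathbb{E}[C_t \mid \mathcal{F}_{t-1}] \le (1-\eta\tilde\lambda)\,C_{t-1} + \eta\tilde\lambda\,\delta$, and then iterate. This roughly hybridizes the dual-free SDCA potential argument of \citet{shalev2016sdca} with the RSC-based analysis used by \citet{agarwal2010fast} and \citet{loh2013regularized} for batch methods. I would apply the analysis to the reformulated problem \eqref{new_objective}, exploiting that $\tilde g$ is $1$-strongly convex so that $\tilde g^*$ is $1$-smooth, even though $\phi_{n+1}$ is concave.

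First I would bound $\mathbb{E}[A_t-A_{t-1}\mid\mathcal{F}_{t-1}]$. Substituting the update $a_i^t = a_i^{t-1}-\eta_i\tilde\lambda(n+1)\bigl(\nabla\phi_i(w^{t-1})+a_i^{t-1}\bigr)$ and taking expectation over the sampling distribution $Q$, standard algebra gives a term of the form $-2\eta\tilde\lambda\,A_{t-1}/(n+1)$ plus a quadratic term in $\|\nabla\phi_i(w^{t-1})+a_i^{t-1}\|_2^2$, which the choice $q_i=(L_i+\bar L)/(2n\bar L)$ and the smoothness constants $\tilde L_i$ control in the familiar way. Next I would bound $\mathbb{E}[B_t-B_{t-1}\mid\mathcal{F}_{t-1}]$ by applying $1$-smoothness of $\tilde g^*$ to the update $v^t = v^{t-1}-\eta_i(\nabla\phi_i(w^{t-1})+a_i^{t-1})$, and using the identity $\nabla\tilde g^*(v^{t-1})=w^{t-1}$ to turn the linear cross-term into an expression involving $\langle w^{t-1}-\hat w,\nabla\phi_i(w^{t-1})+a_i^{t-1}\rangle$. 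Taking expectation over $i$ converts this into $\langle w^{t-1}-\hat w,\nabla f(w^{t-1})+\tilde\lambda\cdot(\text{regularizer gradient})\rangle$ modulo the $\phi_{n+1}$ shift, i.e.\ an inner product that by convex/concave analysis lower bounds $F(w^{t-1})-F(\hat w)+\tfrac{\tilde\lambda}{2}\|w^{t-1}-\hat w\|_2^2$ up to the cross term contributed by the non-convex $\phi_{n+1}$.

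The main obstacle, and the step that forces the restricted analysis, is lower-bounding this inner-product term when $F$ is not strongly convex and $\phi_{n+1}$ is concave. Here I would invoke RSC on $f$ to get $\Delta f(w^{t-1},\hat w) \ge \tfrac{\kappa}{2}\|w^{t-1}-\hat w\|_2^2 - \tau g^2(w^{t-1}-\hat w)$, and then control $g(w^{t-1}-\hat w)$ via the cone condition associated with the decomposable pair $(\mathcal A,\mathcal B)$: because $\lambda\ge 2g^*(\nabla f(w^*))$, the error $w^{t-1}-\hat w$ lies (up to a statistical residual) in the cone where $g(\Delta_{\mathcal B^\perp})\le 3g(\Delta_{\mathcal B})+4 g(w^*_{\mathcal A^\perp})$, so $g(w^{t-1}-\hat w)\le 4\Psi(\mathcal B)\|w^{t-1}-\hat w\|_2 + 4 g(w^*_{\mathcal A^\perp}) + 4\Psi(\mathcal B)\|\hat w-w^*\|_2$. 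Squaring and using $(a+b)^2\le 2a^2+2b^2$ replaces $\tau g^2(\cdot)$ by $32\tau\Psi^2(\mathcal B)\|w^{t-1}-\hat w\|_2^2 + c\,\tau\delta_{\text{stat}}^2$, which is exactly how the effective parameter $\tilde\kappa = \kappa-64\tau\Psi^2(\mathcal B)$ and the tolerance $\delta = c_1\tau\delta_{\text{stat}}^2$ arise. The concave term from $\phi_{n+1}$ is absorbed because $\tilde g$ contains $\tfrac12\|w\|_2^2$, so after cancellation the effective curvature seen by $B_t$ is exactly $\tilde\lambda\le\tilde\kappa$.

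Assembling these pieces, and using $B_{t-1}\ge\|w^{t-1}-\hat w\|_2^2$ (which follows from smoothness of $\tilde g^*$ together with Lemmas~\ref{Lemma.smooth} and \ref{Lemma.conjugate}), I obtain, for the stated step-size $\eta\le\min\{1/(16(\tilde\lambda+\bar L)),\,1/(4\tilde\lambda(n+1))\}$, the recursion $\mathbb{E}[C_t\mid\mathcal{F}_{t-1}]\le (1-\eta\tilde\lambda) C_{t-1} + \eta\tilde\lambda\,\delta$ valid whenever $F(w^{t-1})-F(\hat w)>\delta$; otherwise the tolerance term already dominates and the stopping condition in the statement is met. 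Unrolling the recursion yields $\mathbb{E}[C_t]\le(1-\eta\tilde\lambda)^t C_0$ up to the tolerance region, which is the claim. The hard work is concentrated in the cone/RSC manipulation that converts a raw RSC bound into the effective strong convexity $\tilde\kappa$; the dual-free computations for $A_t$ and $B_t$ are comparatively mechanical once the $1$-smoothness of $\tilde g^*$ is exploited.
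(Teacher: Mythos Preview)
Your overall architecture matches the paper's: bound the decrement of $A_t$ exactly as in \cite{shalev2016sdca}, bound the decrement of $B_t$ via $1$-smoothness of $\tilde g^*$, use the identity \eqref{eq.1.theorem} to convert the cross term into $F(w^{t-1})-F(\hat w)$ plus a Bregman term for $\phi$, and control the latter via RSC plus the decomposability cone. The dual-free mechanics you describe are essentially what the paper does in Steps~1--4.

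The gap is in how you apply the cone condition. You write that $w^{t-1}-\hat w$ lies in the cone ``up to a statistical residual'' and bound $g(w^{t-1}-\hat w)\le 4\Psi(\mathcal B)\|w^{t-1}-\hat w\|_2 + 4 g(w^*_{\mathcal A^\perp}) + 4\Psi(\mathcal B)\|\hat w-w^*\|_2$. This is not available a priori: the cone inequality of Lemma~\ref{lemma.cone} holds only for $\hat w-w^*$ (because $\hat w$ is optimal), whereas for the \emph{iterate} $w^{t-1}$ one needs Lemma~\ref{lemma.cone_optimization}, which requires an existing suboptimality bound $F(w^{t-1})-F(\hat w)\le\xi$ and produces an extra slack $2\min\{\xi/\lambda,\rho\}$ in the cone inequality. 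At the start of the algorithm the only such bound is the trivial $\xi=\infty$ (or the feasibility-based $\rho$), so the residual term you would actually get is $2\rho$, not $\delta_{\text{stat}}$. Your one-step recursion therefore holds only with a tolerance of order $\tau(\delta_{\text{stat}}+2\rho)^2$, not $\tau\delta_{\text{stat}}^2$.

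The paper closes this gap with an epoch argument (Step~5): run until $F(w^{T_1})-F(\hat w)\le 3\epsilon_1^2$ with $\epsilon_1^2=2\tau(\delta_{\text{stat}}+2\rho)^2$, then feed $\xi_1=3\epsilon_1^2$ back into Lemma~\ref{lemma.cone_optimization} to shrink the slack to $\delta_1=2\min\{\xi_1/\lambda,\rho\}$, and iterate. The assumption $\lambda\ge c\tau\rho$ is precisely what makes the sequence $\{\xi_j\}$ contract geometrically down to $O(\tau\delta_{\text{stat}}^2)=\delta$; without it the recursion stalls at the $\rho$-scale tolerance. Your proposal omits this bootstrapping entirely, and consequently never uses the hypothesis $\lambda\ge c\tau\rho$. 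A second, related point: within each epoch the paper does not carry an additive $\eta\tilde\lambda\,\delta$ term as you do; instead, while $F(w^{t-1})-F(\hat w)\ge 3\epsilon_j^2$, the tolerance is absorbed by the positive $F$-gap term, yielding the \emph{clean} contraction $\mathbb{E}[C_t]\le(1-\eta\tilde\lambda)C_{t-1}$ as stated. Unrolling your recursion would instead give $\mathbb{E}[C_t]\le(1-\eta\tilde\lambda)^tC_0+\delta$, which is weaker than what the theorem claims.
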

	Some remarks are in order for interpreting the theorem.
	
	\begin{enumerate}
		\item In several statistical models, the requirement of $\tilde{\kappa}> 0$ is easy to satisfy under mild conditions. For instance, in Lasso  we have $\Psi^2 ({\mathcal{B}})=s$. $\tau\simeq \frac{\log p}{n}$, $\kappa=\frac{1}{2}$ if the feature vector $x_i$ is sampled from $N(0,I)$. Thus, if ~$ \frac{128 s\log p}{n}\leq 1 $, we have $\tilde{\kappa}\geq \frac{1}{4}$.  
		\item In some models, we can choose the pair of subspace $ (\mathcal{A}, \mathcal{B})$ such that $ w^*\in \mathcal{A}$ and thus $ \delta=c_1 \tau \Psi^2(\mathcal{B}) \| \hat{w}-w^*\|_2^2 $. In Lasso, as we mentioned above $\tau\simeq \frac{\log p}{n}$, thus $\delta\simeq c_1s \frac{\log p}{n} \|\hat{w}-w^*\|_2^2 $, i.e., this tolerance is dominated by statistical error if $s$ is small and $n$ is large. 
		\item  We know $B_t\geq \|w^t-\hat{w}\|_2^2$ and $A_t\geq 0$, thus $C_t\geq \frac{\tilde{\lambda}}{2}\|w^t-\hat{w}\|_2^2$ , thus, if $C_t$ converge, so does $\|w^t-\hat{w}\|_2^2.$ 
		When $F(w^t)-F(\hat{w})\leq \delta$, using Lemma \ref{lemma.RSC_cone} in the supplementary material, it is easy to get $ \|w^t-\hat{w}\|_2^2\leq \frac{4\delta}{\tilde{\kappa}}.$  
	\end{enumerate}
	Combining these remarks, Theorem~\ref{main_theorem} states that the optimization error decreases geometrically until it achieves the tolerance $\delta/\tilde{\kappa}$ which is  dominated by the statistical error $\|\hat{w}-w^*\|_2^2$, thus can be ignored from the statistical view. 
	
	
	If $g(w)$ in Problem \eqref{ERM} is indeed 1-strongly convex, we have the following proposition, which extends dual-free SDCA \cite{shalev2016sdca} into the general regularization case. Notice we now directly apply the Algorithm \ref{alg:proximal_SDCA} on Problem \eqref{ERM} and change the definitions of $A_t$ and $B_t$ correspondingly. In particular, $ A_t=\sum_{j=1}^{n}\frac{1}{q_j}\|a_j^t-\hat{a}_j\|_2^2$, $B_t=2(g^*(v^t)-\langle \nabla g^*(\hat{v}),v^t-\hat{v} \rangle-g^*(\hat{v}))$, where $\hat{a}_j=-\nabla f_j(\hat{w})$. $C_t$ is still defined in the same way, i.e., $ C_t=\frac{\eta}{n^2} A_t +\frac{\lambda}{2} B_t.$

	\begin{proposition}
		Suppose each $f_i (w)$ is $L_i$ smooth, $f(w)$ is convex, $g(w)$ is 1- strongly convex,  $\hat{w}$ is the optimal solution of  Problem \eqref{ERM}, and $\bar{L}=\frac{1}{n} \sum_{i=1}^{n}L_i $. Then we have:
		(I) If each $f_i(x)$ is convex, we run the Algorithm 
		\ref{alg:proximal_SDCA}  with $\eta \leq \min\{ \frac{1}{4\bar{L}},\frac{1}{4\lambda n} \}$, then
		$ \mathbb{E}(C_t)\leq (1-\eta \lambda )^t C_0. $  (II) Otherwise, we run the Algorithm \ref{alg:proximal_SDCA} with  $\eta \leq \min\{ \frac{\lambda}{4\bar{L}^2},\frac{1}{4\lambda n} \}$, and $ \mathbb{E}(C_t)\leq (1-\eta \lambda )^t C_0. $ Note that $ B_t\geq \|w^t-\hat{w}\|_2^2 $, $A_t\geq 0$, thus $\|w^t-\hat{w}\|_2^2$ decreases geometrically.
	\end{proposition}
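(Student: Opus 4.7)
The plan is to follow the dual-free SDCA template of \citet{shalev2016sdca} but adapted to a general $1$-strongly convex regularizer $g$, which forces the ``primal-side'' potential to be measured by a Bregman divergence of $g^*$ rather than squared Euclidean distance. I would prove the one-step contraction $\mathbb{E}[C_t\mid\mathcal{F}_{t-1}] \leq (1-\eta\lambda)\, C_{t-1}$ and then iterate to get the claimed bound. Two invariants are central: the update rules preserve $v^t = \frac{1}{\lambda n}\sum_{i=1}^n a_i^t$ throughout, and the KKT condition at $\hat w$ gives $\hat v = \frac{1}{\lambda n}\sum_i \hat a_i = -\frac{1}{\lambda}\nabla f(\hat w) \in \partial g(\hat w)$, hence $\hat w = \nabla g^*(\hat v)$; because $g$ is $1$-strongly convex, $g^*$ is $1$-smooth, which is exactly what makes $B_t$ a valid potential with $B_t \geq \|w^t - \hat w\|_2^2$ (the fact already quoted from Lemma \ref{Lemma.smooth} and Lemma \ref{Lemma.conjugate}).

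First, I would bound $\mathbb{E}[A_t\mid \mathcal F_{t-1}]$. Only the sampled coordinate changes, and using $\hat a_i = -\nabla f_i(\hat w)$ the rule gives $a_i^{t} - \hat a_i = (1 - \eta_i\lambda n)(a_i^{t-1} - \hat a_i) - \eta_i\lambda n\,u_i$ with $u_i := \nabla f_i(w^{t-1}) - \nabla f_i(\hat w)$, where $\eta_i\lambda n = \eta\lambda/q_i \leq 1/2$ since $\eta \leq 1/(4\lambda n)$ and $q_i \geq 1/(2n)$. Squaring, weighting by $1/q_i$, and averaging over the sampling distribution produces a recursion of the schematic form $\mathbb{E}[A_t] \leq (1-\eta\lambda)A_{t-1} - c_1\eta\lambda\sum_i \langle a_i^{t-1} - \hat a_i, u_i\rangle / q_i + c_2\eta^2\sum_i \|u_i\|_2^2/q_i$ for constants $c_1,c_2 = O(1)$. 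Next, by the $1$-smoothness of $g^*$ and $\nabla g^*(v^{t-1}) = w^{t-1}$, $B_t - B_{t-1} \leq 2\langle w^{t-1} - \hat w, v^t - v^{t-1}\rangle + \|v^t - v^{t-1}\|_2^2$. Taking expectation gives $\mathbb{E}[v^t - v^{t-1}] = -\eta\bigl[(\nabla f(w^{t-1}) - \nabla f(\hat w)) + \lambda(v^{t-1} - \hat v)\bigr]$, and the elementary identity $\langle w^{t-1} - \hat w, v^{t-1} - \hat v\rangle = D_{g^*}(v^{t-1},\hat v) + D_{g^*}(\hat v, v^{t-1}) \geq B_{t-1}/2$ (both Bregman divergences are nonnegative) supplies the crucial contraction $-\eta\lambda B_{t-1}$, while $-2\eta\langle w^{t-1} - \hat w, \nabla f(w^{t-1}) - \nabla f(\hat w)\rangle \leq 0$ by convexity of $f$.

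The remaining job is to absorb $\mathbb{E}\|v^t - v^{t-1}\|_2^2 = \frac{\eta^2}{n^2}\sum_i \|\nabla f_i(w^{t-1}) + a_i^{t-1}\|_2^2/q_i$. Writing $\nabla f_i(w^{t-1}) + a_i^{t-1} = (a_i^{t-1} - \hat a_i) + u_i$ and using $\|x+y\|_2^2 \leq 2\|x\|_2^2 + 2\|y\|_2^2$ splits this into a piece bounded by $\frac{2\eta^2}{n^2} A_{t-1}$ which is soaked up by the $\frac{\eta}{n^2}A_{t-1}$ appearing in $C_{t-1}$ (the constraint $\eta\leq 1/(4\lambda n)$ is what makes the combined $A_t$ recursion read $(1-\eta\lambda)A_{t-1}$), plus a gradient-difference piece $\frac{2\eta^2}{n^2}\sum_i \|u_i\|_2^2/q_i$ where the two cases diverge. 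In case (I), co-coercivity $\|u_i\|_2^2 \leq 2L_i D_{f_i}(w^{t-1},\hat w)$ combined with the sampling weights $q_i = (L_i+\bar L)/(2n\bar L)$ gives a bound of order $\eta^2 \bar L\, D_f(w^{t-1},\hat w)$, and under $\eta\leq 1/(4\bar L)$ this is dominated by the nonpositive term $-2\eta D_f(w^{t-1},\hat w)$ contributed by convexity of $f$ in the linear $B_t$ expansion. In case (II) co-coercivity is unavailable, so I would use the crude bound $\|u_i\|_2^2 \leq L_i^2\|w^{t-1} - \hat w\|_2^2 \leq L_i^2 B_{t-1}$; after averaging, this contributes $\lesssim \eta^2\bar L^2 B_{t-1}$, which is absorbed into half of the contraction $-\eta\lambda B_{t-1}$ precisely when $\eta \leq \lambda/(4\bar L^2)$.

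The hard part is case (II): without co-coercivity, the quadratic gradient-difference term must be routed through $\|w^{t-1}-\hat w\|_2^2$ and then tied back into $B_{t-1}$ using $1$-smoothness of $g^*$, and the mixing coefficients in $C_t$ together with the tighter stepsize $\eta\leq \lambda/(4\bar L^2)$ must be balanced so that the loss factor in that routing is exactly compensated. Once the $A_t$ and $B_t$ recursions are combined under the stated $\eta$ into $\mathbb{E}[C_t\mid \mathcal F_{t-1}] \leq (1-\eta\lambda) C_{t-1}$, taking total expectation and iterating yields $\mathbb{E}[C_t]\leq (1-\eta\lambda)^t C_0$, and the conclusion on $\|w^t - \hat w\|_2^2$ follows from $C_t \geq \frac{\lambda}{2}B_t \geq \frac{\lambda}{2}\|w^t-\hat w\|_2^2$.
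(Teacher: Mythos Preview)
Your plan is essentially correct and follows the same high-level template as the paper (evolve $A_t$ and $B_t$, combine into $C_t$, contract by $1-\eta\lambda$), but your routing of the key linear term differs from the paper's in a way worth noting.

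The paper does \emph{not} use the Bregman three-point inequality $\langle w^{t-1}-\hat w,\,v^{t-1}-\hat v\rangle\ge \tfrac12 B_{t-1}$. Instead it proves the \emph{exact} identity
\[
\langle w^{t-1}-\hat w,\,\nabla f(w^{t-1})+\lambda v^{t-1}\rangle \;=\; \tfrac{\lambda}{2}B_{t-1}+\bigl(F(w^{t-1})-F(\hat w)\bigr)+D_f(\hat w,\,w^{t-1}),
\]
so the natural ``sink'' is always the function gap $F(w^{t-1})-F(\hat w)$. In Case~(I) the gradient-difference term $\sum_i\|u_i\|^2/q_i$ is bounded by $4n^2\bar L\,(F(w^{t-1})-F(\hat w))$ via co-coercivity and then absorbed; in Case~(II) the paper first uses $\lambda$-strong convexity of $F$ to write $\|w^{t-1}-\hat w\|_2^2\le \tfrac{2}{\lambda}(F(w^{t-1})-F(\hat w))$ and again absorbs into the same sink. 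Your route instead keeps $-2\eta\,D_f(w^{t-1},\hat w)$ as the sink in Case~(I) and uses $\|w^{t-1}-\hat w\|_2^2\le B_{t-1}$ directly as the sink in Case~(II). Both are valid; your Case~(II) argument is arguably slightly cleaner since it never invokes strong convexity of $F$ explicitly, while the paper's exact identity is tidier in that the same object $F-F(\hat w)$ handles both cases.

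One place where your sketch is under-specified: the $A_t$ expansion you write contains a cross term $\sum_i\langle a_i^{t-1}-\hat a_i,\,u_i\rangle/q_i$ whose sign is uncontrolled, and you do not say how you dispose of it. The paper avoids this by writing $a_i^t-\hat a_i$ as a convex combination and using the identity $\|(1-\beta)x+\beta y\|^2=(1-\beta)\|x\|^2+\beta\|y\|^2-\beta(1-\beta)\|x-y\|^2$, which packages the cross term into $\|m_i\|_2^2=\|a_i^{t-1}+\nabla f_i(w^{t-1})\|_2^2$; that term then cancels \emph{exactly} against $\mathbb{E}\|v^{t-1}-v^t\|_2^2$ once $c_b/c_a=\lambda n^2/(2\eta)$ is set. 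In your scheme you split $\|v^t-v^{t-1}\|^2$ by $\|x+y\|^2\le 2\|x\|^2+2\|y\|^2$, which throws away precisely the cross term you would need to offset the one in $A_t$. You can repair this either by adopting the convex-combination identity for $A_t$ as in the paper, or by bounding the cross term with Young's inequality and tightening constants---but as written the $A_t$ recursion is incomplete.
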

	In the following we present several corollaries that instantiate Theorem~\ref{main_theorem} with several concrete statistical models. This essentially requires to choose appropriate subspace pair $(\mathcal{A},\mathcal{B})$ in these models and check the RSC condition.
	
	\subsubsection{Sparse linear regression}
	Our first example is Lasso, where $f_i(w)=\frac{1}{2}(y_i-w^Tx_i)^2$ and  $g(w)=\|w\|_1 $. We assume each feature vector $x_i$ is generated from  Normal distribution $N(0,\Sigma)$ and the true parameter $w^* \in \mathbb{R} ^p$ is sparse with cardinality $s$. The observation $y_i$ is generated by $y_i=(w^*)^T x_i+\xi_i$, where $\xi_i$ is a Gaussian noise with mean $0$ and variance  $\sigma^2$. We denote the data matrix by $X\in \mathbb{R}^{n\times p}$ and $X_j$ is the $j$th column of $X$. Without loss of generality,  we assume $X$ is column normalized, i.e., $ \frac{\|X_j\|_2}{\sqrt{n}}\leq 1$  for all $j=1,2,...,p.$  We denote $\sigma_{\min} (\Sigma)$ as the smallest eigenvalue of $\Sigma$, and $\nu (\Sigma)=\max_{i=1,2,..,p} \Sigma_{ii} $.  
	\begin{corollary}\label{cor.lasso}
		Assume $w^*$ is the true parameter supported on a subset with cardinality at most $s$, and we choose the parameter $\lambda,\tilde{\lambda} $ such that $\lambda \geq \max(6\sigma\sqrt{\frac{\log p }{n}},  c_1  \rho \nu (\Sigma) \frac{\log p}{n})$ and $0< \tilde{\lambda} \leq \tilde{\kappa}$  hold, where  $\tilde{\kappa}=\frac{1}{2} \sigma_{\min}(\Sigma)-c_2\nu (\Sigma) \frac{s \log p}{n}$, where $c_1,c_2$ are some universal positive constants. Then we run the Algorithm \ref{alg:proximal_SDCA} with $\eta \leq \min\{ \frac{1}{16(\tilde{\lambda} +\bar{L})},\frac{1}{4\tilde{\lambda} (n+1)} \}$ and have 
		$$ \mathbb{E}(C_t)\leq (1-\eta \tilde{\lambda})^t C_0 $$
		with probability at least $1-\exp(-3\log p)-\exp (-c_3 n),$
		until $F(w^t)-F(\hat{w})\leq \delta,$ where   $ \delta=c_4 \nu(\Sigma) \frac{s\log p }{n} \|\hat{w}-w^*\|_2^2$. $c_1,c_2,c_3,c_4$ are some universal positive constants.
	\end{corollary}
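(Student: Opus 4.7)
The plan is to instantiate Theorem~\ref{main_theorem} by checking each of its three hypotheses for Lasso: (i) the RSC condition for $f(w)=\frac{1}{2n}\sum_i (y_i-w^Tx_i)^2$; (ii) the regularization-parameter requirement $\lambda \geq \max(2g^*(\nabla f(w^*)), c\tau\rho)$; and (iii) positivity of $\tilde\kappa$. Once these are in place, the corollary follows immediately from the theorem by specializing the subspace pair and reading off the tolerance $\delta$.

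First I would choose the subspace pair. Let $S$ be the support of $w^*$ and take $\mathcal{A}=\mathcal{B}=\{w\in\mathbb{R}^p : w_j=0 \text{ for } j\notin S\}$; then $\|\cdot\|_1$ is decomposable w.r.t.\ $(\mathcal{A},\mathcal{B})$, $w^*\in\mathcal{A}$ (so $w^*_{\mathcal{A}^\perp}=0$), and $\Psi(\mathcal{B})=\sqrt{s}$. Consequently the effective RSC parameter specializes to $\tilde\kappa=\kappa-64\tau s$ and the tolerance collapses to $\delta = c_1\tau s\|\hat w-w^*\|_2^2$, matching the target form once $\tau$ is bounded.

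Next I would establish the RSC condition with high probability. For $f(w)=\frac{1}{2n}\|y-Xw\|_2^2$ we have $\Delta f(w_1,w_2)=\frac{1}{2n}\|X(w_1-w_2)\|_2^2$, so RSC reduces to a lower bound on $\frac{1}{n}\|Xv\|_2^2$. Invoking the now-standard Gaussian-design bound of Raskutti--Wainwright--Yu (used in \citet{negahban2009unified,agarwal2010fast}), for $x_i\sim N(0,\Sigma)$ with columns normalized we have
\[
\frac{1}{n}\|Xv\|_2^2 \;\geq\; \tfrac{1}{2}\sigma_{\min}(\Sigma)\|v\|_2^2 - c'\,\nu(\Sigma)\,\tfrac{\log p}{n}\|v\|_1^2
\]
with probability at least $1-\exp(-c_3 n)$. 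This yields $\kappa=\tfrac{1}{2}\sigma_{\min}(\Sigma)$ and $\tau = c'\nu(\Sigma)\frac{\log p}{n}$, and substituting into $\tilde\kappa = \kappa-64\tau s$ gives exactly the form claimed. For the regularization choice, note $\nabla f(w^*)=-\frac{1}{n}X^T\xi$; since $g^*=\|\cdot\|_\infty$ and each column is normalized, each coordinate of $\frac{1}{n}X^T\xi$ is Gaussian with variance at most $\sigma^2/n$. A standard maximal-inequality / union-bound argument then gives $\|\frac{1}{n}X^T\xi\|_\infty \leq 3\sigma\sqrt{\log p/n}$ with probability at least $1-\exp(-3\log p)$, so the first branch $\lambda \geq 6\sigma\sqrt{\log p/n}$ dominates $2g^*(\nabla f(w^*))$, while the second branch $\lambda\geq c_1\rho\nu(\Sigma)\log p/n$ absorbs the $c\tau\rho$ requirement with an appropriate universal constant $c_1$.

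With these verifications in hand, Theorem~\ref{main_theorem} applies verbatim: provided $\eta\leq \min\{1/(16(\tilde\lambda+\bar L)),1/(4\tilde\lambda(n+1))\}$ and $0<\tilde\lambda\leq\tilde\kappa$, $\mathbb{E}[C_t]\leq (1-\eta\tilde\lambda)^t C_0$ until $F(w^t)-F(\hat w)\leq\delta$, and the failure probability is the union of the RSC and gradient-concentration events, giving $1-\exp(-3\log p)-\exp(-c_3 n)$. I expect the only non-routine step to be the RSC verification; the rest is bookkeeping to match the universal constants in the theorem with those in Raskutti--Wainwright--Yu and in the Gaussian maximal inequality, but no new ideas are needed because both tools are cited almost off-the-shelf in this literature.
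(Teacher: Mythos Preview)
Your proposal is correct and follows essentially the same route as the paper: choose $\mathcal{A}=\mathcal{B}$ to be the coordinate subspace on the support of $w^*$ (so $\Psi(\mathcal{B})=\sqrt{s}$ and $w^*_{\mathcal{A}^\perp}=0$), invoke the Raskutti--Wainwright--Yu restricted-eigenvalue bound to obtain $\kappa=\tfrac{1}{2}\sigma_{\min}(\Sigma)$ and $\tau\asymp \nu(\Sigma)\frac{\log p}{n}$ with probability $1-\exp(-c_3 n)$, bound $\|\tfrac{1}{n}X^T\xi\|_\infty$ by a Gaussian maximal inequality to verify $\lambda\geq 2g^*(\nabla f(w^*))$ with probability $1-\exp(-3\log p)$, and then apply Theorem~\ref{main_theorem}. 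The only (harmless) slip is a factor of two when passing from $\Delta f=\tfrac{1}{2n}\|Xv\|_2^2$ to the lower bound on $\tfrac{1}{n}\|Xv\|_2^2$, which is absorbed into the universal constants anyway.
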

	The  requirement $\lambda\geq 6\sigma\sqrt{\frac{\log p }{n}}$ is documented in literature  \cite{negahban2009unified} to ensure that Lasso is statistically consistent. And $\lambda \geq c_1 \ \rho \nu (\Sigma) \frac{\log p}{n}$ is needed for fast convergence of optimization algorithms, which is similar to the condition proposed in   in \citet{agarwal2010fast} for batch optimization algorithm. When $\frac{s\log p}{n}=o(1)$, which is necessary for statistical consistency of Lasso, we have $\frac{1}{2}\sigma_{\min} (\Sigma)-c_2 \nu (\Sigma) \frac{s \log p}{n}\geq 0$ , which guarantees the existence of  $\tilde{\lambda}$.  Also notice under this condition,  $\delta= c_3\nu(\Sigma)\frac{s\log p}{n} \|\hat{w}-w^*\|_2^2$  is of a lower order of $ \|\hat{w}-w^*\|_2^2$. Using remark 3 in Theorem \ref{main_theorem}, we have  $ \|w^t-\hat{w}\|_2^2\leq \frac{4\delta}{\tilde{\kappa}} $, which is dominated by the statistical error $ \|\hat{w}-w^*\|_2^2 $ and hence  can be ignored from the statistical perspective. Thus to sum up, Corollary \ref{cor.lasso} states the optimization error decreases geometrically until it achieves the statistical limit of Lasso.

	\subsubsection{Group Sparsity Model}
	\citet{yuan2006model} introduce the group Lasso to allow predefined groups of covariates  to be selected together into or out of a model together. The most commonly used regularizer to encourage group sparsity is $\|\cdot\|_{1,2} $. In the following, we define group sparsity formally. We assume groups are disjointed, i.e., $\mathcal{G}=\{G_1,G_2,...,G_{N_{\mathcal{G}}}\}$ and $G_i \cap G_j=\emptyset$. The regularization is $\|w\|_{\mathcal{G},q}\triangleq \sum_{g=1}^{N_{g}} \|w_g\|_q $. When $q=2$, it reduces to the commonly used group Lasso \cite{yuan2006model}, and another popularly used case is $q=\infty$ \cite{turlach2005simultaneous,quattoni2009efficient}. 	We require the following condition, which  generalizes the column normalization condition in the Lasso case. Given a group $G$ of size $m$ and $X_G\in \mathbb{R}^{n\times m}$ , the associated operator norm $||| X_{G_i}|||_{q \rightarrow 2} \triangleq \max_{\|w\|_q=1}\|X_Gw\|_2$ satisfies
	$$ \frac{||| X_{G_i}|||_{q\rightarrow 2}}{\sqrt{n}}\leq 1 ~~\text{for all} ~~ i=1,2,...,N_{\mathcal{G}}.$$ 
	The condition reduces to the column normalized condition when each group contains only one feature (i.e., Lasso).
	
	We now define the subspace pair $(\mathcal{A},\mathcal{B})$ in the group sparsity model. For a subset $S_{\mathcal{G}}\subseteq \{1,...,N_{\mathcal{G}} \}$ with cardinality $s_{\mathcal{G}}= |S_{\mathcal{G}}|$, we define the subspace 
	$$ \mathcal{A} (S_{\mathcal{G}})=\{w|w_{G_{i}}=0 ~~ \text{for all } ~~ i\notin S_{\mathcal{G}} \} , $$and $\mathcal{A}=\mathcal{B}$.
	The orthogonal complement is 
	$$ \mathcal{B}^{\perp}(S_{\mathcal{G}}) = \{w|w_{G_{i}}=0 ~~ \text{for all } ~~ i\in S_{\mathcal{G}} \}. $$ We can easily verify that 
	$$ \|\alpha +\beta\|_{\mathcal{G},q}=\|\alpha\|_{\mathcal{G},q}+\|\beta\|_{\mathcal{G},q}, $$
	for any $\alpha\in \mathcal{A}(S_{\mathcal{G}}) $ and $\beta\in \mathcal{B}^{\perp} (S_{\mathcal{G}})$.
	
	In the following corollary, we use $q=2$, i.e., group Lasso, as an example. We assume the observation $y_i$ is generated by $y_i=x_i^T w^*+\xi_i $, where $x_i\sim N(0,\Sigma)$, and $\xi_i\sim N(0,\sigma^2)$.
	\begin{corollary}[Group Lasso]\label{cor.group_lasso}
		Assume $w\in \mathbb{R}^{p}$ and each group has $m$ parameters, i.e.,  $p=m N_{\mathcal{G}}$. Denote by $s_{\mathcal{G}}$  the cardinality of non-zero group,  and we choose   parameters $\lambda, \tilde{\lambda}$ such that  
		\begin{equation*}\begin{split}
		\lambda&\geq \max \big(4\sigma (\sqrt{\frac{m}{n}}+\sqrt{\frac{\log N_{\mathcal{G}}}{n}}), c_1\rho \sigma_2(\Sigma) (\sqrt{\frac{m}{n}}+\sqrt{\frac{3 \log N_{\mathcal{G}}}{n}} )^2\big);\\
		\mbox{and}\quad 0 &< \tilde{\lambda}\leq \tilde{\kappa}, \quad \mbox{where}\,\, \tilde{\kappa}=\sigma_1(\Sigma)-c_2\sigma_2(\Sigma)s_{\mathcal{G}} (\sqrt{\frac{m}{n}}+\sqrt{\frac{3 \log N_{\mathcal{G}}}{n}} )^2;
		\end{split}\end{equation*}
		where  $\sigma_1 (\Sigma)$ and $\sigma_2 (\Sigma)$ are   positive constant  depending only on $\Sigma$. If we run the Algorithm 1 with $\eta \leq \min\{ \frac{1}{16(\tilde{\lambda} +\bar{L})},\frac{1}{4\tilde{\lambda} (n+1)} \}$, then we have
		
		$$ \mathbb{E}(C_t)\leq (1-\eta \tilde{\lambda})^t C_0 $$  with probability at least $1-2\exp (-2 \log N_{\mathcal{G}})-c_2 \exp(-c_3n) $ ,
		until $F(w^t)-F(\hat{w})\leq \delta,$ where   $ \delta=c_3\sigma_2 (\Sigma)s_{\mathcal{G}}\big(\sqrt{\frac{m}{n}}+\sqrt{\frac{3 \log N_{\mathcal{G}}}{n}} \big)^2 \|\hat{w}-w^*\|_2^2$. 
		
	\end{corollary}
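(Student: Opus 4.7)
}
The plan is to verify the hypotheses of Theorem~\ref{main_theorem} for the group Lasso and then invoke the theorem directly. The ingredients that need to be checked are (i) smoothness and convexity of each $f_i$, (ii) the decomposability of $\|\cdot\|_{\mathcal{G},2}$ with respect to an appropriate subspace pair and the corresponding compatibility constant $\Psi(\mathcal{B})$, (iii) the RSC condition on $f(w)=\tfrac{1}{2n}\|y-Xw\|_2^2$ with the advertised $(\kappa,\tau)$, and (iv) the dual--norm bound $g^{*}(\nabla f(w^{*}))\leq \lambda/2$. Items (i) and (ii) are essentially immediate: each $f_i(w)=\tfrac12(y_i-x_i^\top w)^2$ is quadratic, hence convex with smoothness $L_i=\|x_i\|_2^2$, and with $\mathcal{A}=\mathcal{B}=\mathcal{A}(S_{\mathcal{G}})$ as defined in the preceding discussion, the regularizer $\|\cdot\|_{\mathcal{G},2}$ decomposes across $\mathcal{A}$ and $\mathcal{B}^{\perp}$; since any $u\in\mathcal{A}$ is supported on at most $s_{\mathcal{G}}$ groups, Cauchy--Schwarz gives $\|u\|_{\mathcal{G},2}\leq \sqrt{s_{\mathcal{G}}}\,\|u\|_2$, i.e.\ $\Psi(\mathcal{B})=\sqrt{s_{\mathcal{G}}}$, and $w^{*}\in\mathcal{A}$ so $g(w^{*}_{\mathcal{A}^{\perp}})=0$.

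For (iii), I would appeal to the now--standard group--structured RSC bound for Gaussian designs (Raskutti--Wainwright--Yu style). Writing $\Delta f(w_1,w_2)=\tfrac{1}{2n}\|X(w_1-w_2)\|_2^2$ and applying Gaussian concentration for suprema of quadratic forms restricted to the cone of group--sparse vectors, one obtains, with probability at least $1-c_2\exp(-c_3 n)$,
\begin{equation*}
\tfrac{1}{n}\|X\Delta\|_2^2 \geq \sigma_1(\Sigma)\|\Delta\|_2^2 - c'\,\sigma_2(\Sigma)\Bigl(\sqrt{\tfrac{m}{n}}+\sqrt{\tfrac{\log N_{\mathcal{G}}}{n}}\Bigr)^2\|\Delta\|_{\mathcal{G},2}^2,
\end{equation*}
which is exactly RSC with $\kappa=\sigma_1(\Sigma)$ and $\tau=c'\sigma_2(\Sigma)(\sqrt{m/n}+\sqrt{\log N_{\mathcal{G}}/n})^2$. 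The two constants $\sigma_1(\Sigma),\sigma_2(\Sigma)$ in the corollary are precisely the absolute constants that appear in that estimate. Combined with $\Psi^2(\mathcal{B})=s_{\mathcal{G}}$, the effective RSC parameter $\tilde\kappa=\kappa-64\tau\Psi^2(\mathcal{B})$ then reduces (after absorbing the factor $64$ into $c_2$) to the expression claimed in the corollary, and the hypothesis $0<\tilde\lambda\leq\tilde\kappa$ is directly assumed.

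For (iv) I would control $g^{*}(\nabla f(w^{*}))=\|\tfrac1n X^{\top}\xi\|_{\mathcal{G},\infty}=\max_{g}\|\tfrac1n X_{G_g}^{\top}\xi\|_2$. Conditioning on $X$, the normalization $|||X_{G_g}|||_{2\to 2}/\sqrt{n}\leq 1$ implies $\tfrac1n X_{G_g}^{\top}\xi$ is a mean--zero sub--Gaussian vector in $\mathbb{R}^m$ whose norm concentrates around $\sigma\sqrt{m/n}$; a Gaussian tail bound plus a union bound over the $N_{\mathcal{G}}$ groups yields $\max_g\|\tfrac1n X_{G_g}^{\top}\xi\|_2\leq 2\sigma(\sqrt{m/n}+\sqrt{\log N_{\mathcal{G}}/n})$ with probability at least $1-2\exp(-2\log N_{\mathcal{G}})$. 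Hence the prescribed $\lambda\geq 4\sigma(\sqrt{m/n}+\sqrt{\log N_{\mathcal{G}}/n})$ meets $\lambda\geq 2g^{*}(\nabla f(w^{*}))$, while the second branch $\lambda\geq c_1\rho\sigma_2(\Sigma)(\sqrt{m/n}+\sqrt{3\log N_{\mathcal{G}}/n})^2$ meets $\lambda\geq c\tau\rho$.

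With all hypotheses of Theorem~\ref{main_theorem} verified on the high--probability event described above, the step--size constraint $\eta\leq\min\{\tfrac{1}{16(\tilde\lambda+\bar L)},\tfrac{1}{4\tilde\lambda(n+1)}\}$ is identical to that of the theorem, and we immediately obtain $\mathbb{E}(C_t)\leq (1-\eta\tilde\lambda)^t C_0$ until $F(w^t)-F(\hat w)\leq \delta$. Substituting $w^{*}_{\mathcal{A}^{\perp}}=0$ and $\Psi^2(\mathcal{B})=s_{\mathcal{G}}$ into the definition $\delta=c_1\tau(\Psi(\mathcal{B})\|\hat w-w^{*}\|_2+g(w^{*}_{\mathcal{A}^{\perp}}))^2$ yields $\delta=c_3\sigma_2(\Sigma)s_{\mathcal{G}}(\sqrt{m/n}+\sqrt{3\log N_{\mathcal{G}}/n})^2\|\hat w-w^{*}\|_2^2$, as claimed. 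The main technical obstacle is the group--level RSC bound in step (iii): it is the only step that is not a routine instantiation of the theorem and requires empirical--process control of a quadratic form over the group--sparsity cone; the noise bound in step (iv), while essential, follows from a straightforward Gaussian concentration plus union bound argument.
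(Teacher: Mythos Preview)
Your proposal is correct and follows essentially the same route as the paper's own proof: verify decomposability with $\mathcal{A}=\mathcal{B}=\mathcal{A}(S_{\mathcal{G}})$ and $\Psi(\mathcal{B})=\sqrt{s_{\mathcal{G}}}$, invoke the group--structured RSC bound for Gaussian designs (the paper cites Negahban et al.\ rather than Raskutti--Wainwright--Yu, but the inequality is the same), control the dual--norm of the noise term by Gaussian concentration plus a union bound over groups, and then apply Theorem~\ref{main_theorem}. The only cosmetic differences are in which reference you cite for the RSC and noise bounds and in minor constant bookkeeping (e.g.\ the factor $3$ inside the logarithm), none of which affects the argument.
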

	We offer some discussions to interpret the corollary.
	To satisfy the requirement $\tilde{\kappa}\geq 0$, it suffices to have
	$$ s_{\mathcal{G}} \left(\sqrt{\frac{m}{n}}+\sqrt{\frac{3 \log N_{\mathcal{G}}}{n}} \right)^2=o(1).$$ This is a mild condition, as it is needed to guarantee the statistical consistency of group Lasso \cite{negahban2009unified}. Notice that the condition is easily   satisfied  when $s_{\mathcal{G}}$ and $m$ are small. Under this same condition, since 
	$ \delta=c_3\sigma_2(\Sigma)s_{\mathcal{G}}\big(\sqrt{\frac{m}{n}}+\sqrt{\frac{3 \log N_{\mathcal{G}}}{n}} \big)^2 \|\hat{w}-w^*\|_2^2, $
	we conclude that $ \delta$ is dominated by $ \|\hat{w}-w^*\|_2^2.$
	Again, it implies the optimization error decrease geometrically up to the scale $o(\|\hat{w}-w^*\|_2^2)$ which is dominated by the statistical error of the model.
	
		\subsubsection{Extension to generalized linear model }
		We consider the {\em generalized linear model}  of the following form,
		$$ \min_{w\in \Omega} \frac{1}{n} \sum_{i=1}^{n} (\Phi (w ,x_i)-y_i\langle w,x_i \rangle)+\lambda \|w\|_1,$$ which covers such  case  as Lasso (where $\Phi (\theta)=\frac{\theta^2}{2}$) and logistic regression (where $\Phi(\theta)=\log (1+\exp (\theta))$). In this model, we have $$\Delta f(w_1,w_2)=\frac{1}{n} \sum_{i=1}^{n} \Phi'' (\langle w_t,x_i \rangle) \langle x_i, w_1-w_2 \rangle^2,$$ where $ w_t=t w_1+(1-t)w_2$ for some $t\in [0,1 ].$
		The RSC condition   thus is equivalent to:
		\begin{equation}
		\begin{split}
		&\frac{1}{n} \sum_{i=1}^{n} \Phi'' (\langle w_t,x_i \rangle)\langle x_i, w_1-w_2 \rangle ^2 \\
		\geq &\frac{\kappa}{2} \|w_1-w_2\|_2^2-\tau g^2 (w_1-w_2) ~~\text{for}~~ w_1,w_2\in \Omega.
		\end{split}
		\end{equation}
		Here we require $\Omega$ to be a bounded set \cite{loh2013regularized}. This requirement is essential since in some generalized linear model $\Phi''(\theta)$ approaches to zero as $ \theta$ diverges. For instance, in logistic regression, $\Phi''(\theta)=\exp (\theta)/(1+\exp(\theta))^2$, which tends to zero as $\theta\rightarrow \infty$ . For a broad class of generalized linear models, RSC holds with $\tau=c\frac{\log p}{n}$, thus the same result as that of Lasso holds, modulus change of constants.

	\subsection{Non-convex $F(w)$}
	In the non-convex case, we assume the following RSC condition: 
	$$ \Delta f(w_1,w_2)\geq \frac{\kappa}{2}\|w_1-w_2\|_2^2-\tau \|w_1-w_2\|_1^2 $$
	with $\tau=\theta \frac{\log p}{n}$ for some constant $\theta$. We again define the potential $A_t$ ,$B_t$ and $C_t$ in the same way with convex case. The main difference  is that now we have $\phi_{n+1}=-\frac{(\tilde{\lambda}+\mu)(n+1)}{2}\|w\|_2^2$  and the effective RSC parameter $\tilde{\kappa}$ is different. The necessary notations for presenting the theorem are listed below:
	\begin{itemize}
		\item $w^*$ is  the unknown true parameter that is $s$-sparse. Conjugate function $\tilde{g}^*(v)=\max_{w\in \Omega} \langle w,v \rangle-\tilde{g}(w)$, where $\Omega=\{ w| d_{\lambda}(w)\leq \rho  \}.$ Note $\Omega$ is convex due to convexity of $d_\lambda(w)$.
		\item $\hat{w}$ is the global optimum of Problem \eqref{obj:non_convex}, we assume it is in the interior of $\Omega$ w.l.o.g.   
		\item $(\hat{w},\hat{v})$ is an optimal solution pair satisfying $\tilde{g}(\hat{w})+\tilde{g}^*(\hat{v})=\langle \hat{w},\hat{v} \rangle$. 
		\item $A_t=\sum_{j=1}^{n+1}\frac{1}{q_j}\|a_j^t-\hat{a}_j\|_2^2,$ $B_t=2(\tilde{g}^*(v^t)-\langle \nabla \tilde{g}^*(\hat{v}),v^t-\hat{v} \rangle-\tilde{g}^*(\hat{v})),$ $ C_t=\frac{\eta}{(n+1)^2} A_t +\frac{\tilde{\lambda}}{2} B_t,$ where $\hat{a}_j=-\nabla \phi_j(\hat{w}).$
		\item Effective RSC parameter: $\tilde{\kappa}=\kappa-\mu-64\tau s$, where $\tau=\theta\frac{\log p}{n}$ for some constant $\theta$. Tolerance: $\delta=c_1\tau s \|\hat{w}-w^*\|_2^2$, where $c_1$ is a universal positive constant.
	\end{itemize}
	
	\begin{theorem}
		Suppose $w^*$ is $s$ sparse, $\hat{w}$ is the global optimum of Problem \eqref{obj:non_convex}. Assume each $f_i(w)$ is $L_i$ smooth and convex, f(w) satisfies the RSC condition with parameter $(\kappa,\tau)$, where $\tau=\theta\frac{\log p}{n}$ for some constant $\theta$, $d_{\lambda,\mu} (w)$ satisfies the Assumption in section \ref{section:Assumption_non_convex},   $\lambda L_d\geq \max\{ c\rho\theta \frac{\log p }{n} , 4 \|\nabla f (w^*)\|_\infty\} $ where $c$ is some universal positive constant, and $\tilde{\kappa}-\mu\geq \tilde{\lambda}> 0, $ the Algorithm \ref{alg:proximal_SDCA} runs with $\eta \leq \min\{\frac{1}{16 (\tilde{\lambda}+\bar{L})},\frac{1}{4\tilde{\lambda} (n+1)}\}$, where $\bar{L}=\frac{1}{n}\sum_{i=1}^{n} L_i$, then we have
		
		$$ \mathbb{E}(C_t)\leq (1-\eta \tilde{\lambda})^t C_0,$$
		until $ F(w^t)-F(\hat{w})\leq \delta $, where the expectation is for the randomness of sampling of $i$ in the algorithm.
		
	\end{theorem}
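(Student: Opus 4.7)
The plan is to mirror the structure of the proof of Theorem~\ref{main_theorem} for the convex case, adapting the potential-function argument to accommodate the non-convex regularizer. The key observation is that although $F(w)$ is no longer convex, the reformulation retains two crucial properties: (i) $\tilde{g}(w)=\frac{1}{2}\|w\|_2^2+\frac{\lambda}{\tilde{\lambda}}d_\lambda(w)$ is still $1$-strongly convex because Assumption~5 guarantees $d_\lambda$ is convex, so $\tilde{g}^*$ is $1$-smooth and the proximal step $w^t=\arg\max_w\langle w,v^t\rangle-\tilde{g}(w)$ is well defined; and (ii) each $\phi_i$ for $i\le n$ is $\tilde{L}_i$-smooth and convex, while $\phi_{n+1}$ is a pure concave quadratic and hence trivially $(\tilde{\lambda}+\mu)(n+1)$-smooth. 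The dual-free analysis requires only smoothness of the $\phi_i$'s, not convexity, so the per-step updates can be controlled by essentially the same inequalities used in the convex proof.

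Concretely, I would first derive one-step recursions for $\mathbb{E}[A_t\mid\mathcal{F}_{t-1}]$ and $\mathbb{E}[B_t\mid\mathcal{F}_{t-1}]$ separately. For $A_t$, expanding the update $a_i^t=a_i^{t-1}-\eta_i\lambda n(\nabla\phi_i(w^{t-1})+a_i^{t-1})$ and invoking co-coercivity of $\nabla\phi_i$ for $i\le n$ together with the explicit quadratic form of $\phi_{n+1}$ yields a bound of the shape $\mathbb{E}[A_t\mid\mathcal{F}_{t-1}]\le(1-\eta\tilde{\lambda})A_{t-1}+(\text{cross terms})$. For $B_t$, the $1$-smoothness of $\tilde{g}^*$ combined with the identity $\sum_i q_i\eta_i(\nabla\phi_i(w^{t-1})+a_i^{t-1})=\tfrac{\eta}{n+1}\sum_i(\nabla\phi_i(w^{t-1})+a_i^{t-1})$ expresses $v^t-\hat{v}$ as a contraction of $v^{t-1}-\hat{v}$ plus noise driven by $A_{t-1}$. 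Combining these in $C_t=\tfrac{\eta}{(n+1)^2}A_t+\tfrac{\tilde{\lambda}}{2}B_t$, the cross terms cancel up to a residual $R_t$ that represents primal suboptimality, producing
$$\mathbb{E}[C_t\mid\mathcal{F}_{t-1}]\le(1-\eta\tilde{\lambda})\,C_{t-1}+R_t.$$

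The remaining task is to show $R_t\le 0$ whenever $F(w^t)-F(\hat{w})>\delta$, and this is where RSC enters in a form modified by the non-convexity. Because Assumption~5 gives $d_{\lambda,\mu}(w)=\lambda d_\lambda(w)-\tfrac{\mu}{2}\|w\|_2^2$, we can rewrite $F(w)=\bigl(f(w)-\tfrac{\mu}{2}\|w\|_2^2\bigr)+\lambda d_\lambda(w)$, so that the effective loss inherits RSC with curvature $\kappa-\mu$. Using decomposability of the $\ell_1$ norm against subspaces associated with the support of the $s$-sparse $w^*$, the choice $\lambda L_d\ge 4\|\nabla f(w^*)\|_\infty$ forces both $\hat{w}-w^*$ and $w^t-\hat{w}$ to lie in a cone where $\|u_{\mathcal{A}^\perp}\|_1\le 3\|u_{\mathcal{A}}\|_1$. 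Feeding this into the RSC inequality via the cone-type argument of Lemma~\ref{lemma.RSC_cone} converts the $\tau\|w_1-w_2\|_1^2$ tolerance into $64\tau s\|w_1-w_2\|_2^2$ plus additive statistical slack of order $\tau s\|\hat{w}-w^*\|_2^2$, producing the stated effective parameter $\tilde{\kappa}=\kappa-\mu-64\tau s$ and tolerance $\delta=c_1\tau s\|\hat{w}-w^*\|_2^2$.

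The hardest step, I expect, will be verifying cone membership in the non-convex setting: the first-order optimality condition at $\hat{w}$ involves the subdifferential of the non-convex $d_{\lambda,\mu}$, so one must use the full package of properties in Section~\ref{section:Assumption_non_convex} (symmetry, monotonicity, the boundary derivative $\lim_{t\to 0^+}\bar{d}_{\lambda,\mu}'(t)=\lambda L_d$, and convexity of $\bar{d}_\lambda$) to show that $d_\lambda$ behaves as a ``dual-norm bound'' on the first-order error $\nabla f(w^*)-\nabla f(\hat{w})$. Once this cone control is established, the remainder of the argument is a routine transcription of the convex proof with the substitutions $\kappa\mapsto\kappa-\mu$, $g(\cdot)=\|\cdot\|_1$, and $\Psi^2(\mathcal{B})=s$, which then yields the geometric rate $(1-\eta\tilde{\lambda})^t$ until $F(w^t)-F(\hat{w})\le\delta$.
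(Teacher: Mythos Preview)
Your high-level architecture is right: you reuse the potentials $A_t,B_t,C_t$, exploit $1$-smoothness of $\tilde g^*$ (which survives because $d_\lambda$ is convex), and replace the convex RSC by one with curvature $\kappa-\mu$ via the splitting $d_{\lambda,\mu}(w)=\lambda d_\lambda(w)-\tfrac{\mu}{2}\|w\|_2^2$. But there is a genuine gap in the step where you assert that $w^t-\hat w$ lies in the cone $\|u_{\mathcal A^\perp}\|_1\le 3\|u_{\mathcal A}\|_1$ and that the resulting slack is directly of order $\tau s\|\hat w-w^*\|_2^2$.

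That cone condition holds (approximately) for $\hat w-w^*$, but \emph{not} for a generic iterate $w^t-\hat w$. The analogue of Lemma~\ref{lemma.cone_optimization} in the non-convex case (the paper's Lemma~\ref{lemma.non_convex_cone_optimizatin}) only gives
\[
\|w^t-\hat w\|_1\le 4\sqrt{s}\,\|w^t-\hat w\|_2+8\sqrt{s}\,\|\hat w-w^*\|_2+2\min\!\Big(\tfrac{\xi}{\lambda L_d},\rho\Big),
\]
with an additive slack depending on the \emph{current} optimality gap $\xi$. Plugging this into RSC produces a residual $\epsilon^2=2\tau(\delta_{stat}+2\min(\xi/(\lambda L_d),\rho))^2$, which at the outset is of order $\tau\rho^2$, not $\delta$. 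Hence you cannot prove directly that $R_t\le 0$ whenever $F(w^t)-F(\hat w)>\delta$.

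The missing ingredient is the epoch-based bootstrap (Step~5 in the proof of Theorem~\ref{main_theorem}): partition time into epochs $[T_0,T_1),[T_1,T_2),\ldots$; within epoch $j$ one has contraction of $C_t$ until $F(w^t)-F(\hat w)\le 3\epsilon_j^2=:\xi_j$, and this smaller gap then tightens the cone bound for epoch $j+1$. The hypothesis $\lambda L_d\ge c\rho\tau$---which you did not invoke anywhere---is exactly what makes the recursion $\xi_j\mapsto\xi_{j+1}$ contract super-geometrically down to $O(\tau s\|\hat w-w^*\|_2^2)=\delta$. Without this bootstrap the argument stalls at a tolerance of order $\tau\rho^2$. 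Once you insert this epoch argument (and use Lemma~\ref{lemma.non_convex_norm} rather than raw first-order conditions on $d_{\lambda,\mu}$ to get the cone bound), the rest of your outline goes through.
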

	Some remarks are in order to interpret the theorem.
	\begin{itemize}
		\item We require $\tilde{\lambda}$ to satisfy $ 0<\tilde{\lambda}\leq \kappa-2\mu-64\theta\frac{\log p}{n} s$.  Thus if the non-convex parameter $\mu$ is too large, we can not find such $\tilde{\lambda}$.
		\item Note that $\delta=c_1s\theta\frac{\log p}{n}\|\hat{w}-w^*\|_2^2$ is dominated by $\|\hat{w}-w^*\|_2^2$ when the model is sparse and $n$ is large. Similar as the convex case, by $B_t\geq \|w^t-\hat{w}\|_2^2$, the theorem says the optimization error deceases linearly up to the fundamental statistical error of the model.
	\end{itemize}
	The first non-convex model we consider is  linear regression with SCAD.  Here $f_i(w)=\frac{1}{2} (\langle w, x_i \rangle-y_i)^2$ and $d_{\lambda,\mu}(\cdot)$ is $SCAD(\cdot)$ with parameter $\lambda$ and $\zeta$. The  data $(x_i,y_i)$ are generated similarly as in the Lasso case.  
	\begin{corollary}[Linear regression with SCAD regularization]	\label{cor.scad}
		Suppose we have n i.i.d. observations $ \{ (x_i,y_i) \}$, $w^*$ is $s$ sparse , $\hat{w}$ is global optima and we choose $\lambda$ and $\tilde{\lambda}$ such that $\lambda \geq \max\{ c_1\rho \nu (\Sigma) \frac{\log p }{n} , 12\sigma \sqrt{\frac{\log p}{n}}\}$, $\tilde{\kappa} -\frac{1}{\zeta-1}\geq \tilde{\lambda}>0$, then we run the algorithm with $\eta \leq \min\{\frac{1}{16 (\tilde{\lambda}+\bar{L})},\frac{1}{4\tilde{\lambda} (n+1)}\}$, where $\bar{L}=\frac{1}{n}\sum_{i=1}^{n} L_i$, then  have	
		$$ \mathbb{E}(C_t)\leq (1-\eta \tilde{\lambda})^t C_0,$$
		with $1-\exp(-3\log p)-\exp (-c_2 n),$  until $ F(w^t)-F(\hat{w})\leq \delta $, where $\tilde{\kappa}=\frac{1}{2}\sigma_{\min}(\Sigma)-c_3 \nu(\Sigma) \frac{s \log p}{n}-\frac{1}{\zeta-1}$, $\delta=c_4\nu (\Sigma)\frac{s\log p}{n} \|\hat{w}-w^*\|_2^2.$  Here $c_1, c_2, c_3,c_4$ are universal positive constants.
	\end{corollary}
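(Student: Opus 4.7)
The plan is to obtain Corollary~\ref{cor.scad} as a direct instantiation of the preceding non-convex Theorem: the task is to identify the parameters $(\kappa,\tau,\mu,L_d,\|\nabla f(w^*)\|_\infty)$ for the SCAD-regularized least squares problem, verify that the hypotheses of the theorem hold with the stated high probability, and then check that the resulting $\tilde{\kappa}$ and $\delta$ match the claimed expressions. Since the preceding theorem already delivers the linear convergence recursion $\mathbb{E}(C_t) \leq (1-\eta\tilde{\lambda})^t C_0$, once the hypotheses are verified the conclusion is immediate.

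First I would dispose of the SCAD side. By the fact recalled right after the Assumption list in Section~\ref{section:Assumption_non_convex}, $d_{\lambda,\mu}=\mathrm{SCAD}_{\lambda,\zeta}$ satisfies the five assumptions with $L_d=1$ and $\mu=\frac{1}{\zeta-1}$. So the requirement $\lambda L_d \geq 4\|\nabla f(w^*)\|_\infty$ reduces to $\lambda \geq 4\|\nabla f(w^*)\|_\infty$, and $\mu$ in the formula $\tilde{\kappa}=\kappa-\mu-64\tau s$ becomes $\frac{1}{\zeta-1}$, matching the expression in the corollary once we determine $\kappa$ and $\tau$.

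The two substantive inputs to verify are (i) the RSC condition for $f(w)=\frac{1}{2n}\sum_i(\langle w,x_i\rangle-y_i)^2$ under $x_i\sim N(0,\Sigma)$ and (ii) a tail bound on $\|\nabla f(w^*)\|_\infty = \|\frac{1}{n}X^\top\xi\|_\infty$. For (i) I would invoke the standard Gaussian-design RSC result used already in the Lasso corollary: with probability at least $1-\exp(-c_2 n)$, for all $w_1,w_2$,
\begin{equation*}
\Delta f(w_1,w_2) \geq \tfrac{1}{2}\sigma_{\min}(\Sigma)\|w_1-w_2\|_2^2 - c\,\nu(\Sigma)\tfrac{\log p}{n}\|w_1-w_2\|_1^2,
\end{equation*}
so $\kappa=\tfrac{1}{2}\sigma_{\min}(\Sigma)$ and $\tau = c\,\nu(\Sigma)\tfrac{\log p}{n}$, i.e.\ $\theta = c\,\nu(\Sigma)$. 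For (ii), because $\xi_i\sim N(0,\sigma^2)$ independent of $X$ and $X$ is column-normalized, a union bound over the $p$ coordinates of the sub-Gaussian vector $\frac{1}{n}X^\top\xi$ gives $\|\nabla f(w^*)\|_\infty \leq 3\sigma\sqrt{\log p/n}$ with probability at least $1-\exp(-3\log p)$; hence $\lambda\geq 12\sigma\sqrt{\log p/n}$ suffices for the $4\|\nabla f(w^*)\|_\infty$ condition, while the remaining lower bound $\lambda \geq c_1\rho\nu(\Sigma)\log p/n$ is precisely $c\rho\theta \log p/n$ as required by the theorem.

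Substituting these values, $\tilde{\kappa} = \tfrac{1}{2}\sigma_{\min}(\Sigma) - 64c\nu(\Sigma)\tfrac{s\log p}{n} - \tfrac{1}{\zeta-1}$, which (absorbing the $64$ into the universal constant) is the expression for $\tilde{\kappa}$ in the corollary, and $\delta = c_1\tau s\|\hat{w}-w^*\|_2^2 = c_4 \nu(\Sigma)\tfrac{s\log p}{n}\|\hat{w}-w^*\|_2^2$, also as claimed. The condition $\tilde{\kappa}-\mu\geq \tilde{\lambda}>0$ in the hypothesis is then feasible whenever $\tfrac{s\log p}{n}$ is sufficiently small, which is the usual statistical regime. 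The two probabilistic events above are independent (the RSC event depends only on $X$, the gradient event uses the additional noise $\xi$), so a union bound yields the overall probability $1-\exp(-3\log p)-\exp(-c_2 n)$ stated in the corollary. The main obstacle is the RSC concentration for a Gaussian design; I would cite the standard argument (Raskutti--Wainwright--Yu type) used in Corollary~\ref{cor.lasso} rather than reprove it, since the loss function is identical to Lasso and only the regularizer has changed.
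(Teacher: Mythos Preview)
Your proposal is correct and follows essentially the same route as the paper's proof: instantiate the non-convex theorem by borrowing the RSC bound and the tail bound on $\|\nabla f(w^*)\|_\infty=\frac{1}{n}\|X^\top\xi\|_\infty$ from the Lasso corollary, and plug in the SCAD parameters $L_d=1$, $\mu=\frac{1}{\zeta-1}$. The only slip is a harmless constant mismatch in your displayed RSC inequality (the coefficient should be $\tfrac{1}{4}\sigma_{\min}(\Sigma)$, not $\tfrac{1}{2}$, to be consistent with $\kappa=\tfrac{1}{2}\sigma_{\min}(\Sigma)$ via the $\kappa/2$ convention), but your identification of $\kappa$ and the rest of the argument are correct.
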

	
	If the non-convex parameter $\frac{1}{\zeta-1}$ is small, $s$ is sparse, $n$ is large, then we can choose a positive $\tilde{\lambda}$ to guarantee the convergence of algorithm. Under this setting, we have the tolerance $\delta$ dominated by $\|\hat{w}-w^*\|_2^2$.
	
	The second example is the {\em corrected Lasso}. In many applications, the covariate may be observed subject to corruptions. 	In this section, we consider corrected Lasso proposed by \cite{loh2011high}. Suppose data are generated according to a   linear model $y_i=x_i^Tw^*+\xi_i,$ where $\xi_i$ is a random zero-mean sub-Gaussian noise with variance $\sigma^2 $; and   each data point $x_i$ is i.i.d.\  sampled from a zero-mean normal distribution, i.e., $x_i\sim N(0,\Sigma)$. We denote the data matrix by $X\in \mathbb{R}^{n\times p}$ , the smallest eigenvalue of $\Sigma$ by $\sigma_{\min} (\Sigma)$, and the largest eigenvalue of $\Sigma$ by $\sigma_{\max} (\Sigma) $ . 
	
	The observation $z_i$ of $x_i$ is corrupted by addictive noise, in particular,   $z_i=x_i+\varsigma_i$, where $\varsigma_i\in \mathbb{R}^p$ is a random vector independent of $x_i$, say zero-mean with known covariance matrix $\Sigma_\varsigma$. Define $\hat{\Gamma}=\frac{Z^TZ}{n}-\Sigma_\varsigma$ and $\hat{\gamma}=\frac{Z^Ty}{n}$. Our goal is to estimate $w^*$ based on $y_i$ and $z_i$ (but not $x_i$ which is not observable), and the corrected Lasso proposes to solve the following: 
	$$\hat{w}\in \arg \min_{\|w\|_1\leq \rho} \frac{1}{2} w^T \hat{\Gamma} w-\hat{\gamma} w+ \lambda \|w\|_1. $$
	Equivalently, it solves 
	$$ \min_{\|w\|_1\leq \rho}\frac{1}{2n}\sum_{i=1}^{n} (y_i-w^Tz_i)^2-\frac{1}{2}w^T\Sigma_\varsigma w +\lambda \|w\|_1 .$$
	Notice that due to the term $-\frac{1}{2}w^T\Sigma_\varsigma w $, the optimization problem is non-convex.
	\begin{corollary}[Corrected Lasso]\label{cor.corrected_lasso}
		Suppose we are given i.i.d. observation $\{(z_i,y_i)   \}$ from the linear model with additive noise,  $w^* $ is s sparse, and $\Sigma_\varsigma=\gamma_\varsigma I$. Let $\hat{w}$ be the global optima.  We choose $\lambda \geq \max\{ c_0\rho\frac{\log p }{n} , c_1 \varphi \sqrt{\frac{\log p}{n}}\} $ , and positive $\tilde{\lambda}$ such that $\tilde{\kappa}-\gamma_\varsigma\geq \tilde{\lambda}>0$, where $\varphi=(\sqrt{\sigma_{\max} (\Sigma)}+\sqrt{\gamma_\varsigma})(\sigma+\sqrt{\gamma_\varsigma}\|w^*\|_2)$, and  $$\tilde{\kappa}=\frac{1}{2} \sigma_{\min}(\Sigma)-c_2 \sigma_{\min}(\Sigma)\max \left( (\frac{\sigma_{\max} (\Sigma)+\gamma_\varsigma}{\sigma_{\min}(\Sigma)})^2,1 \right)  \frac{s\log p}{n}-\gamma_\varsigma,$$then if we run the algorithm with $\eta \leq \min\{ \frac{1}{16(\tilde{\lambda} +\bar{L})},\frac{1}{4\tilde{\lambda} (n+1)} \}$,  we have 
		$$ \mathbb{E}(C_t)\leq (1-\eta \tilde{\lambda})^t C_0,$$
		with probability at least  $1-c_3 \exp \left(-c_4 n\min \big( \frac{\sigma^2_{\min} (\Sigma)}{( \sigma_{\max}(\Sigma)+\gamma_\varsigma)^2},1 \big)    \right)-c_5\exp(-c_6 \log p)$ , until $F(w^t)-F(\hat{w})\leq \delta$, where $\delta=c_7 \sigma_{\min}(\Sigma)\max \left( (\frac{\sigma_{\max} (\Sigma)+\gamma_\varsigma}{\sigma_{\min}(\Sigma)})^2,1 \right)\frac{s\log p}{n} \|\hat{w}-w^*\|_2^2, $ and $c_0$ to $c_7$ are some universal positive constants.
	\end{corollary}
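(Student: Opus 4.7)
The plan is to realize the corrected Lasso as an instance of the non-convex setup \eqref{obj:non_convex} and invoke the preceding non-convex theorem. I set $f_i(w) = \tfrac{1}{2}(y_i - w^T z_i)^2$ (convex and $\|z_i\|_2^2$-smooth) and absorb the non-convex quadratic into the regularizer by defining $d_{\lambda,\mu}(w) = \lambda \|w\|_1 - \tfrac{\gamma_\varsigma}{2}\|w\|_2^2$, taking $\mu = \gamma_\varsigma$ and $L_d = 1$. The side-constraint function is then $d_\lambda(w) = \|w\|_1$, which is convex, so $\Omega = \{w : \|w\|_1 \leq \rho\}$ is convex, and the five univariate assumptions of Section~\ref{section:Assumption_non_convex} on $\bar d_{\lambda,\mu}(t) = \lambda|t| - \tfrac{\gamma_\varsigma}{2}t^2$ (separable, symmetric, nondecreasing on the feasible range, $\ell_1$-type slope at $0$, convex $\bar d_\lambda(t) = |t|$) hold by inspection.

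Next I would verify the RSC hypothesis for $f(w) = \tfrac{1}{2n}\|y - Zw\|_2^2$. Writing $\nabla^2 f = \hat\Gamma + \gamma_\varsigma I$ with $\hat\Gamma = \tfrac{1}{n}Z^TZ - \Sigma_\varsigma$, I would invoke the deviation bound of Loh and Wainwright for the corrected sample covariance: with probability at least $1 - c_3 \exp\bigl(-c_4 n \min(\sigma_{\min}^2(\Sigma)/(\sigma_{\max}(\Sigma)+\gamma_\varsigma)^2, 1)\bigr)$, one has $v^T \hat\Gamma v \geq \tfrac{1}{2}\sigma_{\min}(\Sigma)\|v\|_2^2 - c\,\alpha\, \tfrac{\log p}{n}\|v\|_1^2$ uniformly in $v$, where $\alpha = \max\{((\sigma_{\max}(\Sigma)+\gamma_\varsigma)/\sigma_{\min}(\Sigma))^2, 1\}$. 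This translates into RSC for $f$ with $\kappa = \tfrac{1}{2}\sigma_{\min}(\Sigma)$ and $\tau = c'\sigma_{\min}(\Sigma)\alpha\log p/n$. Substituting into $\tilde\kappa = \kappa - \mu - 64\tau s$ recovers the formula stated in the corollary, and the hypothesis $\tilde\kappa - \gamma_\varsigma \geq \tilde\lambda > 0$ matches the theorem's $\tilde\kappa - \mu \geq \tilde\lambda > 0$ since $\mu = \gamma_\varsigma$.

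For the regularization lower bound, the requirement $\lambda L_d \geq c\rho\theta\log p/n$ is met directly by $\lambda \geq c_0\rho\log p/n$. For $\lambda L_d \geq 4\|\nabla f(w^*)\|_\infty$, I would control the shifted residual $\hat\Gamma w^* - \hat\gamma = \tfrac{1}{n}(X+\varsigma)^T(\varsigma w^* - \xi) - \gamma_\varsigma w^*$ coordinate-wise via Hanson--Wright-style concentration applied to its four sub-Gaussian bilinear components; this yields $\|\hat\Gamma w^* - \hat\gamma\|_\infty \leq c\varphi \sqrt{\log p/n}$ with probability $\geq 1 - c_5\exp(-c_6\log p)$, where $\varphi = (\sqrt{\sigma_{\max}(\Sigma)} + \sqrt{\gamma_\varsigma})(\sigma + \sqrt{\gamma_\varsigma}\|w^*\|_2)$. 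The $-\gamma_\varsigma w^*$ shift is precisely what the $\mu = \gamma_\varsigma$ term contributes inside the theorem's proof (it cancels the non-vanishing piece of $\tfrac{1}{n}\varsigma^T\varsigma w^*$), so $\lambda \geq c_1\varphi \sqrt{\log p/n}$ is sufficient.

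Intersecting the two high-probability events, every hypothesis of the preceding non-convex theorem holds, and the contraction $\mathbb E[C_t] \leq (1-\eta\tilde\lambda)^t C_0$ follows for the declared step size; the tolerance $\delta = c_1\tau s\|\hat w - w^*\|_2^2$ specializes to $c_7\sigma_{\min}(\Sigma)\alpha \tfrac{s\log p}{n}\|\hat w - w^*\|_2^2$, and a union bound gives the advertised failure probability. The main obstacle is the errors-in-variables concentration analysis — producing both the RSC bound on $\hat\Gamma$ and the $\ell_\infty$ bound on $\hat\Gamma w^* - \hat\gamma$ relies on non-trivial Hanson--Wright-type inequalities for products of sub-Gaussian matrices; these are essentially off the shelf from the noisy-covariate literature, so the remaining labor is constant tracking and verifying that the $-\tfrac{\gamma_\varsigma}{2}\|w\|_2^2$ piece pushed into $d_{\lambda,\mu}$ bookkeeps correctly against the $\mu$-parameter appearing in the theorem.
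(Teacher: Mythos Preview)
Your approach matches the paper's: both verify the hypotheses of the non-convex theorem by invoking the Loh--Wainwright concentration results---Lemma~1 there for the RSC bound on $\hat\Gamma$ and Lemma~2 for the $\ell_\infty$ bound on $\hat\Gamma w^*-\hat\gamma$---and then read off $\mu=\gamma_\varsigma$, $\tilde\kappa$, and $\delta$. The paper's proof simply asserts $\|\nabla f(w^*)\|_\infty=\|\hat\Gamma w^*-\hat\gamma\|_\infty$ and cites those two lemmas, which is precisely the cancellation you describe (in Lemma~\ref{lemma.non_convex_cone_optimizatin} the $\gamma_\varsigma\langle w^*,\Delta\rangle$ contributions from $\nabla f(w^*)$ and from $d_{\lambda,\mu}$ offset, leaving only $\hat\Gamma w^*-\hat\gamma$).
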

	Some remarks of the corollary are in order.
	\begin{itemize}
		\item The result can be easily extended to more general $\Sigma_\varsigma\preceq \gamma_\varsigma I.$
		\item The requirement of $\lambda$ is similar with its batch counterpart \cite{loh2011high}.
		\item Similar to the setting in Lasso, we need $\frac{s\log p}{n}=o(1)$.  To ensure the existence of such $\tilde{\lambda}$, the non-convex parameter $\gamma_{\varsigma}$ can not be too large, which is similar to the result in \cite{loh2013regularized}.
	\end{itemize}

		\section{Experiment Result}
		We report numerical experiment results to validate our theoretical findings, namely, without strong convexity, SDCA still   achieves linear convergence   under our setup.   The setup of  experiment is similar to that in \cite{qu2016linear}.  On both synthetic and real datasets we report results of SDCA and compare with several other algorithms.  These algorithms  are Prox-SVRG \cite{xiao2014proximal}, SAGA~\cite{defazio2014saga}, Prox-SAG an proximal version of the algorithm in \cite{schmidt2013minimizing}, proximal stochastic gradient (Prox-SGD), regularized dual averaging method (RDA) \cite{xiao2010dual} and the proximal full gradient method (Prox-GD) \cite{nesterov2013introductory}. For the algorithms with a constant learning rate (i.e., Prox-SAG, SAGA, Prox-SVRG, SDCA, Prox-GD), we tune the learning rate from an exponential grid  $\{ 2, \frac{2}{2^1},...,\frac{2}{2^{12}} \}$ and chose the one with best performance. Below are some further remarks. 
		
		\begin{itemize}
			
	    \item Convergence rates of Prox-SGD and RDA are sub-linear according to the analysis in \citet{bottou2010large,xiao2010dual}. The stepsize of Prox-SGD is set as $\eta_k=\eta_0/\sqrt{k}$ as suggested in \citet{duchi2009efficient} and that of RDA is $\beta_k=\beta_0 \sqrt{k}$ as suggested in \citet{xiao2010dual}. $\eta_0$  and $\beta_0$ are chosen as the one that attains the best performance among powers of 10.
		\item \citet{agarwal2010fast} prove that  Prox-GD converges linearly in the setting of our experiment.
	   \item  The convergence rate of Prox-SVRG and SAGA is linear in our setting, shown recently in \citet{qu2016linear,qu2017saga}.
	   \item To the best of our knowledge, linear convergence of  Prox-SAG in our setting has not been established, although in practice it works well.  
			\end{itemize}
	
	\subsection{Synthetic dataset}
	In this section we test algorithms above on Lasso, Group Lasso, Corrected Lasso and  SCAD .
	
	\subsubsection{Lasso}
	We generated the feature vector $x_i\in \mathbb{R}^{p}$ independently from $N(0,\Sigma)$, where we set $\Sigma_{ii}=1, ~~\text{for}~~ i=1,...,p$ and $\Sigma_{ij}=b, ~~\text{for}~~ i\neq j$.  The responds $y_i$ is generated by follows: $ y_i=x_i^T w^*+\xi_i.$  $w^*\in \mathbb{R}^p$ is a sparse vector with cardinality $s$,  where the non-zero entries are $\pm 1 $ drawn from the Bernoulli distribution with probability $0.5$. The noise $\xi_i$ follows the standard normal distribution. We set $\lambda=0.05$ in Lasso and choose $\tilde{\lambda}=0.25$ in SDCA. In the following experiment, we set $p=5000$, $n=2500$ and try different settings on $s$ and $b$.
	
	\begin{figure}
		\begin{subfigure}[b]{0.5\textwidth}
			\includegraphics[width=\textwidth]{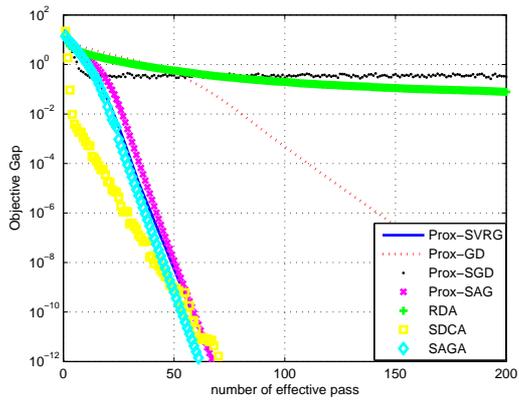}
			\caption{s=50, b=0}
		\end{subfigure}
		\begin{subfigure}[b]{0.5\textwidth}
			\includegraphics[width=\textwidth]{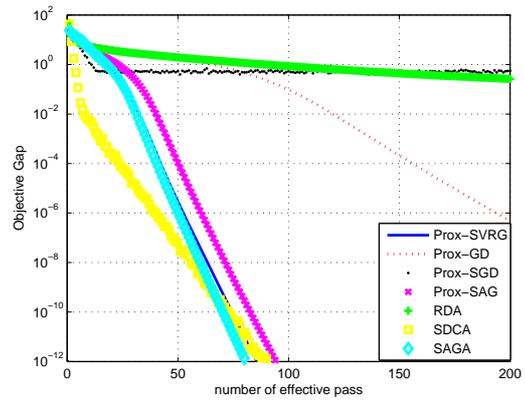}
			\caption{s=100,b=0}
		\end{subfigure}
		\begin{subfigure}[b]{0.5\textwidth}
			\includegraphics[width=\textwidth]{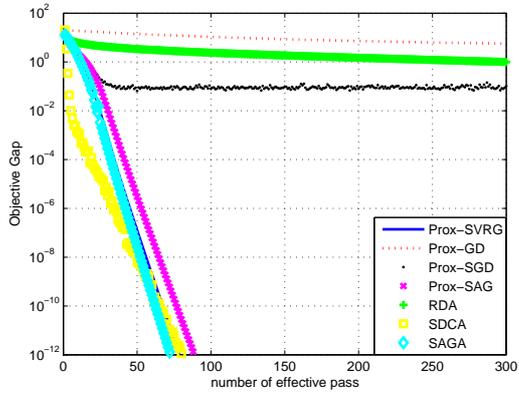}
			\caption{s=50,b=0.1}
		\end{subfigure}\quad
		\begin{subfigure}[b]{0.5\textwidth}
			\includegraphics[width=\textwidth]{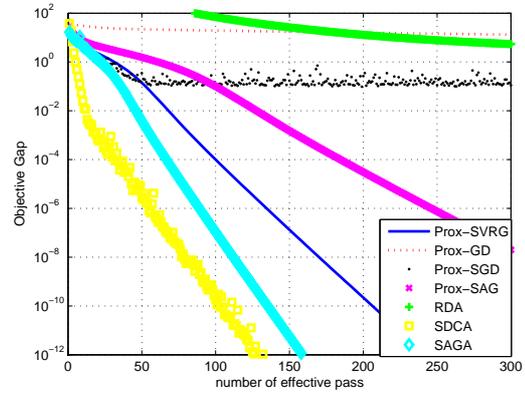}
			\caption{s=100,b=0.4}
		\end{subfigure}
		\caption{Comparison between six algorithms on Lasso. The x-axis is the number of pass over the dataset. y-axis is the objective gap $F(w_k)-F(\hat{w})$ with a log scale. In figure (a), s=50,b=0 . In figure (b), s=100,b=0. In figure (c), s=50,b=0.1. In figure (d), s=100,b=0.4 . } \label{Fig:lasso}
	\end{figure}
	
	Figure \ref{Fig:lasso} presents simulation results on Lasso. Among all four settings, SDCA, SVRG and SAG converge with linear rates.  When $b=0.4$ and $s=100$, SDCA outperforms the other two.  When $b=0$, the Prox-GD works although with slower rate. While $b$ is nonzero, the Prox-GD does not works well due to the large condition number. RDA and SGD converges slowly in all setting because of the large variance in gradient. 
	
	\subsubsection{Group Lasso}
	
	We report the experiment result on Group Lasso in Figure \ref{gp_lasso}. Similarly as Lasso, we generate the observation $ y_i=x_i^T w^*+\xi_i$ with the feature vectors independently sampled from $N(0,\Sigma)$, where $\Sigma_{ii}=1$ and $\Sigma_{ij}=b, i\neq j$. The cardinality of non-zero group is $s_{\mathcal{G}}$, and the non-zero entries are sampled uniformly from $[-1, 1]$.  In the following experiment, we try different setting on $b$, group size $m$ and group sparsity $s_{\mathcal{G}}$.

	\begin{figure}
		\begin{subfigure}[b]{0.5\textwidth}
			\includegraphics[width=\textwidth]{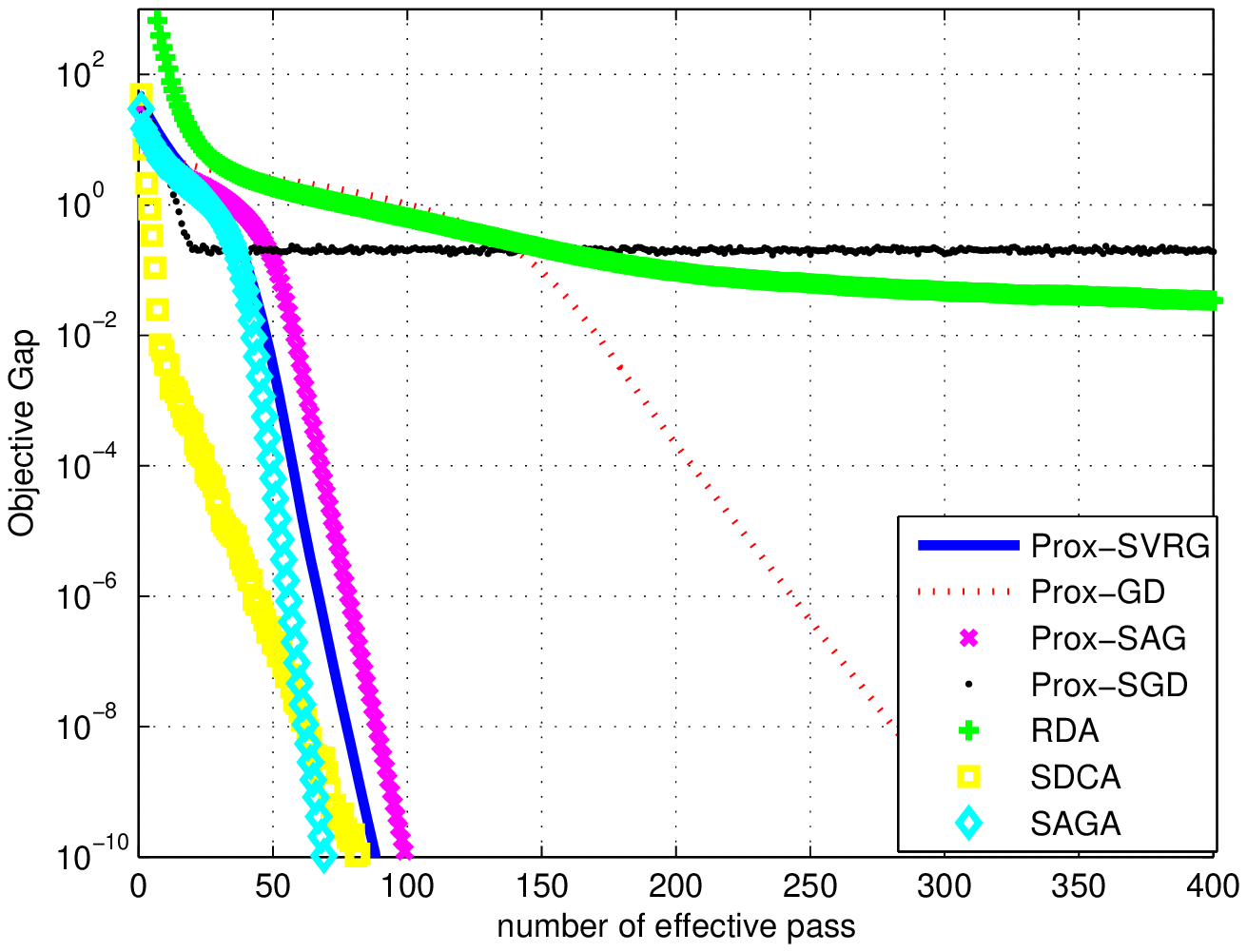}
			\caption{m=10, $s_{\mathcal{G}}=10, b=0$}
		\end{subfigure}
		\begin{subfigure}[b]{0.5\textwidth}
			\includegraphics[width=\textwidth]{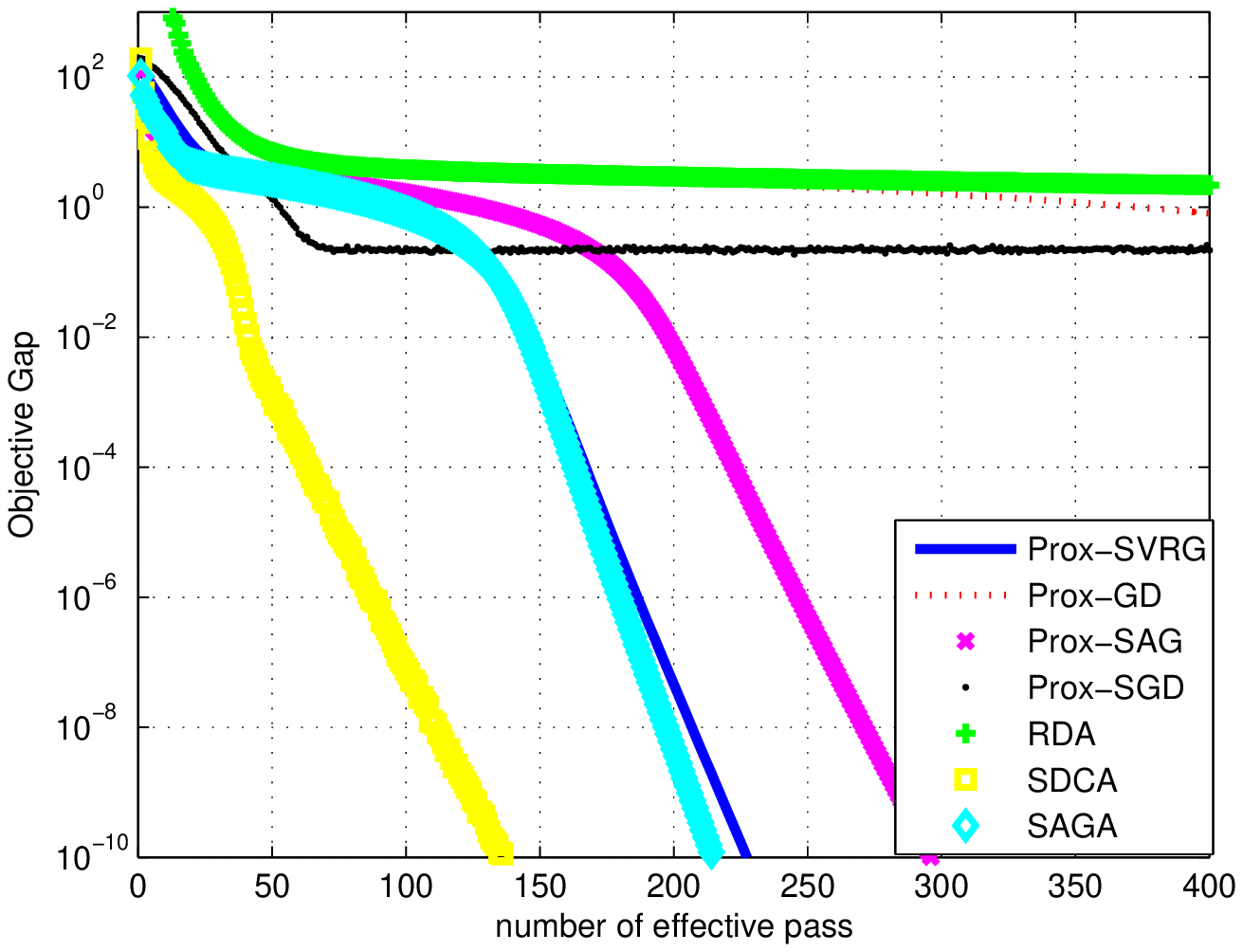}
			\caption{m=20, $s_{\mathcal{G}}=20, b=0$}
		\end{subfigure}
		\begin{subfigure}[b]{0.5\textwidth}
			\includegraphics[width=\textwidth]{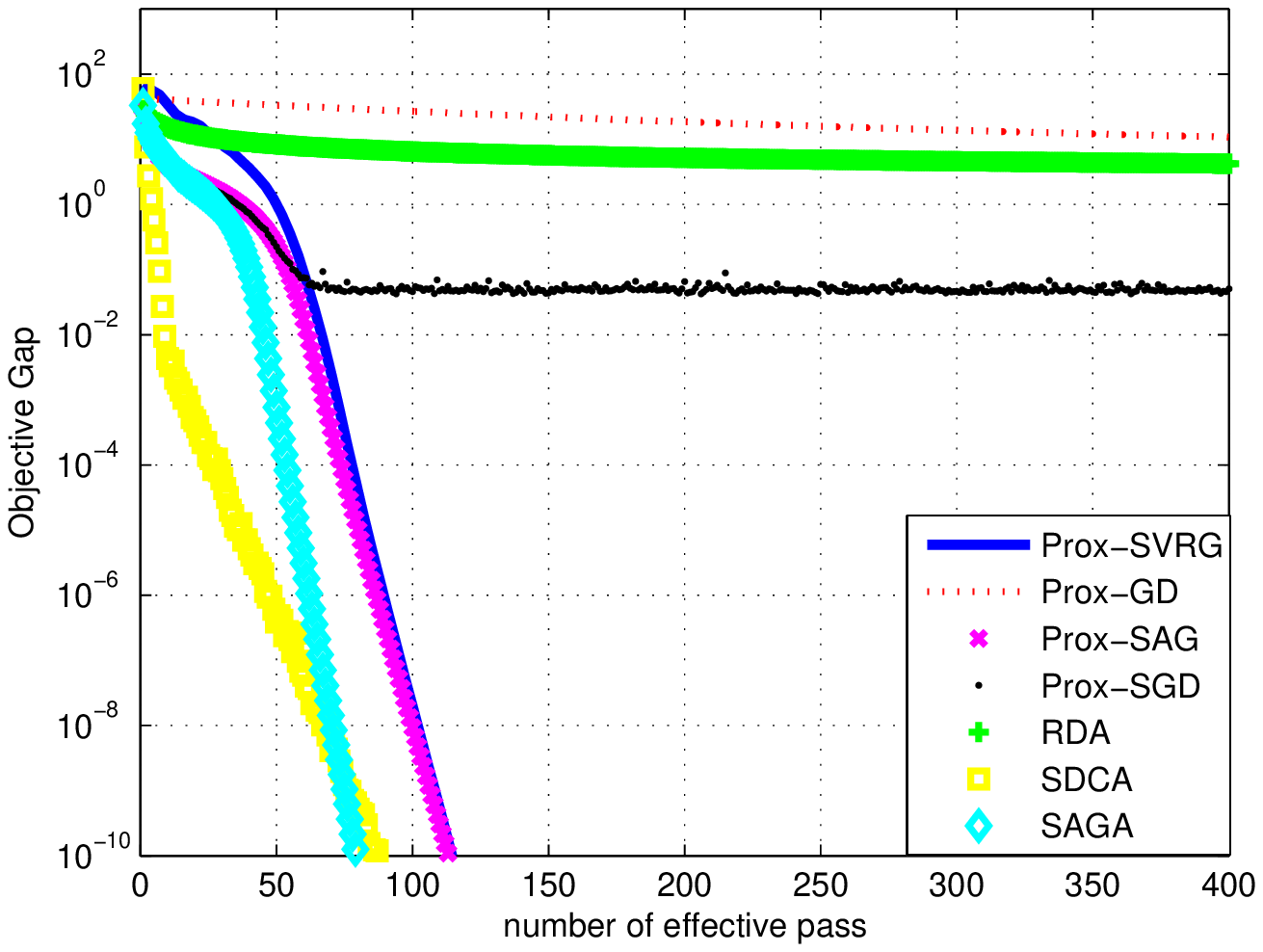}
			\caption{m=10, $s_{\mathcal{G}}=10, b=0.1$}
		\end{subfigure}
		\begin{subfigure}[b]{0.5\textwidth}
			\includegraphics[width=\textwidth]{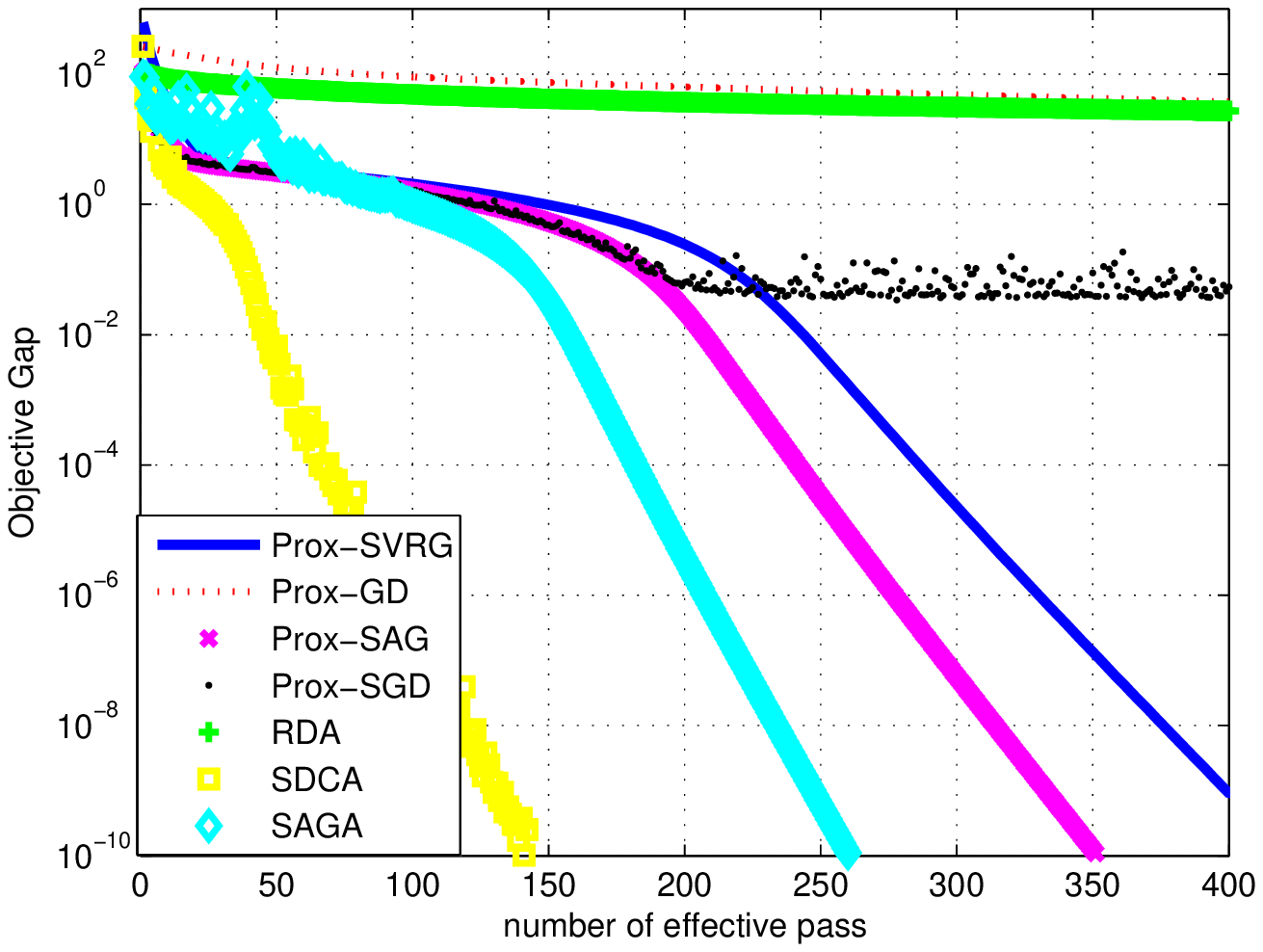}
			\caption{m=20, $s_{\mathcal{G}}=20, b=0.4$}
		\end{subfigure}
		
		\caption{Comparison between six algorithms on group Lasso. The x-axis is the number of pass over the dataset.  y-axis is the objective gap $F(w_k)-F(\hat{w})$ with a log scale.In Figure (a), m=10, $s_{\mathcal{G}}=10, b=0$. In Figure (b), m=20, $s_{\mathcal{G}}=20, b=0$. In Figure (c), m=10, $s_{\mathcal{G}}=10, b=0.1$. In Figure (d), m=20, $s_{\mathcal{G}}=20, b=0.4$. } \label{gp_lasso}
	\end{figure}
	
	Similar with the result in Lasso, SDCA, SVRG and SAG performs well in all settings. When $m=20$ and $s_{\mathcal{G}}=20$, SDCA outperforms the other two.  The Prox-GD converges with linear rate when $m=10, s_{G}=10, b=0$ but does not work well in the other three setting. SGD and RDA converge slowly in all four settings. 
	
	\subsubsection{Corrected Lasso}
	
	We generate data as follows: $ y_i=x_i^Tw^*+\xi_i$, where each data point $x_i\in \mathbb{R}^p$ is drawn from normal distribution $N(0,I)$, and the noise $ \xi_i$ is   drawn from $N(0,1)$. The coefficient $w^*$ is sparse with cardinality $s$, where the non-zero coefficient equals to $\pm 1$ generated from the Bernoulli distribution with probability $0.5$. We set covariance matrix $\Sigma_\varsigma=\gamma_\varsigma I$. We choose $\lambda=0.05$ in the formulation and $\tilde{\lambda}=0.1$ in SDCA. The result is presented in Figure \ref{fig:corrected_lasso}.  
	
	\begin{figure}[h]
		\begin{subfigure}[b]{0.45\textwidth}
			\centering
			\includegraphics[width=\textwidth]{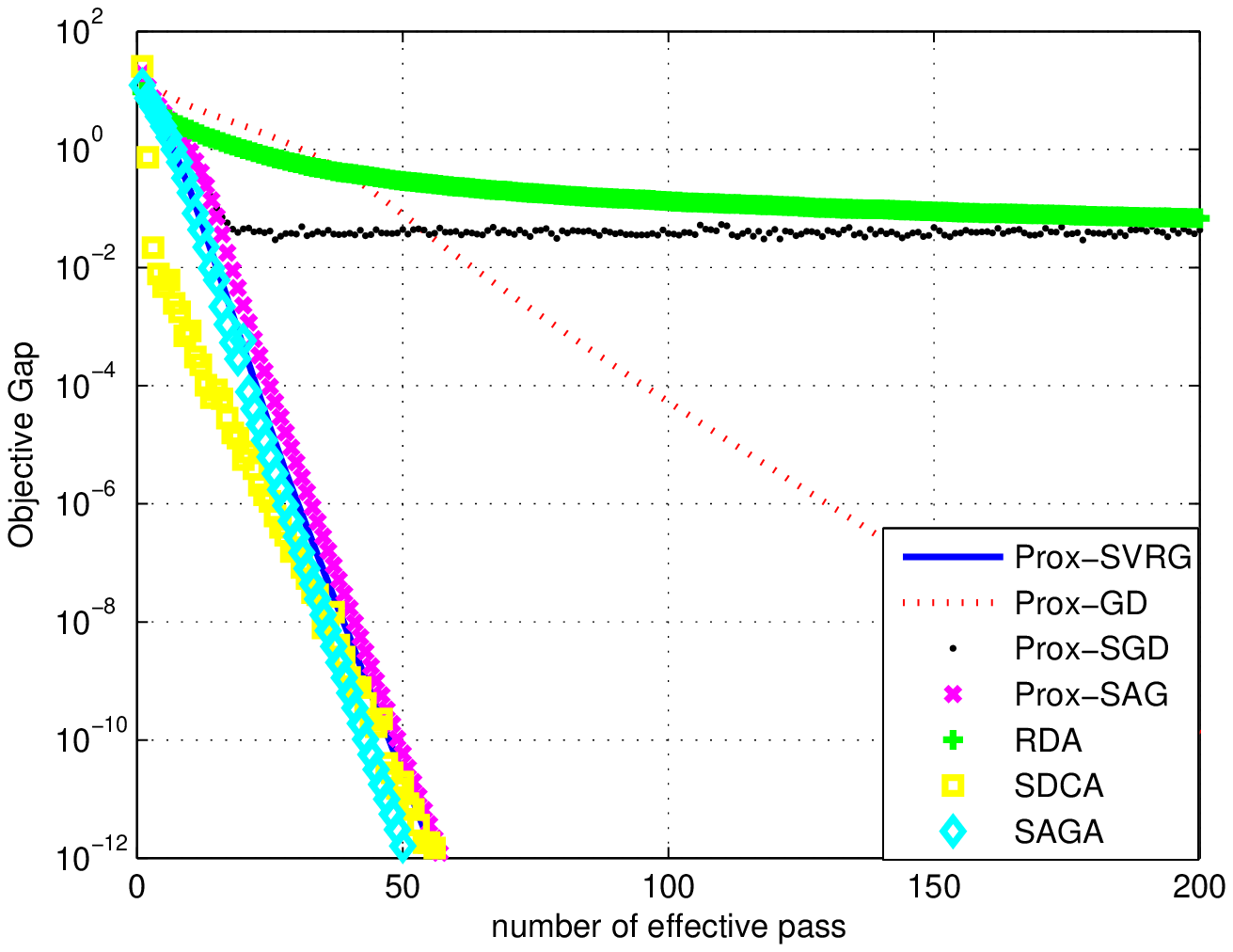}
			\caption{$n=2500, p=3000, s=50, \gamma_w=0.05$}
		\end{subfigure}
		\begin{subfigure}[b]{0.45\textwidth}
			\centering
			\includegraphics[width=\textwidth]{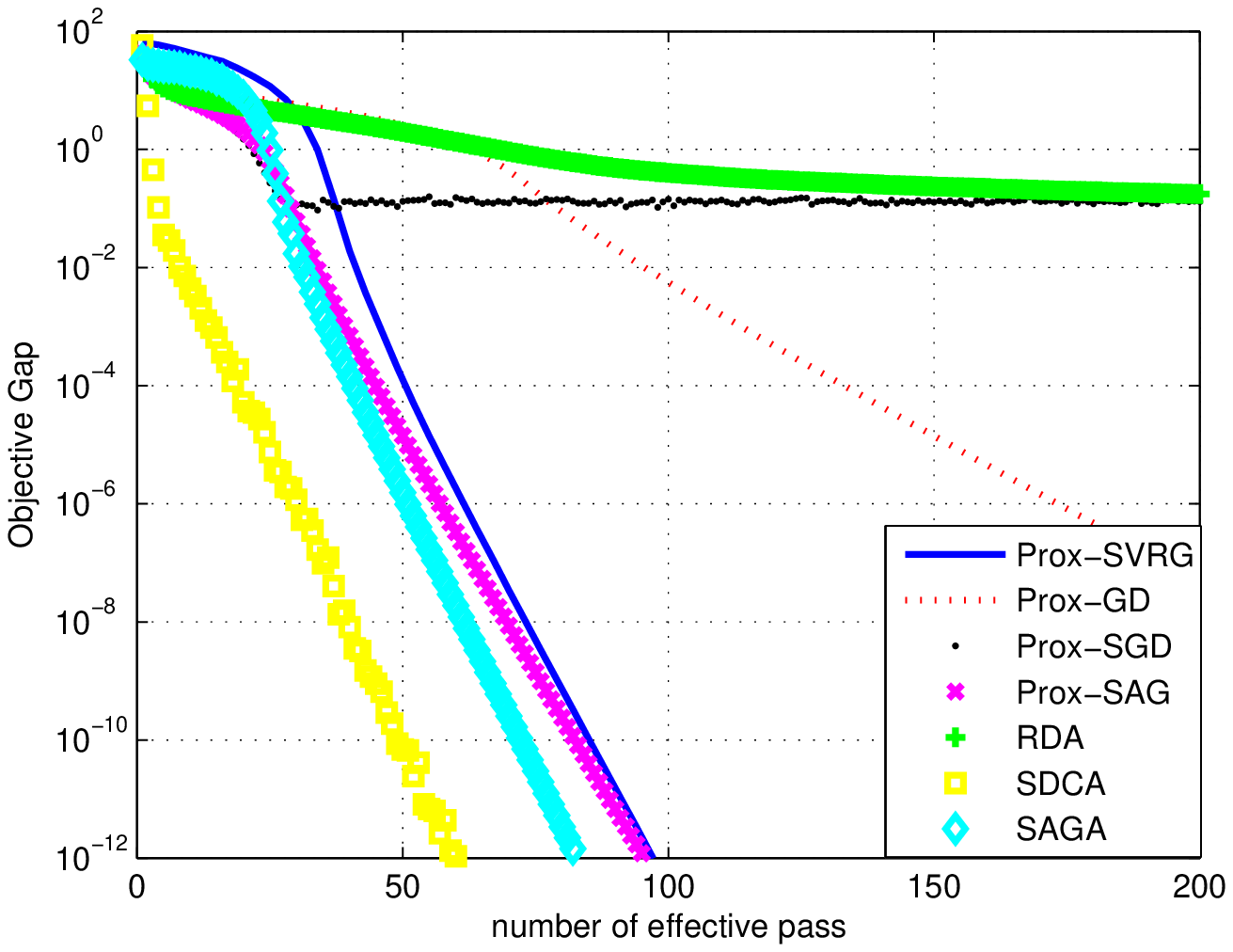}
			\caption{$n=2500,p=5000, s=100,\gamma_w=0.1$}
		\end{subfigure}
		\caption{Results on corrected Lasso. The x-axis is the number of pass over the dataset. y-axis is the objective gap $F(w_k)-F(\hat{w})$ with  log scale. We try two different settings. In the first figure $n=2500, p=3000, s=50, \gamma_w=0.05$. In the second figure $n=2500,p=5000, s=100,\gamma_w=0.1$.}\label{fig:corrected_lasso}
	\end{figure}

	In both setting, we see $\tilde{\kappa}> \tilde{\lambda}+\gamma_\varsigma$, thus according to our theory, SDCA converges linearly.  In both figures (a) and (b), SDCA, Prox-SVRG, Prox-SAG and Prox-GD converge linearly. SDCA performs better in the second setting.  SGD and RDA converge slowly due to the large variance in gradient.
	
	\subsubsection{SCAD}
	The way to generate data is same with Lasso. Here  $x_i\in \mathbb{R}^p$ is drawn from normal distribution $N(0,2I)$ to satisfy the requirement of on $\tilde{\kappa}$, $\gamma_\varsigma$ and $\tilde{\lambda} $.  We set $\lambda=0.05$ in the formulation and choose $\tilde{\lambda}=0.1$  in SDCA.   We present the result in Figure \ref{fig:SCAD}. We try two different settings on $n$, $p$, $s$, $\zeta$. In both case, SDCA, Prox-SVRG, Prox-SAG, converge linearly with similar performance. Prox-GD also converges linearly but with slower rate. According to our theory,  $\tilde{\kappa}\geq 1$ and $\tilde{\lambda}+\mu\leq 0.5$ in both cases, thus SDCA can converge linearly and the simulation results verify our theory.
	
	\begin{figure}[h]
		\begin{subfigure}[b]{0.45\textwidth}
			\centering
			\includegraphics[width=\textwidth]{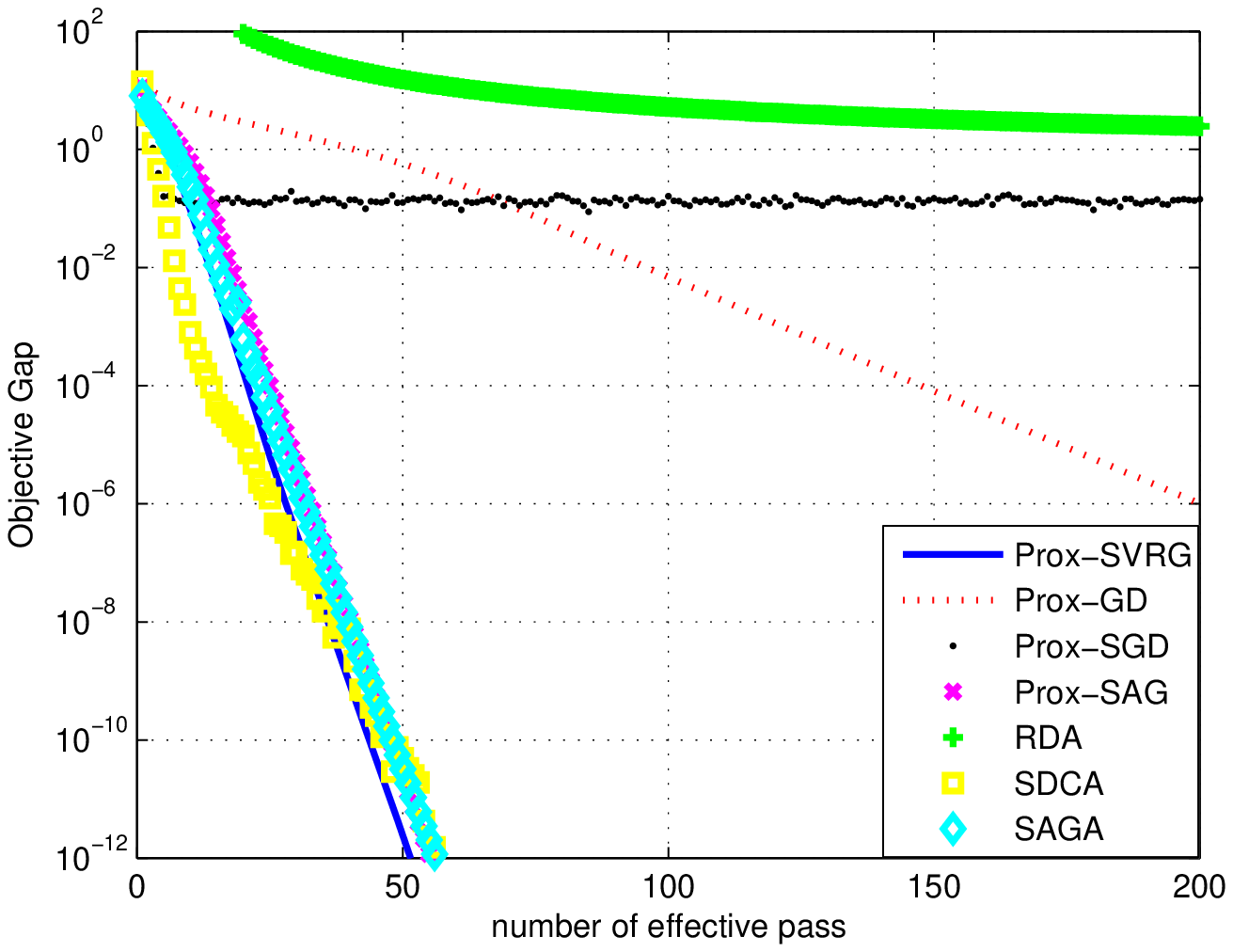}
			\caption{$n=3000, p=2500, s=30, \zeta=4.5 $ }
		\end{subfigure}
		\begin{subfigure}[b]{0.45\textwidth}
			\centering
			\includegraphics[width=\textwidth]{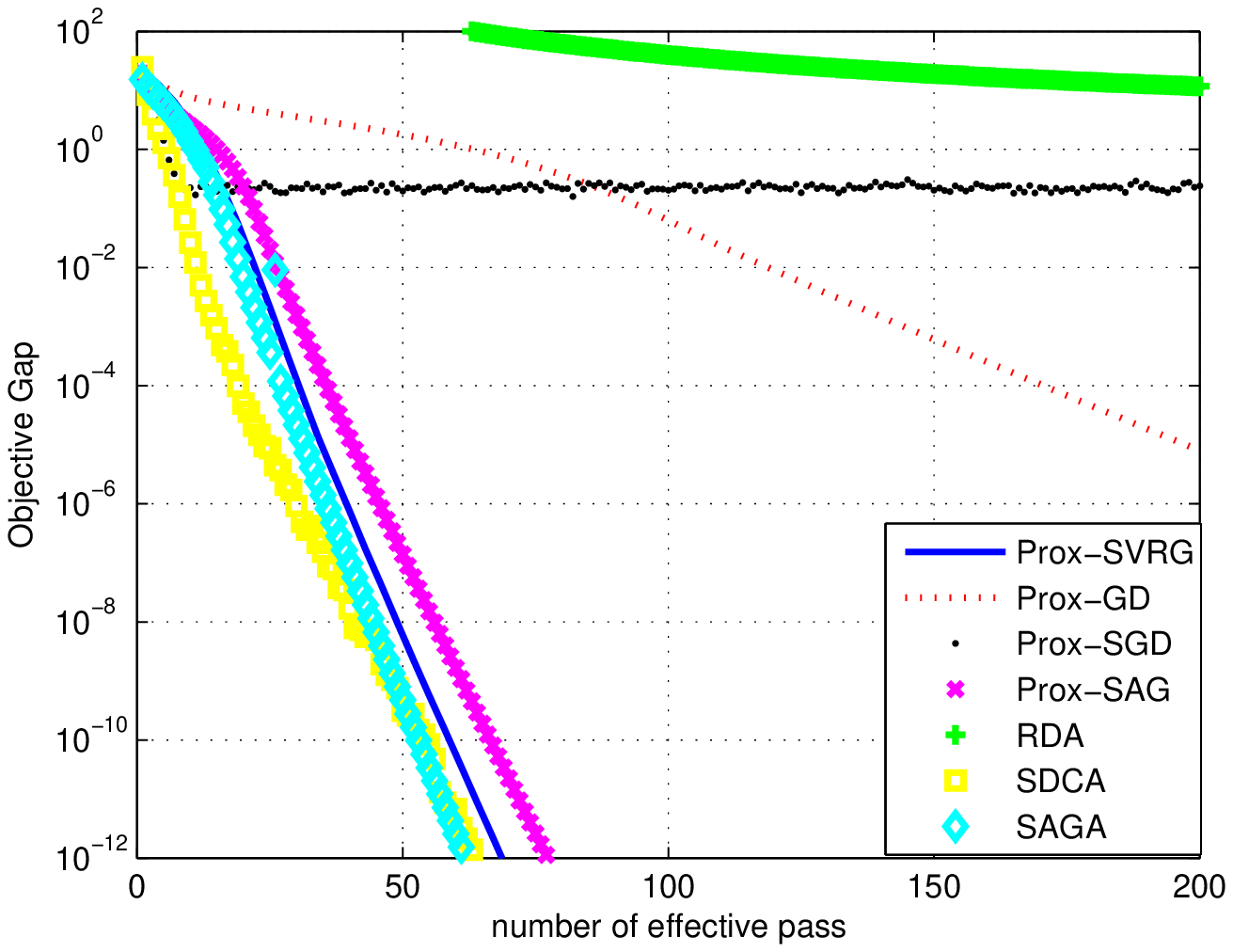}
			\caption{$n=2500, p=5000, s=50, \zeta=3.7 $  }
		\end{subfigure}
		\caption{Results on SCAD. The x-axis is the number of pass over the dataset. y-axis is the objective gap $F(w_k)-F(\hat{w})$ with  log scale. } \label{fig:SCAD}
	\end{figure} 
	
	\subsection{Real dataset}

	\subsubsection{Sparse Classification Problem }
	In this section, we evaluate the performance of the algorithms when solving the logistic regression with $\ell_1$ regularization: $ \min_{w} \sum_{i=1}^{n} \log (1+\exp (-y_ix_i^Tw))+\lambda \|w\|_1.$
	We conduct experiments on two real-world data sets, namely, rcv1 \cite{lewis2004rcv1} and sido0 \cite{SIDO}.  The regularization parameters are set as $\lambda=2\cdot 10^{-5}$ in rcv1 and $\lambda=10^{-4}$ in sido0, as suggested in \citet{xiao2014proximal}.  For SDCA, we choose $\tilde{\lambda}=0.002$ and $\tilde{\lambda}=0.001$ in these two experiments, respectively.

	\begin{figure}[h]
		\begin{subfigure}[b]{0.5\textwidth}
			\includegraphics[width=\textwidth]{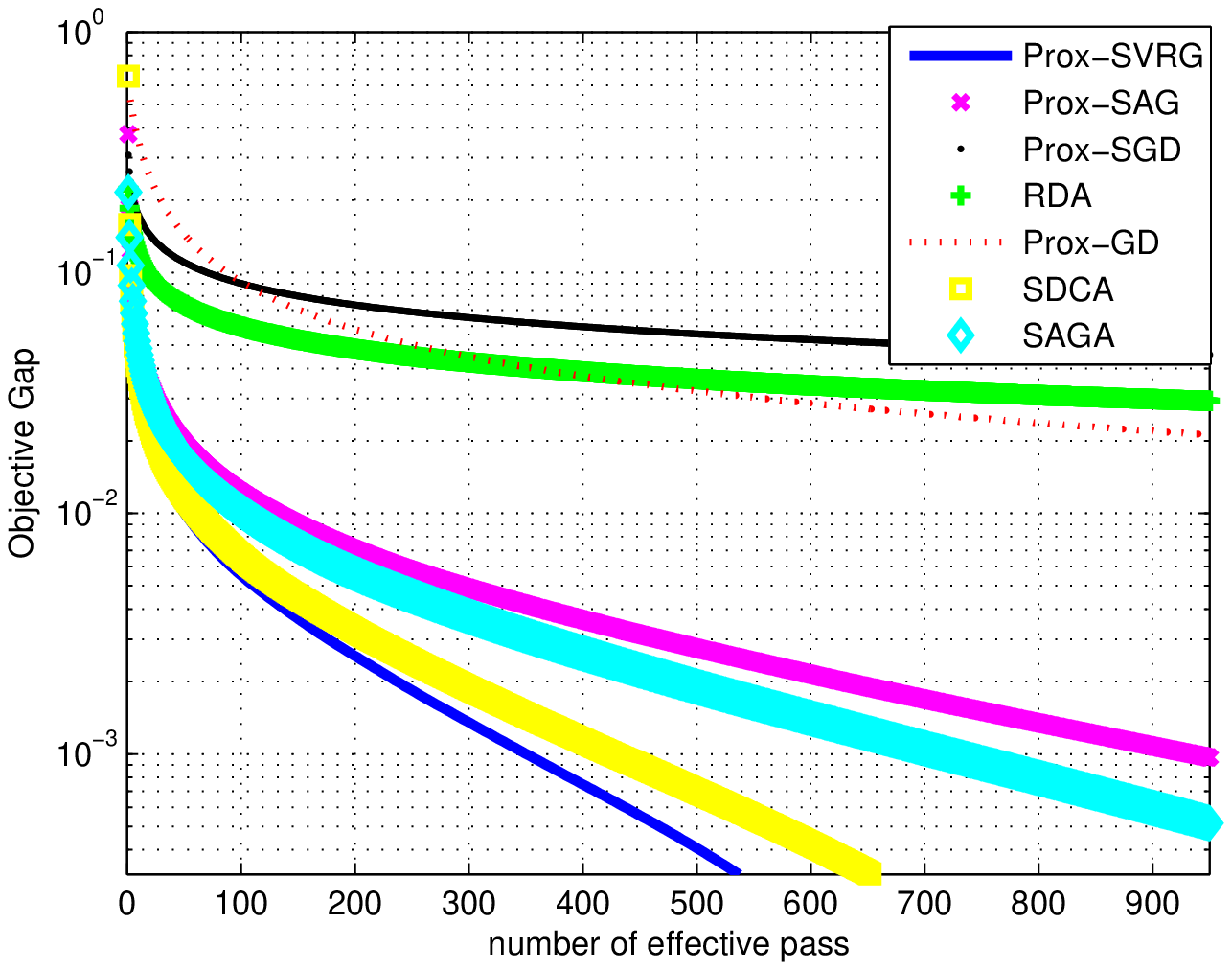}
			\caption{rcv1}\label{rcv1}
		\end{subfigure}
		\begin{subfigure}[b]{0.5\textwidth}
			\includegraphics[width=\textwidth]{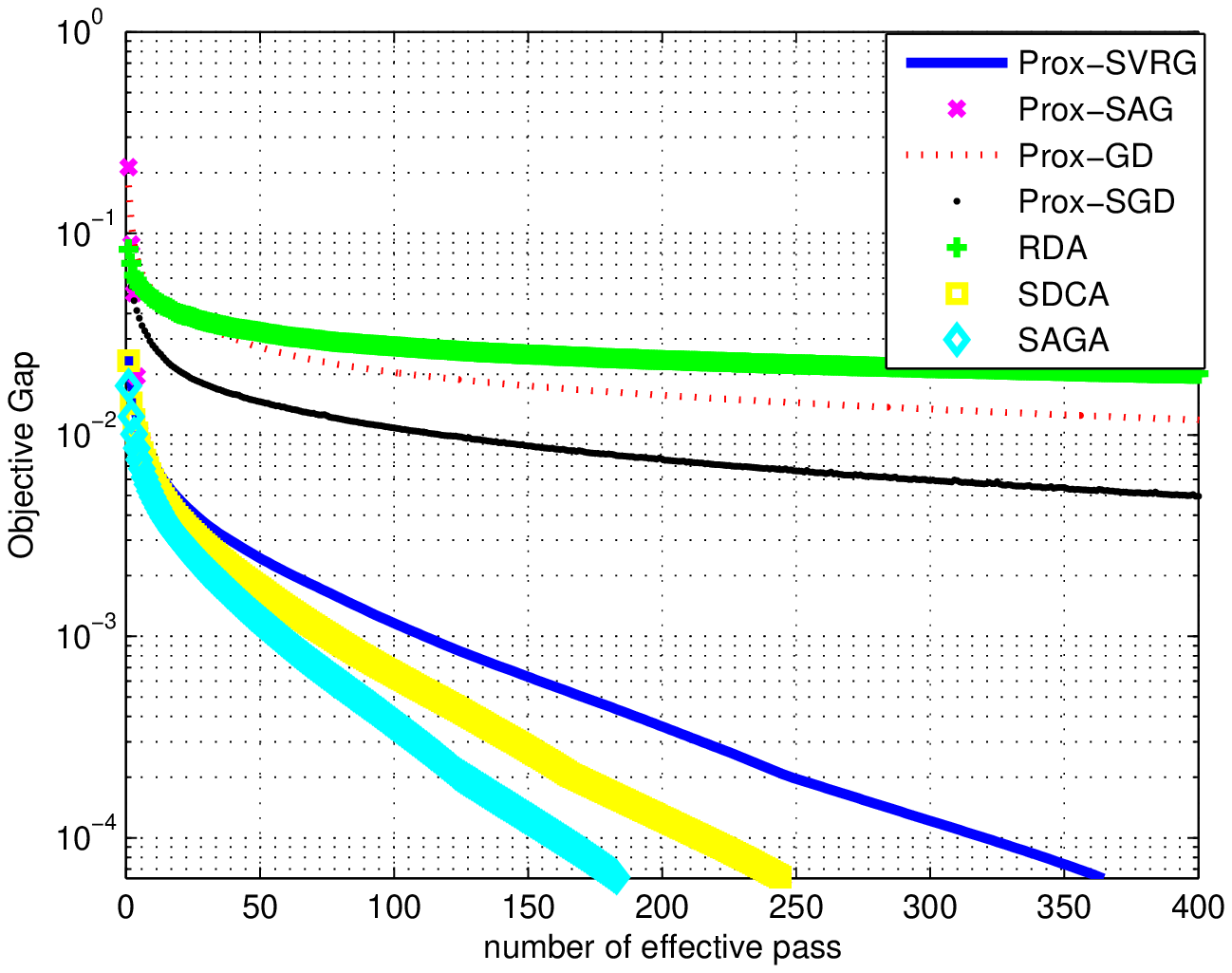}
			\caption{sido0}\label{sido0}
		\end{subfigure}
		\caption{The x-axis is the number of pass over the dataset, y-axis is the objective gap in the log-scale.}
	\end{figure}
	
	
	In Figure \ref{rcv1} and \ref{sido0} we report the performance of different algorithms.  In Figure \ref{rcv1} , Prox-SVRG performs best, and closely followed by SDCA, SAGA, and then Prox-SAG. We observe that Prox-GD converges much slower, albeit in theory it should converges with a linear rate \cite{agarwal2010fast}, possibly because its contraction factor is close to one. Prox-SGD and RDA converge slowly due to the variance in the stochastic gradient. The objective gaps of them remain significant even after 1000 passes of the whole dataset. In Figure \ref{sido0}, similarly as before, Prox-SVRG, SAGA, SDCA and Prox-SAG (some part of Prox-SAG  overlaps with SDCA)  perform well. On this dataset, SAGA performs best followed by SDCA , Prox-SAG and then Prox-SVRG. The performance of  Prox-GD is even worse   than   Prox-SGD.   RDA converges the slowest.
	
	\subsubsection{Sparse Regression Problem }
	
	In Figure \ref{IJCNN1}, we present the result of Lasso on IJCNN1 dataset ($n=49990,p=22$)  \cite{IJCNN1} with $\lambda$=0.02. It is easy to see, the performance of SDCA is best and then followed by SAGA, Prox-SAG, SVRG and Prox-GD. Prox-SGD and RDA does not work well. 
		We apply the linear regression with SCAD regularization on IJCNN1 dataset \cite{IJCNN1} and present the result in Figure \ref{IJCNN_SCAD}. In this dataset, SAGA and SDCA have almost identical performance, then followed by Prox-SVRG, Prox-SAG and Prox-GD. Prox-SGD converge fast at beginning, but has a large optimality gap ($10^{-4}$). RDA does not work at all.
		  In Figure \ref{boston_housing},  we consider a group sparse regression problem on the Boston Housing dataset ($n=506,p=13$)  \cite{uci:2013}.  As suggested in \citet{swirszcz2009grouped,xiang2014simultaneous}, to take into account the non-linear relationship between variables and response, up to third-degree polynomial expansion is applied on each feature. In particular, terms $x$, $x^2$ and $x^3$ are grouped together. We consider group Lasso model on this problem with $\lambda=0.1$. We choose the setting $m=2n$ in SVRG and $\tilde{\lambda}=0.1$ in SDCA. 	Figure \ref{boston_housing} shows the objective gap of various algorithms versus the number of passes over the dataset. Prox-SVRG, SDCA, SAGA and Prox-SAG have almost same performance. Prox-SGD does not converge: the objective gap oscillates between $0.1$ and $1$. Both the Prox-GD and RDA converge, but with much slower rates.

	\begin{figure}
		\begin{subfigure}[b]{0.3\textwidth}
			\includegraphics[width=\textwidth]{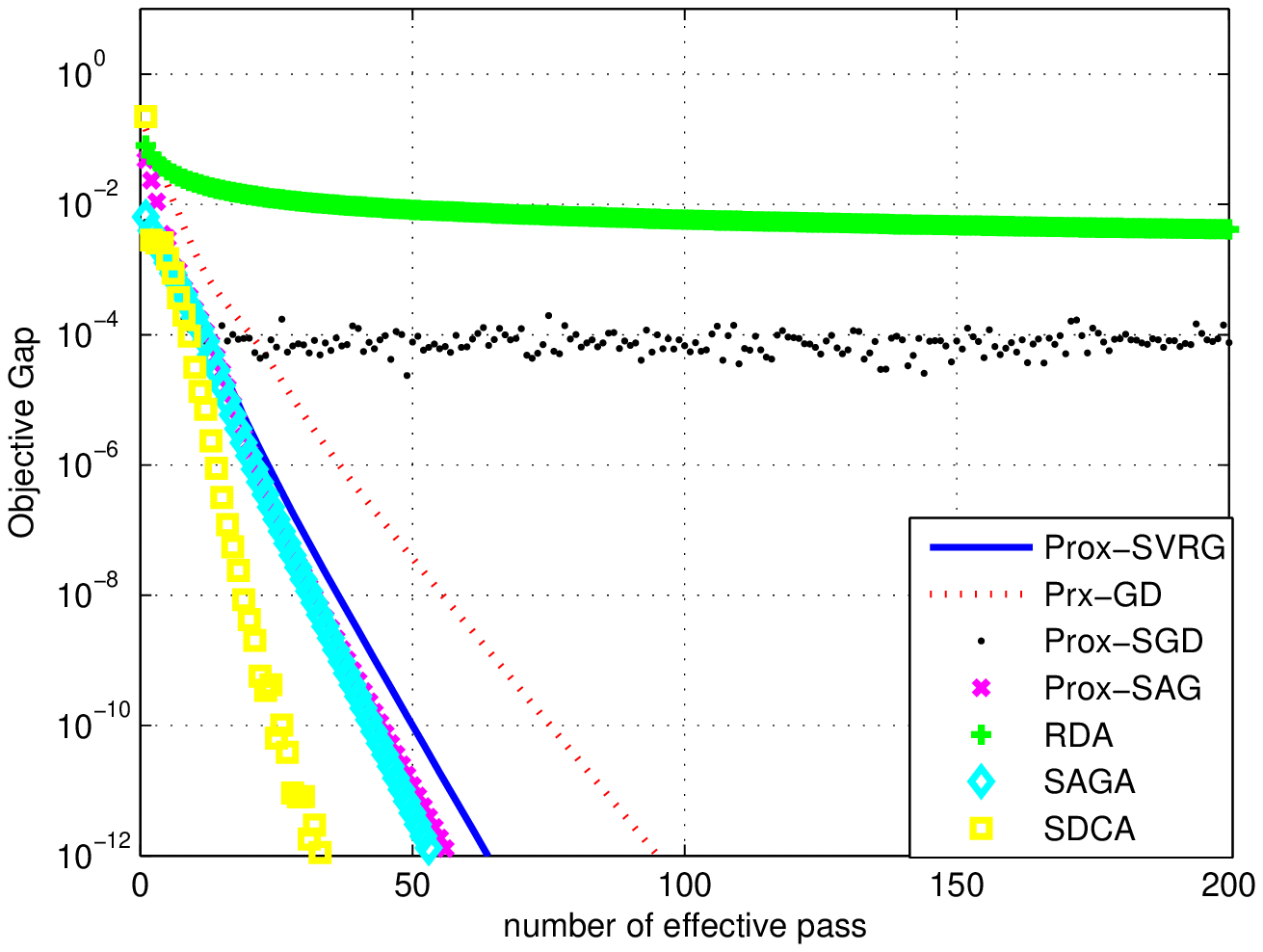}
			\caption{Lasso}\label{IJCNN1}
		\end{subfigure}
	  \begin{subfigure}[b]{0.3\textwidth}
				\includegraphics[width=\textwidth]{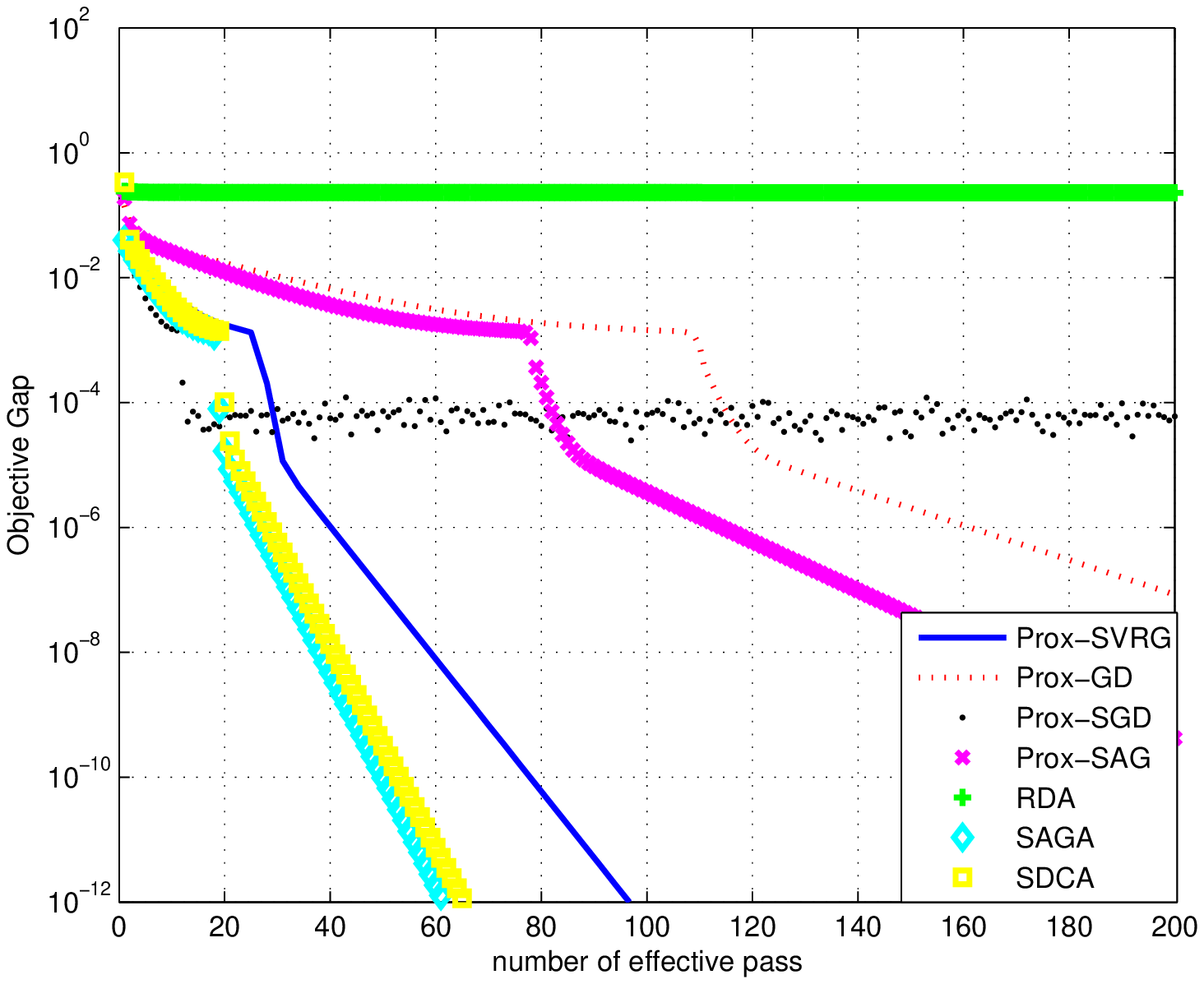}
				\caption{ SCAD }\label{IJCNN_SCAD}
	  \end{subfigure}
		\begin{subfigure}[b]{0.3\textwidth}
			\includegraphics[width=\textwidth]{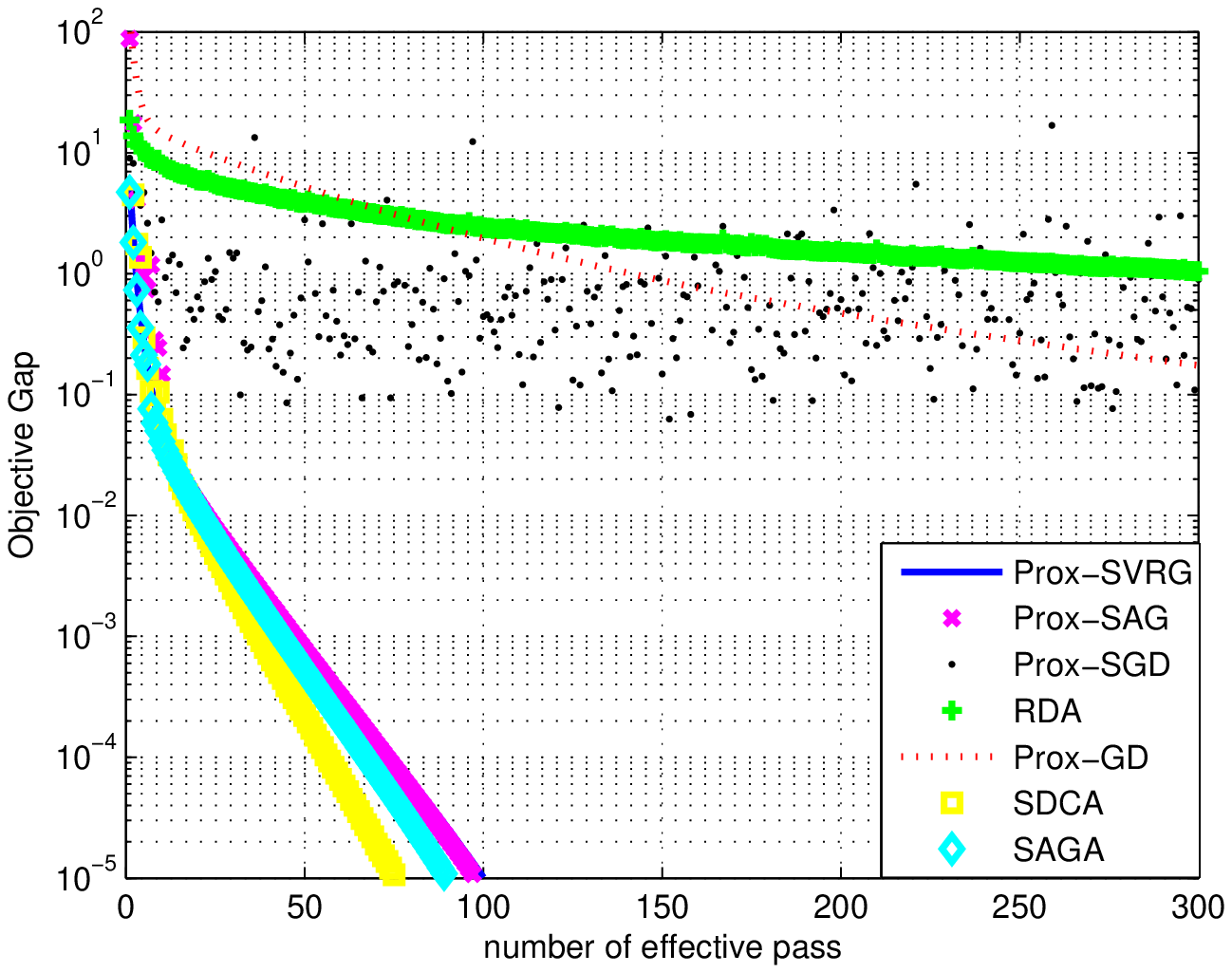}
			\caption{Group Lasso}\label{boston_housing}
		\end{subfigure}
		
		\caption{The x-axis is the number of pass over the dataset, y-axis is the objective gap in the log-scale.}
	\end{figure}

	\section{Conclusion and future work}
	
	In this paper, we adapt SDCA into a dual-free form to solve  non-strongly convex problems  and non-convex problems. Under the condition of RSC, we prove that this dual-free SDCA　 converges with linear rates which covers several important statistical models. 
	From a high level, our results re-confirmed a well-observed  phenomenon that statistically easy problems   tend to be more computationally friendly. We believe this  intriguing fact indicates that there are fundamental relationships between statistics and optimization,  and understanding such relationships may shed deep insights.

\appendix	
	
	\section{Proofs}\label{app.proof}
	
	To begin with, we present a technical Lemma that will be used repeatedly in the following proofs.
	\begin{lemma}\label{Lemma.smooth}
		Suppose function f(x) is convex and L smooth then we have
		$$ f(x)-f(y)-\langle \nabla f(y),x-y \rangle\geq \frac{1}{2L} \|\nabla f(x)-\nabla f(y)\|_2^2.$$
	\end{lemma}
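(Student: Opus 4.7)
The plan is to exploit the auxiliary trick of shifting $f$ by a linear term so that the point $y$ becomes the global minimizer, and then to use the standard quadratic upper bound from $L$-smoothness to squeeze out the gradient-norm lower bound. This is a textbook co-coercivity argument (see, e.g., Nesterov's \emph{Introductory Lectures}), and I will outline the steps that I would follow.

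First, I would fix $y$ and define the auxiliary function $\phi(z) := f(z) - \langle \nabla f(y), z \rangle$. Because $f$ is convex and $L$-smooth and adding a linear function preserves both properties, $\phi$ is itself convex and $L$-smooth with $\nabla \phi(z) = \nabla f(z) - \nabla f(y)$. The key observation is that $\nabla \phi(y) = 0$, so by convexity $y$ is a global minimizer of $\phi$, i.e.\ $\phi(y) \leq \phi(z)$ for every $z$.

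Next I would apply the descent lemma that follows from $L$-smoothness, namely $\phi(z) \leq \phi(x) + \langle \nabla \phi(x), z-x\rangle + \tfrac{L}{2}\|z-x\|_2^2$, and minimize the right-hand side over $z$. The minimizer is $z^\star = x - \tfrac{1}{L}\nabla \phi(x)$, and plugging back gives $\phi(z^\star) \leq \phi(x) - \tfrac{1}{2L}\|\nabla \phi(x)\|_2^2$. Combining with $\phi(y) \leq \phi(z^\star)$ yields
\[
\phi(y) \;\leq\; \phi(x) - \frac{1}{2L}\|\nabla f(x) - \nabla f(y)\|_2^2.
\]
Unwinding the definition of $\phi$ and rearranging produces exactly $f(x)-f(y)-\langle \nabla f(y), x-y\rangle \geq \tfrac{1}{2L}\|\nabla f(x)-\nabla f(y)\|_2^2$, which is the claim.

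There is no real obstacle here; the only step that requires a little care is justifying that the unconstrained minimizer $z^\star$ is admissible (i.e.\ that $\phi$ is defined on all of $\mathbb{R}^p$ so the infimum is attained at a critical point), and that using the descent inequality together with the global-minimizer property of $y$ is valid without extra assumptions. Both are immediate under the stated hypotheses of convexity and $L$-smoothness on the whole domain.
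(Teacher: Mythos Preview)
Your proposal is correct and is essentially identical to the paper's own proof: the paper also fixes the base point (called $x_0$ there), defines the shifted function $p(z)=f(z)-\langle\nabla f(x_0),z\rangle$, observes that $x_0$ is its global minimizer by convexity, and then applies the $L$-smooth descent inequality at the point $x-\tfrac{1}{L}\nabla p(x)$ to obtain the gradient-norm lower bound. The only differences are notational.
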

	\begin{proof}
		Define $p(x)=f(x)-\langle \nabla f(x_0),x\rangle $. It is obvious that $p(x_0)$ is the optimal value of $p(x)$  due to convexity. 
		Since $f(x)$ is $L$ smooth, so is $p(x)$, and we have
		$$ p(x_0)\leq p\left(x-\frac{1}{L}\nabla p(x)\right)\leq p(x)-\frac{1}{2L} \|\nabla p(x)\|_2^2.  $$
		That is	
		$$f(x_0)-\langle \nabla f(x_0),x_0 \rangle\leq f(x)-\langle \nabla f(x_0),x \rangle-\frac{1}{2L}\|\nabla f(x)-\nabla f(x_0)\|_2^2.$$	
		Rearrange the terms, we have
		$$ f(x)-f(x_0)-\langle f(x_0),x-x_0 \rangle\geq \frac{1}{2L}\|\nabla f(x)-\nabla f(x_0)\|_2^2.$$
	\end{proof}
	The following Lemma presents a well-known fact on conjugate function of a strongly convex function. We extract it from Theorem 1 of \cite{nesterov2005smooth}.
	\begin{lemma}\label{Lemma.conjugate}
		Define $p^*(v)=\max_{w\in \Omega} \langle w,v\rangle-p(w)$, where $\Omega$ is a convex compact set, $p(w)$ is a 1-strongly convex function, then $p^*(v)$ is $1$-smooth and $\nabla p^*(v)=\bar{w}$ where $\bar{w}=\arg\max_{w\in \Omega} \langle w,v\rangle-p(w)$.
	\end{lemma}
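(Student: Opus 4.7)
The plan is to prove the lemma in three small steps: (i) establish existence and uniqueness of the maximizer $\bar w(v)$, (ii) identify $\bar w(v)$ as (a subgradient, then) the gradient of $p^*$, and (iii) show that the map $v\mapsto \bar w(v)$ is $1$-Lipschitz, which is exactly $1$-smoothness of $p^*$.

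For step (i), for each fixed $v$ the objective $w\mapsto \langle w,v\rangle-p(w)$ is $1$-strongly concave because $p$ is $1$-strongly convex, and $\Omega$ is convex and compact. Hence the maximizer exists and is unique, so $\bar w(v)$ is well-defined. For step (ii), $p^*$ is convex since it is a pointwise maximum of affine functions of $v$. The inequality
\[
p^*(v+h)\ge \langle \bar w(v),v+h\rangle-p(\bar w(v))=p^*(v)+\langle \bar w(v),h\rangle
\]
shows that $\bar w(v)\in\partial p^*(v)$ for every $v$. Once we prove in step (iii) that $\bar w$ is continuous (in fact Lipschitz), the subdifferential is a singleton, and therefore $p^*$ is differentiable with $\nabla p^*(v)=\bar w(v)$.

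The main technical step is (iii). The plan is the standard strong-concavity ``swap'' trick. Fix any $v_1,v_2$. By $1$-strong concavity of the objective defining $p^*(v_1)$, and since $\bar w(v_2)\in\Omega$,
\[
\langle \bar w(v_1),v_1\rangle-p(\bar w(v_1))\ge \langle \bar w(v_2),v_1\rangle-p(\bar w(v_2))+\tfrac12\|\bar w(v_1)-\bar w(v_2)\|_2^2,
\]
and symmetrically with the roles of $v_1,v_2$ swapped. Adding the two inequalities cancels the $p(\cdot)$ terms and yields
\[
\langle \bar w(v_1)-\bar w(v_2),\,v_1-v_2\rangle\ \ge\ \|\bar w(v_1)-\bar w(v_2)\|_2^2.
\]
Cauchy--Schwarz on the left then gives $\|\bar w(v_1)-\bar w(v_2)\|_2\le\|v_1-v_2\|_2$, i.e.\ $\bar w$ is $1$-Lipschitz. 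Combined with step (ii), this is precisely the statement that $\nabla p^*$ is $1$-Lipschitz, so $p^*$ is $1$-smooth.

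The only genuine obstacle is making step (ii) rigorous: writing ``$p^*$ is differentiable and $\nabla p^*(v)=\bar w(v)$'' requires ruling out multi-valued subdifferentials, and that in turn depends on the Lipschitz (or at least continuous) dependence of $\bar w$ on $v$ proved in step (iii). I would therefore present the material in the order (i), (iii), (ii) in the actual write-up, or alternatively invoke Danskin's theorem for a convex maximum to get differentiability directly, and then use (iii) solely to quantify the Lipschitz constant of $\nabla p^*$.
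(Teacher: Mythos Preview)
Your proof is correct; the three-step argument (existence/uniqueness, subgradient identification, Lipschitz via the swap trick) is the standard elementary route to this fact, and your handling of the constrained maximum is fine because the quadratic growth inequality $h(\bar w)\ge h(w)+\tfrac12\|\bar w-w\|_2^2$ for a $1$-strongly concave $h$ holds for any $w\in\Omega$ by convexity of $\Omega$ and optimality of $\bar w$, without needing differentiability of $p$.

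The paper, however, does not prove this lemma at all: it simply states it as a well-known fact and attributes it to Theorem~1 of Nesterov (2005). So your approach is genuinely different in that you supply a self-contained argument where the paper defers to a reference. What your route buys is independence from the cited source and an explicit display of where the constant $1$ comes from (the strong-convexity modulus of $p$ feeds directly into the Lipschitz constant of $\bar w$ via the swap inequalities). What the paper's citation buys is brevity, and also coverage of a slightly more general setting in Nesterov's paper (smoothing of nonsmooth functions via strongly convex prox-functions). Your remark about reordering (i), (iii), (ii) or invoking Danskin is well taken; either fix makes the differentiability step fully rigorous, and the Lipschitz bound you derive in (iii) is exactly the co-coercivity that yields the $1$-smoothness upper bound $p^*(v_2)\le p^*(v_1)+\langle \bar w(v_1),v_2-v_1\rangle+\tfrac12\|v_2-v_1\|_2^2$ directly.
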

	\subsection{Proof of results for Convex $F(w)$}
	In this section we establish the results for convex $F(w)$, namely, Theorem \ref{main_theorem}. 
	Recall the problem we want to optimize is
	\begin{equation} \label{obj}
	\min_{w\in\Omega} F(w):=f(w)+\lambda 
	g(w):=\frac{1}{n}\sum_{i=1}^{n} f_i (w)+\lambda g(w),
	\end{equation}
	where $\Omega$ is $ \{w\in \mathbb{R}^p  | g(w)\leq \rho\}$.
	Remind that instead of directly minimizing the above objective function, we look at the following form.
	$$ \min_{w\in\Omega} F(w):=\phi(w)+\tilde{\lambda}\tilde{g}(w)= \frac{1}{1+n} \sum_{i=1}^{n+1}\phi_i(w)+ \tilde{\lambda} \tilde{g}(w).$$ 
	Also remind that $\tilde{L}_i$ is the smooth parameter of $\phi_i(w) $ and we define $ \tilde{L}\triangleq\frac{1}{n+1} \sum_{i=1}^{n+1} \tilde{L}_i.$

	Notice the Algorithm 1  keeps the  following relation $v^{t-1}=\frac{1}{\tilde{\lambda} (n+1)} \sum_{i=1}^{n+1}a_i^{t-1}$, and thus we have: 
	\begin{equation}\label{unbias_estimator}
	\begin{split}
	E (\eta_i  (\nabla \phi_i(w^{t-1})+a_i^{t-1}))&=\frac{\eta}{n+1} \sum_{i=1}^{n+1} (\nabla \phi_i(w^{t-1})+a_i^{t-1})\\
	&=\eta (\frac{1}{n+1}\sum_{i=1}^{n+1} \nabla\phi_i(w^{t-1})+\tilde{\lambda} v^{t-1})\\
	& =\eta (\nabla \phi(w^{t-1})+\tilde{\lambda} v^{t-1}).
	\end{split}
	\end{equation}

	Before we start the proof of the main theorem , we present several technical lemmas.  The following two lemmas are similar to its batched counterpart in \cite{agarwal2010fast}.
	\begin{lemma}\label{lemma.cone}
		Suppose $f(w)$ is convex and $g(w)$ is decomposable with respect to $(\mathcal{A},\mathcal{B})$, $g(w^*)\leq \rho$, if we choose $\lambda\geq 2g^*(\nabla f(w^*))$, define the error term $\Delta^*=\hat{w}-w^*$, then we have the following condition holds 
		$$ g(\Delta^*_{\mathcal{B}^\perp})\leq 3 g(\Delta^*_{\mathcal{B}})+4g(w^*_{\mathcal{A}^\perp}), $$
		which implies $ g (\Delta^*)\leq g(\Delta^*_{\mathcal{B}^\perp})+g(\Delta^*_{\mathcal{B}})\leq 4 g(\Delta^*_{\mathcal{B}})+4g(w^*_{\mathcal{A}^\perp}).$
	\end{lemma}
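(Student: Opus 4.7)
This is the standard ``cone'' or ``basic inequality'' lemma from the Negahban--Wainwright--Yu framework, so the plan is to combine three ingredients: (i) the optimality of $\hat{w}$, (ii) convexity of $f$ together with generalized H\"older's inequality $\langle u,v\rangle\le g^*(u)g(v)$, and (iii) decomposability of $g$ with respect to $(\mathcal{A},\mathcal{B})$. The driving identity is that since $\hat{w}$ is feasible and optimal and $w^*$ is feasible (as $g(w^*)\le\rho$), we have $f(\hat{w})+\lambda g(\hat{w})\le f(w^*)+\lambda g(w^*)$, which rearranges to $\lambda\bigl(g(w^*)-g(\hat w)\bigr)\ge f(\hat w)-f(w^*)$.

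First I would lower-bound the right-hand side: by convexity of $f$, $f(\hat w)-f(w^*)\ge\langle\nabla f(w^*),\Delta^*\rangle$, and by H\"older's inequality with the dual norm and the hypothesis $\lambda\ge 2g^*(\nabla f(w^*))$, this gives $f(\hat w)-f(w^*)\ge -g^*(\nabla f(w^*))\,g(\Delta^*)\ge -\tfrac{\lambda}{2}g(\Delta^*)$. Thus
\begin{equation*}
g(w^*)-g(\hat w)\ge -\tfrac{1}{2}\bigl(g(\Delta^*_{\mathcal{B}})+g(\Delta^*_{\mathcal{B}^\perp})\bigr),
\end{equation*}
where I have already split $g(\Delta^*)\le g(\Delta^*_{\mathcal{B}})+g(\Delta^*_{\mathcal{B}^\perp})$ by the triangle inequality.

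Next I would upper-bound $g(w^*)-g(\hat w)$ using decomposability. Writing $w^*=w^*_{\mathcal{A}}+w^*_{\mathcal{A}^\perp}$ and $\hat w = w^*+\Delta^*$, I note that $w^*_{\mathcal{A}}\in\mathcal{A}\subseteq\mathcal{B}$ and $\Delta^*_{\mathcal{B}^\perp}\in\mathcal{B}^\perp$, so decomposability gives $g(w^*_{\mathcal{A}}+\Delta^*_{\mathcal{B}^\perp})=g(w^*_{\mathcal{A}})+g(\Delta^*_{\mathcal{B}^\perp})$. Applying the triangle inequality twice to $g(\hat w)=g(w^*_{\mathcal{A}}+\Delta^*_{\mathcal{B}^\perp}+w^*_{\mathcal{A}^\perp}+\Delta^*_{\mathcal{B}})$, I peel off the two ``small'' terms to obtain $g(\hat w)\ge g(w^*_{\mathcal{A}})+g(\Delta^*_{\mathcal{B}^\perp})-g(\Delta^*_{\mathcal{B}})-g(w^*_{\mathcal{A}^\perp})$; combining with $g(w^*)\le g(w^*_{\mathcal{A}})+g(w^*_{\mathcal{A}^\perp})$ yields
\begin{equation*}
g(w^*)-g(\hat w)\le 2g(w^*_{\mathcal{A}^\perp})+g(\Delta^*_{\mathcal{B}})-g(\Delta^*_{\mathcal{B}^\perp}).
\end{equation*}

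Finally I would chain the two inequalities on $g(w^*)-g(\hat w)$ and rearrange to isolate $g(\Delta^*_{\mathcal{B}^\perp})$: the term $-\tfrac{1}{2}g(\Delta^*_{\mathcal{B}^\perp})$ from the lower bound and $-g(\Delta^*_{\mathcal{B}^\perp})$ from the upper bound combine into $\tfrac{1}{2}g(\Delta^*_{\mathcal{B}^\perp})\le 2g(w^*_{\mathcal{A}^\perp})+\tfrac{3}{2}g(\Delta^*_{\mathcal{B}})$, giving the claimed $g(\Delta^*_{\mathcal{B}^\perp})\le 3g(\Delta^*_{\mathcal{B}})+4g(w^*_{\mathcal{A}^\perp})$. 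The second claim is then immediate from $g(\Delta^*)\le g(\Delta^*_{\mathcal{B}})+g(\Delta^*_{\mathcal{B}^\perp})$. The only subtle step is the decomposability manipulation in the third paragraph, where one must be careful to group $w^*_{\mathcal{A}}$ with $\Delta^*_{\mathcal{B}^\perp}$ (exploiting $\mathcal{A}\subseteq\mathcal{B}$) before invoking the triangle inequality to push the $\mathcal{A}^\perp$ and $\mathcal{B}$ components out as error terms; the rest is routine.
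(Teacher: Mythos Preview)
Your proposal is correct and follows essentially the same approach as the paper's proof: optimality of $\hat w$ combined with convexity of $f$ and H\"older's inequality to lower-bound $g(w^*)-g(\hat w)$, the decomposability/triangle-inequality manipulation on $\hat w = w^*_{\mathcal{A}}+w^*_{\mathcal{A}^\perp}+\Delta^*_{\mathcal{B}}+\Delta^*_{\mathcal{B}^\perp}$ to upper-bound it, and then the same rearrangement to isolate $g(\Delta^*_{\mathcal{B}^\perp})$. The only cosmetic difference is that you frame the argument as sandwiching $g(w^*)-g(\hat w)$ between two bounds, whereas the paper writes the same chain of inequalities in a slightly different order; the content is identical.
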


	\begin{proof}
		Using the optimality of $\hat{w}$, we have
		
		$$ f(\hat{w})+\lambda g(\hat w)-f(w^*)-\lambda g(w^*)\leq 0.$$
		So we get
		$$ \lambda g(w^*)-\lambda g(\hat{w})\geq f(\hat{w})-f(w^*)\geq \langle \nabla f(w^*), \hat{w}-w^*  \rangle\geq -g^*(\nabla f(w ^*)) g (\Delta^*), $$
		where the second inequality holds from the convexity of $f(w)$, and the third one holds by Holder's inequality. Using triangle inequality, we have
		$ g(\Delta^*)\leq g(\Delta^*_{\mathcal{B}})+g(\Delta^*_{\mathcal{B}^{\perp}}),$ which leads to
		\begin{equation}\label{eq:lemma.cone}
		\lambda g(w^*)-\lambda g(\hat{w})\geq -g^*(\nabla f(w ^*)) (g(\Delta^*_{\mathcal{B}})+g(\Delta^*_{\mathcal{B}^{\perp}}) ).
		\end{equation}
		Notice $$\hat{w}=w^*+\Delta^*= w^*_{\mathcal{A}}+w^*_{\mathcal{A}^{\perp}}+\Delta^*_{\mathcal{B}}+\Delta^*_{\mathcal{B}^{\perp}}. $$
		Now we obtain
		\begin{equation}
		\begin{split}
		g  (\hat{w})-g(w^*)& \overset{(a)}{\geq}  g (w^*_{\mathcal{A}}+\Delta^*_{\mathcal{B}^{\perp}})-g(w^*_{\mathcal{A}^{\perp}})-g(\Delta^*_{\mathcal{B}})-g(w^*)\\
		& \overset{(b)}{= }g (w^*_{\mathcal{A}})+g (\Delta^*_{\mathcal{B}^{\perp}})-g(w^*_{\mathcal{A}^{\perp}})-g(\Delta^*_{\mathcal{B}})-g(w^*)\\
		& \overset{(c)}{\geq} g (w^*_{\mathcal{A}})+g (\Delta^*_{\mathcal{B}^{\perp}})-g(w^*_{\mathcal{A}^{\perp}})-g(\Delta^*_{\mathcal{B}})-g(w^*_{\mathcal{A}})-g(w^*_{\mathcal{A}^{\perp}})\\
		&\geq g (\Delta^*_{\mathcal{B}^{\perp}})-2g(w^*_{\mathcal{A}^{\perp}})-g(\Delta^*_{\mathcal{B}}),
		\end{split}
		\end{equation}
		where $(a)$ and $(c)$ hold from the triangle inequality, and $(b)$ uses the decomposability of $g(\cdot)$.

		Substitute  the above to  \eqref{eq:lemma.cone}, and  and use the assumption that $\lambda\geq 2g^*(\nabla f(w^*))$, we obtain
		$$-\frac{\lambda}{2} (g(\Delta^*_{\mathcal{B}})+g(\Delta^*_{\mathcal{B}^{\perp}}) )+\lambda (g (\Delta^*_{\mathcal{B}^{\perp}})-2g(w^*_{\mathcal{A}^{\perp}})-g(\Delta^*_{\mathcal{B}}))\leq 0,$$
		which implies 
		$$ g(\Delta^*_{\mathcal{B}^\perp})\leq 3  g(\Delta^*_{\mathcal{B}})+4g(w^*_{\mathcal{A}^{\perp}}).$$
	\end{proof}

	\begin{lemma}\label{lemma.cone_optimization}
		Suppose	$f(w)$ is convex and $g(w)$ is decomposable with respect to $(\mathcal{A},\mathcal{B})$. If we choose $\lambda\geq 2g^*(\nabla f(w^*))$,  and suppose there exist a given tolerance $\xi$ and $T$ such that $ F(w^t)-F(\hat{w})\leq \xi$ for all $t> T$ ,  then for the error term $\Delta^t=w^t-w^* $ we have
		$$ g(\Delta^t_{\mathcal{B}^\perp})\leq 3 g(\Delta^t_{\mathcal{B}})+4g(w^*_{\mathcal{A}^\perp})+ 2\min \{\frac{\xi}{\lambda}, \rho\}, $$
		which implies $$ g(\Delta^t)\leq 4 g(\Delta^t_{\mathcal{B}})+4g(w^*_{\mathcal{A}^\perp})+ 2 \min \{\frac{\xi}{\lambda},\rho\}. $$
	\end{lemma}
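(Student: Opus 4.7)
The plan is to replay the proof of Lemma \ref{lemma.cone} almost verbatim, carrying an extra additive $\xi$ through the inequality chain, and then to obtain the $\rho$-alternative from feasibility of $w^t$ and $w^*$.

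First, I would convert the suboptimality hypothesis into a statement that compares $w^t$ with $w^*$ (rather than with $\hat w$). Since $\hat w$ is optimal, $F(\hat w)\le F(w^*)$, so the assumption $F(w^t)-F(\hat w)\le \xi$ yields
\[
f(w^t)+\lambda g(w^t)-f(w^*)-\lambda g(w^*)\ \le\ \xi.
\]
Combined with convexity of $f$ and Hölder's inequality (exactly as in Lemma \ref{lemma.cone}), this gives
\[
\lambda\bigl(g(w^*)-g(w^t)\bigr)\ \ge\ -g^*(\nabla f(w^*))\,\bigl(g(\Delta^t_{\mathcal B})+g(\Delta^t_{\mathcal B^\perp})\bigr)-\xi.
\]

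Second, decomposability and the triangle inequality applied to $w^t=w^*_{\mathcal A}+w^*_{\mathcal A^\perp}+\Delta^t_{\mathcal B}+\Delta^t_{\mathcal B^\perp}$ produce the same lower bound on $g(w^t)-g(w^*)$ as in Lemma \ref{lemma.cone}, namely
\[
g(w^t)-g(w^*)\ \ge\ g(\Delta^t_{\mathcal B^\perp})-g(\Delta^t_{\mathcal B})-2g(w^*_{\mathcal A^\perp}).
\]
Plugging this into the previous display and using $\lambda\ge 2g^*(\nabla f(w^*))$ to absorb the $g^*$ term into $\lambda/2$ gives
\[
\tfrac{\lambda}{2}g(\Delta^t_{\mathcal B^\perp})-\tfrac{3\lambda}{2}g(\Delta^t_{\mathcal B})-2\lambda g(w^*_{\mathcal A^\perp})\ \le\ \xi,
\]
which rearranges to $g(\Delta^t_{\mathcal B^\perp})\le 3g(\Delta^t_{\mathcal B})+4g(w^*_{\mathcal A^\perp})+2\xi/\lambda$.

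For the $\rho$-alternative inside the $\min$, I would simply invoke feasibility: both $w^t$ and $w^*$ lie in $\Omega=\{w:g(w)\le\rho\}$, so the triangle inequality yields $g(\Delta^t_{\mathcal B^\perp})\le g(\Delta^t)\le 2\rho$, and the desired bound with $2\rho$ in place of $2\xi/\lambda$ holds trivially (the other terms on the right are nonnegative). Taking the smaller of the two bounds gives the $2\min\{\xi/\lambda,\rho\}$ term. The implied bound on $g(\Delta^t)$ follows from $g(\Delta^t)\le g(\Delta^t_{\mathcal B})+g(\Delta^t_{\mathcal B^\perp})$ and substitution.

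There is no real obstacle here: the only nontrivial point is noticing that $F(\hat w)\le F(w^*)$ lets us transfer the suboptimality gap from $\hat w$ to $w^*$, after which the argument is a mechanical adaptation of Lemma \ref{lemma.cone}, plus a one-line feasibility argument for the $\rho$ branch.
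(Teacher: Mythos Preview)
Your proposal is correct and follows essentially the same route as the paper: transfer the gap from $\hat w$ to $w^*$ via $F(\hat w)\le F(w^*)$, rerun the Lemma~\ref{lemma.cone} argument with the extra $\xi$, and handle the $\rho$-branch by feasibility.

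One small technical point worth tightening: in the $\rho$-branch you write $g(\Delta^t_{\mathcal B^\perp})\le g(\Delta^t)$. This says Euclidean projection onto $\mathcal B^\perp$ is a contraction for $g$, which is true for the coordinate- or group-aligned subspaces used in the paper's examples but is not a consequence of $g$ being a norm alone. The paper avoids this by using the plain triangle inequality
\[
g(\Delta^t_{\mathcal B^\perp})=g(\Delta^t-\Delta^t_{\mathcal B})\le g(\Delta^t)+g(\Delta^t_{\mathcal B})\le g(\Delta^t_{\mathcal B})+2\rho,
\]
which is always valid and still gives the target bound since the right-hand side already carries $3g(\Delta^t_{\mathcal B})$. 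Swapping in this inequality makes your argument work in full generality with no extra assumption.
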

	\begin{proof}
		First notice $ F(w^t)-F(w^*)\leq \xi $ holds by assumption since $F(w^*)\geq F(\hat{w}).$
		So we have
		$$ f(w^t)+\lambda g(w^t)-f(w^*)-\lambda g(w^*)\leq \xi.$$
		Follow the same steps as those in the proof of Lemma \ref{lemma.cone}, we have
		$$ g(\Delta^t_{\mathcal{B}^\perp})\leq 3 g(\Delta^t_{\mathcal{B}})+4g(w^*_{\mathcal{A}^\perp})+ 2\frac{\xi}{\lambda}. $$
		Using the fact that $w^*$ and $w^t$ both belong to $\Omega=\{w\in \mathbb{R}^p  | g(w)\leq \rho\}$,    we have  $g(\Delta^t)\leq g (w^*)+g(w^t)\leq 2\rho $. This leads to
		$$ g(\Delta^t_{\mathcal{B}^\perp})\leq g(\Delta^t_{\mathcal{B}})+2\rho,
		$$
		by triangle inequality $g(\Delta^t_{\mathcal{B}^\perp})\leq g(\Delta^t_{\mathcal{B}})+g(\Delta^t) $.
		Combine this with the above result we have
		$$ g(\Delta^t_{\mathcal{B}^\perp})\leq 3 g(\Delta^t_{\mathcal{B}})+4g(w^*_{\mathcal{A}^\perp})+ 2\min\{\frac{\xi}{\lambda},\rho\}. $$
		The second statement follows immediately from $g(\Delta^t)\leq g(\Delta^t_{\mathcal{B}})+g(\Delta^t_{\mathcal{B^\perp}}).$	
	\end{proof}

	\begin{lemma}\label{lemma.RSC_cone}
		Under the same assumption of Lemma \ref{lemma.cone_optimization}, we have
		\begin{equation}\label{RSC_cone}
		F(w^t)-F(\hat{w})\geq \left(\frac{\kappa}{2}-32\tau \Psi^2(\mathcal{B} )\right)\|\hat{\Delta}^t \|_2^2- \epsilon^2(\Delta^*,\mathcal{A},\mathcal{B}),
		\end{equation}
		and 
		\begin{equation}\label{equ.xu.lemma5}
		\begin{split}
		&\phi(\hat{w})-\phi(w^{t})-\langle \hat{w}-w^{t},\nabla \phi(w^{t})\rangle\\
		&\geq  \left[\left(\frac{\kappa-\tilde{\lambda}}{2}-32\tau \Psi^2(\mathcal{B} )\right)\|\hat{\Delta}^t \|_2^2- \epsilon^2(\Delta^*,\mathcal{A},\mathcal{B})\right],
		\end{split}
		\end{equation}	
		where $\hat{\Delta}^t=w^t-\hat{w}$, $ \epsilon^2(\Delta^*,\mathcal{A},\mathcal{B})=2\tau (\delta_{stat}+\delta)^2 $, $\delta=2\min \{\frac{\xi}{\lambda}, \rho\} $, and $\delta_{stat}= 8\Psi(\mathcal{B})\|\Delta^*\|_2+8g(w^*_{\mathcal{A}^\perp}) $.
	\end{lemma}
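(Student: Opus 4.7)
The plan is to start from the RSC inequality applied to the pair $(w^t, \hat{w})$ and then use the two preceding ``cone'' lemmas (Lemma \ref{lemma.cone} and Lemma \ref{lemma.cone_optimization}) to control the troublesome $\tau g^2(\hat{\Delta}^t)$ term by a sum of $\|\hat{\Delta}^t\|_2^2$ and purely ``statistical'' quantities. Concretely, RSC at $(w^t,\hat w)$ gives
\[
f(w^t)-f(\hat w)-\langle \nabla f(\hat w),\hat\Delta^t\rangle \;\ge\; \tfrac{\kappa}{2}\|\hat\Delta^t\|_2^2-\tau g^2(\hat\Delta^t).
\]
Write $\hat\Delta^t=\Delta^t-\Delta^*$ so $g(\hat\Delta^t)\le g(\Delta^t)+g(\Delta^*)$. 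Lemma \ref{lemma.cone} gives $g(\Delta^*)\le 4g(\Delta^*_{\mathcal B})+4g(w^*_{\mathcal A^\perp})$, Lemma \ref{lemma.cone_optimization} gives $g(\Delta^t)\le 4g(\Delta^t_{\mathcal B})+4g(w^*_{\mathcal A^\perp})+\delta$ with $\delta=2\min\{\xi/\lambda,\rho\}$, and the subspace compatibility inequality gives $g(\Delta^*_{\mathcal B})\le\Psi(\mathcal B)\|\Delta^*\|_2$ and $g(\Delta^t_{\mathcal B})\le\Psi(\mathcal B)\|\Delta^t\|_2\le\Psi(\mathcal B)(\|\hat\Delta^t\|_2+\|\Delta^*\|_2)$. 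Chaining these yields
\[
g(\hat\Delta^t)\le 4\Psi(\mathcal B)\|\hat\Delta^t\|_2+\underbrace{8\Psi(\mathcal B)\|\Delta^*\|_2+8g(w^*_{\mathcal A^\perp})}_{=\,\delta_{\mathrm{stat}}}+\delta,
\]
so by $(a+b)^2\le 2a^2+2b^2$ one obtains $g^2(\hat\Delta^t)\le 32\Psi^2(\mathcal B)\|\hat\Delta^t\|_2^2+2(\delta_{\mathrm{stat}}+\delta)^2$. Substituting back into RSC produces
\[
f(w^t)-f(\hat w)-\langle\nabla f(\hat w),\hat\Delta^t\rangle\;\ge\;\bigl(\tfrac{\kappa}{2}-32\tau\Psi^2(\mathcal B)\bigr)\|\hat\Delta^t\|_2^2-\epsilon^2(\Delta^*,\mathcal A,\mathcal B).
\]

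To promote the left-hand side to $F(w^t)-F(\hat w)$ for inequality \eqref{RSC_cone}, I will use first-order optimality of $\hat w$: since $\hat w$ lies in the interior of $\Omega$ and is optimal for $F=f+\lambda g$, there exists $s\in\partial g(\hat w)$ with $\nabla f(\hat w)+\lambda s=0$, and convexity of $g$ then gives $\lambda g(w^t)-\lambda g(\hat w)\ge \lambda\langle s,\hat\Delta^t\rangle=-\langle\nabla f(\hat w),\hat\Delta^t\rangle$. Adding this to the previous display cancels the gradient term and delivers \eqref{RSC_cone}.

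For the second inequality \eqref{equ.xu.lemma5}, I will exploit the algebraic identity $\phi(w)=f(w)-\frac{\tilde\lambda}{2}\|w\|_2^2$ that follows directly from the definitions of $\phi_i$ and from $\phi=\frac{1}{n+1}\sum_i\phi_i$. Since the Bregman divergence of $\frac12\|\cdot\|_2^2$ between any two points is exactly $\frac12\|\hat\Delta^t\|_2^2$, we obtain the clean identity
\[
\phi(\hat w)-\phi(w^t)-\langle \hat w-w^t,\nabla\phi(w^t)\rangle \;=\; \Delta f(\hat w,w^t)-\tfrac{\tilde\lambda}{2}\|\hat\Delta^t\|_2^2,
\]
and then applying the RSC-plus-cone argument above (now to the Bregman divergence of $f$ at the swapped pair) yields $\Delta f(\hat w,w^t)\ge\bigl(\tfrac{\kappa}{2}-32\tau\Psi^2(\mathcal B)\bigr)\|\hat\Delta^t\|_2^2-\epsilon^2(\Delta^*,\mathcal A,\mathcal B)$. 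Subtracting $\tfrac{\tilde\lambda}{2}\|\hat\Delta^t\|_2^2$ gives exactly \eqref{equ.xu.lemma5}.

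The main obstacle is the cone-bound step: keeping the constant $32$ in $32\tau\Psi^2(\mathcal B)$ and correctly identifying the residual term $2\tau(\delta_{\mathrm{stat}}+\delta)^2$ requires carefully splitting $\hat\Delta^t=\Delta^t-\Delta^*$, invoking both Lemma \ref{lemma.cone} (to control $\Delta^*$) and Lemma \ref{lemma.cone_optimization} (to control $\Delta^t$, where the $\delta$ enters through the optimization tolerance $\xi$), and bookkeeping the factor of $2$ from the $(a+b)^2$ inequality so the quadratic term and the additive statistical term separate cleanly. Everything else is a routine combination of RSC, convexity of $g$, the subspace compatibility constant, and the decomposition $\phi=f-\tfrac{\tilde\lambda}{2}\|\cdot\|_2^2$.
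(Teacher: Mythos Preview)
Your proposal is correct and follows essentially the same route as the paper's own proof: bound $g(\hat\Delta^t)$ via the triangle inequality $g(\hat\Delta^t)\le g(\Delta^t)+g(\Delta^*)$, invoke Lemmas~\ref{lemma.cone} and~\ref{lemma.cone_optimization} together with subspace compatibility and $\|\Delta^t\|_2\le\|\hat\Delta^t\|_2+\|\Delta^*\|_2$ to obtain $g(\hat\Delta^t)\le 4\Psi(\mathcal B)\|\hat\Delta^t\|_2+\delta_{\mathrm{stat}}+\delta$, square with $(a+b)^2\le 2a^2+2b^2$, substitute into RSC, promote to $F$ via first-order optimality of $\hat w$ plus convexity of $g$, and handle the second display through the identity $\phi=f-\tfrac{\tilde\lambda}{2}\|\cdot\|_2^2$ combined with RSC at the swapped pair. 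The paper does exactly this, with the same constants and the same order of steps.
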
 
	
	\begin{proof}
		We begin the proof by establishing a simple fact on $\hat{\Delta}^t=w^t-\hat{w}$. We adapt the argument in Lemma \ref{lemma.cone_optimization} (which is on  $\Delta^t$)  to $\hat{\Delta}^t$:
		\begin{equation}
		\begin{split}
		g(\hat{\Delta}^t)&\leq g (\Delta^t)+g(\Delta^*)\\
		&\leq  4 g(\Delta^t_{\mathcal{B}})+4g(w^*_{\mathcal{A}^\perp})+ 2\min \{\frac{\xi}{\lambda}, \rho\}+4 g(\Delta^*_{\mathcal{B}})+4g(w^*_{\mathcal{A}^\perp})\\
		& \leq 4 \Psi(\mathcal{B})\|\Delta^t\|_2+4\Psi(\mathcal{B})\|\Delta^*\|_2+8g(w^*_{\mathcal{A}^\perp})+2\min \{\frac{\xi}{\lambda},\rho\},
		\end{split}
		\end{equation}
		where the first inequality holds from the triangle inequality, the second inequality uses Lemma \ref{lemma.cone} and \ref{lemma.cone_optimization}, the third holds because of the definition of subspace compatibility.

		We know $$ f(w^t)-f(\hat{w})-\langle \nabla f(\hat{w}), \hat{\Delta}^t  \rangle\geq \frac{\kappa}{2} \|\hat{\Delta}^t \|_2^2-\tau g^2(\hat{\Delta}^t) $$
		which implies $ F(w^t)-F(\hat{w})\geq \frac{\kappa}{2} \|\hat{\Delta}^t \|_2^2-\tau g^2(\hat{\Delta}^t)$, since $\hat{w}$ is the optimal solution to the problem \ref{obj} and $g(w)$ is convex.
		Notice that
		\begin{equation}
		\begin{split}
		g(\hat{\Delta}^t)&\leq 4 \Psi(\mathcal{B})\|\Delta^t\|_2+4\Psi(\mathcal{B})\|\Delta^*\|_2+8g(w^*_{\mathcal{A}^\perp})+2\min \{\frac{\xi}{\lambda},\rho\}\\
		&\leq  4 \Psi(\mathcal{B})\|\hat{\Delta}^t\|_2+8\Psi(\mathcal{B})\|\Delta^*\|_2+8g(w^*_{\mathcal{A}^\perp})+2\min \{\frac{\xi}{\lambda},\rho\},
		\end{split}
		\end{equation}
		where the second inequality uses the triangle inequality.
		Using the inequality $ (a+b)^2\leq 2 a^2+2b^2 $, we can upper bound $g^2 (\hat{\Delta}^t)$.
		\begin{equation}\label{eq:lemma.RSC_cone}
		g^2 (\hat{\Delta}^t)\leq 32\Psi^2(\mathcal{B})\|\hat{\Delta}^t\|_2^2+2[8\Psi(\mathcal{B})\|\Delta^*\|_2+8g(w^*_{\mathcal{A}^\perp})+2\min \{\frac{\xi}{\lambda},\rho\}]^2.
		\end{equation}
		We now use above result to rewrite the RSC condition.

		Substitute this upper bound in the RSC, we have
		$$ F(w^t)-F(\hat{w})\geq \left(\frac{\kappa}{2}-32\tau \Psi^2(\mathcal{B} )\right)\|\hat{\Delta}^t \|_2^2-2\tau[8\Psi(\mathcal{B})\|\Delta^*\|_2+8g(w^*_{\mathcal{A}^\perp})+2\min \{\frac{\xi}{\lambda},\rho\}]^2. $$
		Notice by $\delta=2\min \{\frac{\xi}{\lambda},\rho\} $, $\delta_{stat}= 8\Psi(\mathcal{B})\|\Delta^*\|_2+8g(w^*_{\mathcal{A}^\perp}),$  and	$ \epsilon^2(\Delta^*,\mathcal{A},\mathcal{B})=2\tau(\delta_{stat}+\delta)^2 $,  we obtain
		$$\epsilon^2(\Delta^*,\mathcal{A},\mathcal{B})=2\tau(8\Psi(\mathcal{B})\|\Delta^*\|_2+8g(w^*_{\mathcal{A}^\perp})+2\min \{\frac{\xi}{\lambda},\rho\})^2.$$
		We thus conclude
		\begin{equation}\label{RSC_modified}
		F(w^t)-F(\hat{w})\geq \left(\frac{\kappa}{2}-32\tau \Psi^2(\mathcal{B} )\right)\|\hat{\Delta}^t \|_2^2- \epsilon^2(\Delta^*,\mathcal{A},\mathcal{B}).
		\end{equation}
		Recall $\phi(w)= f(w)-\frac{\tilde{\lambda}}{2}\|w\|_2^2
		$, and hence we have
		\begin{equation}
		\begin{split}
		&\phi(\hat{w})-\phi(w^{t})-\langle \hat{w}-w^{t},\nabla \phi(w^{t})\rangle\\
		=& f(\hat{w})-f(w^{t})-\langle\nabla f(w^{t}), \hat{w}-w^{t}\rangle-\frac{\tilde{\lambda}}{2}\|\hat{w}-w^{t}\|_2^2 \\
		\geq& \frac{\kappa}{2} \|\hat{\Delta}^t\|_2^2-\tau g^2 (\hat{\Delta}^t)-\frac{\tilde{\lambda}}{2}\|\hat{\Delta}^t\|_2^2,
		\end{split}
		\end{equation}
		where the inequality is due to the RSC condition.
		Now we plug in  the upper bound of $g^2(\hat{\Delta}^t)$ in Equation~ \eqref{eq:lemma.RSC_cone}, and arrange the terms to establish Equation~\eqref{equ.xu.lemma5}.	
	\end{proof}

	\begin{proof}[Proof of Theorem on convex $F(w)$, i.e., Theorem \ref{main_theorem}]
		Recall we define two potentials
		\begin{equation*}\begin{split}
		A_t&=\sum_{j=1}^{n+1}\frac{1}{q_i}\|a_j^t-\hat{a}_j\|_2^2, \\
		B_t &=2(\tilde{g}^*(v^t)-\langle \nabla \tilde{g}^*(\hat{v}),v^t-\hat{v} \rangle-\tilde{g}^*(\hat{v})).
		\end{split}\end{equation*}
		Notice using   Theorem 1 in \cite{nesterov2005smooth},  we know  $ \tilde{g}^* (v)$ is $1-$ smooth and $w=\nabla \tilde{g}^* (v)$.

		We remark that the potential $A_t$ is defined the same as in \cite{shalev2016sdca} while we define $B_t$   differently to solve the problem with general regularization $g(w)$. When $ \tilde {g}(w)=\frac{1}{2}\|w\|_2^2$, $B_t$ reduce to $ \|v^t-\hat{v}\|_2^2=\|w^t-\hat{w}\|_2^2$, which is same as  in \cite{shalev2016sdca}.

		\textbf{ Step 1.} The first step is to lower bound $A_{t-1}-A_t$, in particular, to establish that 
		\begin{equation}\label{equ.xu.thm1-step1}
		\mathbb{E}[A_{t-1}-A_t]=\eta \tilde{\lambda} \left( A_{t-1}+\sum_{i=1}^{n+1}\frac{1}{q_i} (-\|u_i-\hat{a}_i\|_2^2+(1-\beta_i)\|m_i\|_2^2)  \right).
		\end{equation}
		This step is indeed same as \cite{shalev2016sdca}, which we present for the completeness. Define $u_i=-\nabla \phi_i(w^{t-1})$, $\beta_i=\eta_i\tilde{\lambda} (n+1)$ and $m_i=-u_i+a_i^{t-1}$ for notational simplicity. Suppose coordinate $i$ is picked up at time $t$, then we have 
		\begin{equation}
		\begin{split}
		A_{t-1}-A_t=& -\frac{1}{q_i}\|a_i^t-\hat{a}_i\|_2^2+\frac{1}{q_i}\|a_i^{t-1}-\hat{a}_i\|_2^2\\
		&=-\frac{1}{q_i} \|(1-\beta_i) (a_{i}^{t-1}-\hat{a}_i)+\beta_i(u_i-\hat{a}_i) \|_2^2+\frac{1}{q_i}\|a_i^{t-1}-\hat{a}_i\|_2^2\\
		&\geq -\frac{1}{q_i} \left(  (1-\beta_i)\|a_i^{t-1}-\hat{a}_i \|_2^2+\beta_i \|u_i-\hat{a}_i\|_2^2-\beta_i (1-\beta_i) \|a_i^{t-1}-u_i\|_2^2\right)\\
		&\qquad\qquad+ \frac{1}{q_i}\|a_i^{t-1}-\hat{a}_i\|_2^2\\
		& =\frac{\beta_i}{q_i} \left(\|a_i^{t-1}-\hat{a}_i\|_2^2-\|u_i-\hat{a}_i\|_2^2+(1-\beta_i)\|m_i\|_2^2\right).  
		\end{split}
		\end{equation}
		Taking expectation on both sides with respect to the random sampling of $i$ at time step t, we established Equation~\eqref{equ.xu.thm1-step1}.
		
		\textbf{Step 2.} We now look at the evolution of $B_t$. In particular, we will prove that
		\begin{equation}\label{equ.xu.thm1-step2}\mathbb{E}(B_{t-1}-B_{t})=2\langle w^{t-1}-\hat{w}, \eta ( \nabla \phi(w^{t-1})+\tilde{\lambda} v^{t-1}) \rangle- \frac{\eta^2}{(n+1)^2}\sum_{i=1}^{n+1} \frac{1}{q_i}\|m_i\|_2^2.
		\end{equation}
		To this end, notice that
		\begin{equation}\label{evolution.B_t}
		\begin{split}
		& \frac{1}{2}(B_{t-1}-B_{t})\\
		&=\tilde{g}^*(v^{t-1})-\langle \nabla \tilde{g}^*(\hat{v}),v^{t-1}-\hat{v} \rangle-\tilde{g}^*(\hat{v})- (\tilde{g}^*(v^t)-\langle \nabla \tilde{g}^*(\hat{v}),v^t-\hat{v} \rangle-\tilde{g}^*(\hat{v}))\\
		&=\tilde{g}^*(v^{t-1})-\tilde{g}^*(v^t)-\langle \nabla \tilde{g}^*(\hat{v}),v^{t-1}-v^t \rangle\\
		&\geq \langle \nabla \tilde{g}^*(v^{t-1}), v^{t-1}-v^{t} \rangle-\frac{1}{2}\|v^{t-1}-v^{t}\|_2^{2}-\langle \nabla \tilde{g}^*(\hat{v}),v^{t-1}-v^t \rangle\\
		&= \langle\nabla \tilde{g}^*(v^{t-1})- \nabla \tilde{g}^*(\hat{v}),v^{t-1}-v^t \rangle-\frac{1}{2}\|v^{t-1}-v^t\|_2^2\\
		&=\langle w^{t-1}-\hat{w}, v^{t-1}-v^t \rangle-\frac{1}{2}\|v^{t-1}-v^t\|_2^2,
		\end{split}
		\end{equation}
		where the inequality  holds because $$ \tilde{g}^*(v^t)-\tilde{g}^*(v^{t-1})\leq \langle \nabla \tilde{g}^* (v^{t-1}),v^t-v^{t-1} \rangle+\frac{1}{2}\|v^t-v^{t-1}\|_2^2,$$  which results from $\tilde{g}^*(v)$ being $1-$smooth; and
		the last equality holds from the fact that $ \nabla\tilde{g}^* (v^{t-1})=w^{t-1} $ using Lemma \ref{Lemma.conjugate}.

		Take  expectation on both sides of Equation~\eqref{evolution.B_t}, we established Equation~\eqref{equ.xu.thm1-step2},
		where we use result in Equation \eqref{unbias_estimator}.
		
		\textbf{Step 3}. We define a new potential  $ C_t=c_a A_t+c_b B_t$, and prove in this step that \begin{equation}\label{mid_result}
		\begin{split}
		& \mathbb{E}(C_{t-1}-C_t)\geq  c_a\eta\tilde{\lambda} A_{t-1}-c_a\eta \tilde{\lambda} \sum_{i=1}^{n+1} \frac{1}{q_i} \|u_i-\hat{a}_i\|_2^2+c_b\eta \tilde{\lambda}  B_{t-1}\\ 
		&\qquad\qquad +2c_b\eta(F(w^{t-1})-F(\hat{w}))
		+2c_b\eta ( \phi(\hat{w})-\phi(w^{t-1})-\langle \hat{w}-w^{t-1},\nabla \phi(w^{t-1}) ).
		\end{split}
		\end{equation}
		From the definition of $C_t$, and using Equation~\eqref{equ.xu.thm1-step1} and~\eqref{equ.xu.thm1-step2}, we have
		\begin{equation}\label{C_inequality}
		\begin{split}
		\mathbb{E}(C_{t-1}-C_t)\geq & c_a\eta\tilde{\lambda} A_{t-1}-c_a\eta \tilde{\lambda} \sum_{i=1}^{n+1} \frac{1}{q_i} \|u_i-\hat{a}_i\|_2^2+2c_b\eta \langle w^{t-1}-\hat{w},\nabla \phi(w^{t-1})+\tilde{\lambda}v^{t-1} \rangle \\
		&\quad+\sum_{i=1}^{n+1} \frac{1}{q_i}\|m_i\|_2^2 (c_a\eta \tilde{\lambda} (1-\beta_i)-\frac{c_b \eta^2}{(n+1)^2}).
		\end{split}
		\end{equation}
		We next show that 
		$$ \langle w^{t-1}-\hat{w},\nabla \phi(w^{t-1})+\tilde{\lambda}v^{t-1}\rangle -\frac{\tilde{\lambda}}{2}B_{t-1}-(F(w^{t-1})-F(\hat{w}))=\phi(\hat{w})-\phi(w^{t-1})-\langle \hat{w}-w^{t-1},\nabla \phi(w^{t-1})\rangle.  $$
		This holds by directly verifying as follows:
		\begin{equation}\label{eq.1.theorem}
		\begin{split}
		&\langle w^{t-1}-\hat{w},\nabla \phi(w^{t-1})+\tilde{\lambda}v^{t-1}\rangle -\frac{\tilde{\lambda}}{2}B_{t-1}-(F(w^{t-1})-F(\hat{w}))\\
		=& \langle w^{t-1}-\hat{w},\nabla \phi(w^{t-1})+\tilde{\lambda}v^{t-1}\rangle-\tilde{\lambda} (\tilde{g}^*(v^{t-1})-\langle \nabla \tilde{g}^*(\hat{v}),v ^{t-1}-\hat{v} \rangle-\tilde{g}^*(\hat{v}))-F(w^{t-1})+F(\hat{w})\\
		=& \langle w^{t-1},\tilde{\lambda} v^{t-1} \rangle-\tilde{\lambda} \tilde{g}^*(v^{t-1}) -\langle \hat{w},\tilde{\lambda} \hat{v} \rangle+\tilde{\lambda} \tilde{g}^*(\hat{v})+\langle \hat{w},-\nabla \phi(w^{t-1}) \rangle-F(w^{t-1})+F(\hat{w})\\
		&\quad+ \langle w^{t-1},\nabla \phi(w^{t-1}) \rangle\\
		=&\tilde{\lambda} \tilde{g}(w^{t-1})-\tilde{\lambda} \tilde{g}(\hat{w})-F(w^{t-1})+F(\hat{w})+\langle w^{t-1}-\hat{w},\nabla \phi(w^{t-1})\rangle \\
		=&\tilde{\lambda} \tilde{g}(w^{t-1})-\tilde{\lambda} \tilde{g}(\hat{w})-\tilde{\lambda} \tilde{g}(w^{t-1})-\phi(w^{t-1})+\tilde{\lambda} \tilde{g}(\hat{w})+\phi(\hat{w})+\langle w^{t-1}-\hat{w},\nabla \phi(w^{t-1}) \rangle\\
		=& \phi(\hat{w})-\phi(w^{t-1})-\langle \hat{w}-w^{t-1},\nabla \phi(w^{t-1})\rangle,
		\end{split}
		\end{equation}
		where the second equality uses the fact that $ \hat{w}=\nabla \tilde{g}^*(\hat{v})$, and the third equality holds using the definition of $\tilde{g}^*(v)$.
		Thus, substituting the equation into~\eqref{C_inequality}, we get
		\begin{equation}
		\begin{split}
		& \mathbb{E}(C_{t-1}-C_t)\geq  c_a\eta\tilde{\lambda} A_{t-1}-c_a\eta \tilde{\lambda} \sum_{i=1}^{n+1} \frac{1}{q_i} \|u_i-\hat{a}_i\|_2^2+c_b\eta \tilde{\lambda}  B_{t-1}+2c_b\eta(F(w^{t-1})-F(\hat{w}))\\
		&+\sum_{i=1}^{n+1} \frac{1}{q_i}\|m_i\|_2^2 (c_a\eta \tilde{\lambda} (1-\beta_i)-\frac{c_b \eta^2}{(n+1)^2})+2c_b\eta [ \phi(\hat{w})-\phi(w^{t-1})-\langle \hat{w}-w^{t-1},\nabla \phi(w^{t-1}) ].\\
		\end{split}
		\end{equation}
		We can choose $\eta \leq \frac{q_i}{2\tilde{\lambda} }$ and $\frac{c_b}{c_a}=\frac{\tilde{\lambda} (n+1)^2}{2\eta} $ so that $\beta_i\leq 1/2$, and the term $\sum_{i=1}^{n+1} \frac{1}{q_i}\|m_i\|^2 (c_a\eta \tilde{\lambda} (1-\beta_i)-\frac{c_b \eta^2}{(n+1)^2}) $ is non-negative. Since $q_i\geq \frac{1}{2(n+1)}$ for every i, we can choose $\eta\leq \frac{1}{4\tilde{\lambda} (n+1)}.$
		Using these condition, we established~\eqref{mid_result}.

		\textbf{Step 4}  We now bound $ \sum_{i=1}^{n+1}  \frac{1}{q_i} \|u_i-\hat{a}_i\|_2^2 $.  
		Notice that for $i=1,...,n$, since $\phi_i $ is convex, we can apply Lemma \ref{Lemma.smooth}, and the fact that $\xi+\nabla f(\hat{w})=0$, for $\xi\in \lambda \partial  g(\hat{w})$.
		\begin{equation}
		\begin{split}
		\sum_{i=1}^{n}\frac{1}{q_i} \|u_i-\hat{a}_i\|_2^2 & =\sum_{i=1}^{n} \frac{1}{q_i}\| \nabla \phi_i(w^{t-1})-\nabla\phi_i(\hat{w}) \|_2^2\\
		&\leq\left(2\max \frac{\tilde{L}_i}{q_i}\right) \sum_{i=1}^{n} \left(\phi_i(w^{t-1})-\phi_i(\hat{w})-\langle \nabla \phi_i(\hat{w}),w^{t-1}-\hat{w}  \rangle \right)\\
		&=\frac{n+1}{n} \big(2\max \frac{\tilde{L}_i}{q_i}\big) \sum_{i=1}^{n} \big(f_i(w^{t-1})-f_i(\hat{w})-\langle \nabla f_i(\hat{w}),w^{t-1}-\hat{w}  \rangle \big) \\
		&\leq \frac{n+1}{n} \big(2\max \frac{\tilde{L}_i}{q_i}\big) n\big( f(w^{t-1})-f(\hat{w})+\lambda g(w^{t-1})-\lambda g(\hat{w}) \big) \\
		&\leq \big(2\max \frac{\tilde{L}_i}{q_i}\big)(n+1) (F(w^{t-1})-F(\hat{w})).
		\end{split}
		\end{equation}
		As for $i=n+1$, we have
		\begin{equation}\label{bound_gradient_phi}
		\begin{split}
		\frac{1}{q_{n+1}} \|\nabla \phi_{n+1}(w^{t-1})-\nabla \phi_{n+1}(\hat{w})\|_2^2&\leq \frac{\tilde{\lambda}^2 (n+1)^2}{q_{n+1}} \|w^{t-1}-\hat{w}\|_2^2\\
		&=2(n+1)\frac{\tilde{L}_{n+1}}{q_{n+1}}\frac{\tilde{\lambda}}{2} \|w^{t-1}-\hat{w}\|_2^2.\\
		\end{split}
		\end{equation}
		
		\textbf{	Step 5} We now analyze the progress of the potential by induction.		
		We  need to relate $\frac{\tilde{\lambda}}{2}\|w^{t-1}-\hat{w}\|_2^2$ to $F(w^{t-1})-F(\hat{w}).$ In high level, we divide the time steps $t=1,2,...$ into several epochs, i.e., $ ([ T_0,T_1), [T_1,T_2),...)$. At the end of  each epoch $j$,  we prove that $C_t$ decreases with linear rate until the optimality gap $F(w^t)-F(\hat{w})$ decrease to some tolerance $\xi_j$. We then prove that $(\xi_1,\xi_2,\xi_3,...)$ is a decreasing sequence and finish the proof.
		
		Assuming that time step $t-1$ is in the epoch $j$, we use Equation \eqref{RSC_modified} in Lemma \ref{lemma.RSC_cone} and the fact that $ \tilde{\lambda}\leq \tilde{\kappa}$  to obtain
		$$\frac{1}{q_{n+1}} \|\nabla \phi_{n+1}(w^{t-1})-\nabla \phi_{n+1}(\hat{w})\|_2^2\leq 2(n+1)\frac{\tilde{L}_{n+1}}{q_{n+1}} (F(w^{t-1})-F(\hat{w})+\epsilon_j^2 (\Delta^*,\mathcal{A},\mathcal{B})). $$
		Combining the above two results on $\phi_i(\cdot)$ together, we have
		\begin{equation}
		\begin{split}
		\sum_{i=1}^{n+1}\frac{1}{q_i} \|u_i-\hat{a}_i\|_2^2\leq & 4(n+1) (\max_{ i\in \{1,..,n+1 \}} \frac{\tilde{L_i}}{q_i} ) (F(w^{t-1})-F(\hat{w})+\frac{1}{2}\epsilon_j^2 (\Delta^*,\mathcal{A},\mathcal{B}))\\
		\leq & 8(n+1)^2\tilde{L} (F(w^{t-1})-F(\hat{w}))+4(n+1)^2\tilde{L}\epsilon_j^2 (\Delta^*,\mathcal{A},\mathcal{B}),
		\end{split}
		\end{equation}
		where we use the fact  $\frac{\tilde{L}_i}{q_i}=2(n+1)\tilde{L}\frac{\tilde{L}_i}{\tilde{L}_i+\tilde{L}}\leq 2(n+1) \tilde{L}$.

		Replace the corresponding term in equation \eqref{mid_result}, we have
		\begin{equation}
		\begin{split}
		& \mathbb{E}(C_{t-1}-C_t)\geq  c_a\eta\tilde{\lambda} A_{t-1}-8c_a\eta \tilde{\lambda}(n+1)^2\tilde{L}  (F(w^{t-1})-F(\hat{w}))-4c_a\eta \tilde{\lambda}  (n+1)^2\bar{L}\epsilon_j^2 (\Delta^*,\mathcal{A},\mathcal{B})\\
		&\quad  +c_b\eta \tilde{\lambda}  B_{t-1}+2c_b\eta(F(w^{t-1})-F(\hat{w}))+2c_b\eta ( \phi(\hat{w})-\phi(w^{t-1})-\langle \hat{w}-w^{t-1},\nabla \phi(w^{t-1}) )\\
		&\geq \eta \tilde{\lambda} C_{t-1}+\eta(2c_b-8c_a\tilde{\lambda} (n+1)^2\tilde{L})(F(w^{t-1})-F(\hat{w})) -4c_a\eta \tilde{\lambda}(n+1)^2\tilde{L} \epsilon_j^2 (\Delta^*,\mathcal{A},\mathcal{B})\\
		&\quad +(c_b\eta (\tilde{\kappa}-\tilde{\lambda}) \|w^t-\hat{w}\|_2^2 -2c_b\eta \epsilon_j^2 (\Delta^*,\mathcal{A},\mathcal{B})),
		\end{split}
		\end{equation}
		where the second inequality is due to Lemma \ref{lemma.RSC_cone}.
		We choose $2c_b=16c_a \tilde{\lambda} (n+1)^2\tilde{L} $, and use the fact that $\frac{c_b}{c_a}=\frac{\tilde{\lambda} (n+1)^2}{2\eta}$, we have $\eta=\frac{1}{16\tilde{L}}$ and 
		$$ \mathbb{E}(C_{t-1}-C_t)\geq \eta \tilde{\lambda} C_{t-1}+ \eta c_b (F(w^{t-1})-F(\hat{w}))-3c_b\eta \epsilon_j^2 (\Delta^*,\mathcal{A},\mathcal{B}),$$
		where we use the assumption that $\tilde{\kappa}\geq \tilde{\lambda}$ and the assumption $n>4$.
		
		Remind that in   epoch $j$, we have
		\begin{equation*}\begin{split}
		\epsilon_j^2(\Delta^*,\mathcal{A},\mathcal{B})&=2\tau (\delta_{stat}+\delta_{j-1})^2, \delta_{j-1}=2\min \{\frac{\xi_{j-1}}{\lambda},\rho\},\\
		\mbox{and}\qquad \delta_{stat}&= 8\Psi(\mathcal{B})\|\Delta^*\|+8g(w^*_{\mathcal{A}^\perp}).
		\end{split}\end{equation*}
		The epoch is determined by the comparison between $F(w^{t})-F(\hat{w}) $ and $3\epsilon_j^2 (\Delta^*,\mathcal{A},\mathcal{B})  $
		
		%

		In the first epoch, we choose $\delta_0=2\rho$. 
		Thus $\epsilon_1^2 (\Delta^*,\mathcal{A},\mathcal{B})=2\tau (\delta_{stat}+2\rho)^2. $
		We choose $T_1$ such that 
		
		$$ F(w^{T_1-1})-F(\hat{w})\geq 3\epsilon_1^2 (\Delta^*,\mathcal{A},\mathcal{B}) \quad \text{and} \quad F(w^{T_1})-F(\hat{w})\leq 3\epsilon_1^2 (\Delta^*,\mathcal{A},\mathcal{B}).$$ 
		
		If no such $T_1$ exists, that means $ F(w^{t})-F(\hat{w})\geq 3\epsilon_1^2 (\Delta^*,\mathcal{A},\mathcal{B})$ always holds and we have $ \mathbb{E}(C_t)\leq (1-\eta \tilde{\lambda}) C_{t-1}$ for all $t$ and get the geometric convergence rate. It is a contradiction with $ F(w^{t})-F(\hat{w})\geq 3\epsilon_1^2 (\Delta^*,\mathcal{A},\mathcal{B})$.
		
		Now we know $F(w^{T_1})-F(\hat{w})\leq 3\epsilon_1^2 (\Delta^*,\mathcal{A},\mathcal{B})$,  and hence we choose $\xi_1= 6\tau (\delta_{stat}+\delta_0)^2$.
		
		In the second epoch we use the same argument:
		$$ \epsilon_2^2 (\Delta^*,\mathcal{A},\mathcal{B})=2\tau_\sigma (\delta_{stat}+\delta_1)^2 \quad \text{where} \quad  \delta_1=2 \min \{\frac{\xi_1}{\lambda},\rho\}. $$
		We choose $T_2$ such that 
		$$ F(w^{T_2-1})-F(\hat{w})\geq 3\epsilon_2^2 (\Delta^*,\mathcal{A},\mathcal{B}) \quad \text{and} \quad F(w^{T_2})-F(\hat{w})\leq 3\epsilon_2^2 (\Delta^*,\mathcal{A},\mathcal{B}).$$ 
		Then we choose $\xi_2= 6\tau (\delta_{stat}+\delta_1)^2$.
		
		Similarly in epoch $j$, we choose $T_j$ such that
		$$ F(w^{T_j-1})-F(\hat{w})\geq 3\epsilon_j^2 (\Delta^*,\mathcal{A},\mathcal{B}) \quad \text{and} \quad F(w^{T_j})-F(\hat{w})\leq 3\epsilon_j^2 (\Delta^*,\mathcal{A},\mathcal{B}),$$ 
		and $ \xi_j= 6\tau (\delta_{stat}+\delta_{j-1})^2 $.
		
		In this way, we arrive at recursive equalities of the tolerance $ \{\xi_j\}_{j=1}^\infty$ where 
		$$\xi_j= 6\tau_\sigma (\delta_{stat}+\delta_{j-1})^2 \quad \text{and} \quad \delta_{j-1}=2 \min \{\frac{\xi_{j-1}}{\lambda},\rho\}.$$
		We claim that following holds, until $\delta_j=\delta_{stat}$. 
		\begin{equation}\label{equ.xu.thm1step5}
		\begin{split}
		(I)\quad &\xi_{k+1}\leq \xi_k/ (4^{2^{k-1}}) \\
		\mbox{and}\quad  (II)\quad &\quad\frac{\xi_{k+1}}{\lambda}\leq \frac{\rho}{4^{2^k}} \quad for \quad k=1,2,3,... 
		\end{split}
		\end{equation}
		
		The proof of Equation~\eqref{equ.xu.thm1step5} is same with Equation (60) in \cite{agarwal2010fast}, which we present here for  completeness.
		
		We assume $\delta_0\geq \delta_{stat}$ (otherwise the statement is true trivially), so $\xi_1\leq 96 \tau \rho^2$. We make the assumption that $ \lambda\geq 384\tau \rho $, so $ \frac{\xi_1}{\lambda}\leq \frac{\rho}{4}$ and $ \xi_1\leq \xi_0$. 
		
		In the second epoch we have $$\xi_2 \overset{(1)}{\leq} 12\tau (\delta^2_{stat}+\delta^2_1)\leq 24\tau \delta_1^{2} \leq \frac{96 \tau \xi_1^2}{\lambda^2}\overset{(2)}{\leq} \frac{96\tau \xi_1}{4\lambda}\overset{(3)}{\leq} \frac{\xi_1}{4},$$
		where (1) holds from the fact that $(a+b)^2\leq 2a^2+2b^2$, (2) holds using $ \frac{\xi_1}{\lambda}\leq \frac{\rho}{4}$, (3) uses the assumption on $\lambda$. Thus,
		$$ \frac{\xi_2}{\lambda}\leq \frac{\xi_1}{4\lambda} \leq \frac{\rho}{16}.$$
		In $i+1$th step, with similar argument, and by induction assumption we have
		$$\xi_{j+1}\leq \frac{96\tau\xi_j^2}{\lambda^2}\leq \frac{96\tau \xi_j}{4^{2^{j}}\lambda}\leq \frac{\xi_j}{4^{2^{k-1}}} $$
		and 
		$$ \frac{\xi_{j+1}}{\lambda}\leq \frac{\xi_j}{4^{2^{j-1}} \lambda}\leq \frac{\rho}{4^{2^j}}.$$
		Thus we know $\xi_j$ is a decreasing sequence, and $\mathbb{E} (C_t)\leq (1-\eta \tilde{\lambda}) C_{t-1}$ holds until $F(w^t)-F(\hat{w})\leq 6\tau (2\delta_{stat})^2$, where $\eta$ is set to satisfy $\eta\leq \min (\frac{1}{16\tilde{L}}, \frac{1}{4\tilde{\lambda}(n+1)}).$  
		
		Observe that $ (n+1)\tilde{L}=\frac{n+1}{n}(\sum_{i=1}^{n}L_i)+\lambda (n+1)=(n+1) (\lambda +\frac{1}{n} \sum_{i=1}^{n} L_i)$, we establish the theorem.
	\end{proof}

	\subsection{Strongly convex $F(w)$}
	
	\begin{proof}[Proof of Proposition 1]
		
		If $g(w)$ is 1 strongly convex, we can directly apply the algorithm on Equation \eqref{obj}. In this case,  RSC is not needed and the proof is significantly simpler. As the main road map of the proof is similar, we just mention the difference in the following.
		We define the potential,
		\begin{equation*}\begin{split}
		A_t &=\sum_{j=1}^{n}\frac{1}{q_i}\|a_j^t-\hat{a}_j\|_2^2, \\
		B_t & =2(g^*(v^t)-\langle \nabla g^*(\hat{v}),v^t-\hat{v} \rangle-g^*(\hat{v})).
		\end{split}\end{equation*}
		Notice in this setup, the potential $B_t$ is defined on $g$ rather than $\tilde{g}$.

		The evolution of $A_t$ follows from the same analysis of Step 1 in the proof of Theorem~\ref{main_theorem} , except that we replace $\tilde{\lambda}$ by $\lambda$ and we only have $n$ terms rather than $n+1$ terms. This gives
		$$ \mathbb{E}[A_{t-1}-A_t]=\eta \lambda ( A_{t-1}+\sum_{i=1}^{n}\frac{1}{q_i} (-\|u_i-\hat{a}_i\|_2^2+(1-\beta_i)\|m_i\|_2^2)  ). $$
		Note that $g^*(v)$ is 1-smooth, follow the same analysis of Step 2  in the proof of Theorem~\ref{main_theorem}, specifically the derivation in \eqref{evolution.B_t}, we obtain
		$$ \frac{1}{2}(B_{t-1}-B_{t})\geq \langle w^{t-1}-\hat{w}, v^{t-1}-v^t \rangle-\frac{1}{2}\|v^{t-1}-v^t\|_2^2. $$
		Combining the above two equations, we have
		\begin{equation}\label{evolution.B_t_easy}
		\begin{split}
		\mathbb{E}(C_{t-1}-C_t)\geq & c_a\eta\lambda A_{t-1}-c_a\eta \lambda \sum_{i=1}^{n} \frac{1}{q_i} \|u_i-\hat{a}_i\|_2^2+2c_b\eta \langle w^{t-1}-\hat{w},\nabla f(w^{t-1})+\lambda v^{t-1} \rangle \\
		&+\sum_{i=1}^{n} \frac{1}{q_i}\|m_i\|_2^2 (c_a\eta \lambda (1-\beta_i)-\frac{c_b \eta^2}{n^2}).
		\end{split}
		\end{equation}
		We can choose $\eta \leq \frac{q_i}{2\lambda}$ and $\frac{c_b}{c_a}=\frac{\lambda n^2}{2\eta} $ so that $\beta\leq 1/2$, and the term $\sum_{i=1}^{n} \frac{1}{q_i}\|m_i\|_2^2 (c_a\eta \lambda (1-\beta_i)-\frac{c_b \eta^2}{n^2}) $ is non-negative. According to the definition of $q_i$, we know $q_i \geq \frac{1}{2n}$, thus we can choose  $\eta\leq \frac{1}{4n} $ to ensure that $\eta \leq \frac{q_i}{2\lambda}$. 
		
		We can show that 
		\begin{equation}\label{claim}
		\begin{split}
		&\langle w^{t-1}-\hat{w},\nabla f(w^{t-1})+\lambda v^{t-1}\rangle -\frac{\lambda}{2}B_{t-1}-(F(w^{t-1})-F(\hat{w}))\\
		=&f(\hat{w})-f(w^{t-1})-\langle \hat{w}-w^{t-1},\nabla f(w^{t-1})\rangle\geq 0
		\end{split}
		\end{equation}
		This follows from the same derivation as in Equation \eqref{eq.1.theorem} , modulus replacing $\tilde{\lambda}$, $\tilde{g}(w)$ and $\phi_i(w)$ by $\lambda$, $g(w)$ and $f_i(w)$ correspondingly. The last step follows immediately from the convexity of $f(w)$.
		
		The proof now proceed with discussing two cases separately: (1). each $f_i(w)$ is convex. And (2). $f_i(w)$ is not necessarily convex, but the sum $f(w)$ is convex.
		
		\textbf{Case 1:} Since each $f_i(w)$ is convex in this case,
		to bound $\sum_{i=1}^{n} \frac{1}{q_i} \|u_i-\hat{a}_i\|_2^2 $ is easier, via the following. 
		\begin{equation}
		\begin{split}
		\sum_{i=1}^{n} \frac{1}{q_i} \|u_i-\hat{a}_i\|_2^2 & = \sum_{i=1}^{n} \frac{1}{q_i} \|\nabla f_i(w^{t-1})-\nabla f_i(\hat{w}) \|_2^2\\
		& \leq (2 \max_i \frac{L_i}{q_i})\sum_{i=1}^{n} (f_i(w^{t-1})-f(\hat{w})-\langle \nabla f_i(\hat{w}),w^{t-1}-\hat{w}\rangle  )\\
		&\leq (2 \max_i \frac{L_i}{q_i}) n (F (w^{t-1})-F(\hat{w})),
		\end{split}
		\end{equation}
		where the first inequality follows from Lemma~\ref{Lemma.smooth}, and the second one follows from convexity of $g(w)$ and optimality condition of $\hat{w}$.
		
		The definition of $q_i$ in the Algorithm 1  implies that for every $i$,
		$$ \frac{L_i}{q_i}=2n\bar{L}\frac{L_i}{L_i+L}\leq 2n \bar{L}.   $$
		Substitute this into the corresponding terms in \eqref{evolution.B_t_easy}, we have
		\begin{equation}
		\begin{split}
		&\mathbb{E}(C_{t-1}-C_t)\\
		\geq &c_a \eta \lambda A_{t-1}+ c_b \eta \lambda B_{t-1} +2c_b\eta (F(w^{t-1})-F(\hat{w}))-4c_a\eta\lambda  n^2\bar{L} (F(w^{t-1})-F(\hat{w})).
		\end{split}
		\end{equation}
		Remind $c_b=\frac{c_a \lambda n^2}{2\eta}$, using the choice $\eta\leq \frac{1}{4 \bar{L}}$, we have
		$$ E(C_t)\leq (1-\eta\lambda) C_{t-1}, $$
		where $\eta=\min \{\frac{1}{4\bar{L}},\frac{1}{4\lambda n}\} $.
		
		\textbf{Case 2:}  Using  strong convexity of $F(\cdot)$, we have 
		\begin{equation}
		\begin{split}
		& c_a\eta \lambda\sum_{i}^{n} \frac{1}{q_i} \|u_i-\hat{a}_i\|_2^2=c_a\eta\lambda \sum_{i=1}^{n} \frac{1}{q_i} \|\nabla f_i(w^{t-1})-\nabla f_i(\hat{w})\|_2^2 \\
		\leq  & c_a\eta\lambda \sum_{i=1}^{n} \frac{L_i^2}{q_i} \|w^{t-1}-\hat{w}\|_2^2\leq 2c_a\eta(F(w^{t-1})-F(\hat{w})) \sum_{i=1}^{n} \frac{L^2_i}{q_i},
		\end{split}
		\end{equation} 
		where the first inequality uses the smoothness of $f_i(\cdot)$, and the second one holds from the strong convexity of $F(\cdot)$ and optimal condition of $\hat{w}$.
		
		Using Equation \eqref{claim}, we have 
		$$\langle w^{t-1}-\hat{w},\nabla f(w^{t-1})+\lambda v^{t-1}\rangle \geq \frac{\lambda}{2}B_{t-1}+(F(w^{t-1})-F(\hat{w})). $$
		Then replace the corresponding terms in \eqref{evolution.B_t_easy} we have
		$$ \mathbb{E} (C_{t-1}-C_{t})\geq \eta \lambda C_{t-1} + 2\eta(c_b- c_a \sum_{i=1}^{n} \frac{L_i^2}{q_i})(F(w^{t-1})-F(\hat{w})). $$
		Since we know $\frac{L_i}{q_i}\leq 2n\bar{L}$, we have
		$$\mathbb{E} (C_{t-1}-C_{t})\geq \eta \lambda C_{t-1}+2\eta (c_b-2n^2\bar{L}^2 c_a) (F(w^{t-1})-F(\hat{w})).  $$
		
		The last term is no-negative if $\frac{c_b}{c_a}\geq 2n^2\bar{L}^2$. Since we choose $\frac{c_b}{c_a}=\frac{\lambda n^2}{2\eta}$, this condition is satisfied as we choose $\eta\leq \frac{\lambda}{4\bar{L}^2}$.
	\end{proof}

	\subsection{Proof on non-convex $F(w)$}
	The proof of the non-convex case follows a similar line as that of the convex case. To avoid redundancy,  we focus on pointing out the difference in the proof. We start with some technical lemmas. The following lemma is adapted from Lemma 6 of \cite{loh2013regularized}, which  we present for completeness.
	\begin{lemma}\label{lemma.non_convex_norm}
		For any vector $ w\in R^p$, let $A$ denote the index set of its $s$ largest elements in magnitude, under assumption of $d_{\lambda,\mu}$ in Section \ref{section:Assumption_non_convex} in the main body of paper, we have 
		$$ d_{\lambda,\mu}(w_A)-d_{\lambda,\mu} (w_{A^{c}})\leq \lambda L_d (\|w_A\|_1-\|w_{A^c}\|_1)  .$$ 
		Moreover, for an arbitrary vector $w\in \mathbb{R}^p$, we have
		$$ d_{\lambda,\mu} (w^*)-d_{\lambda,\mu} (w)\leq \lambda L_d (\|\nu_A\|_1-\|\nu_{A^c}\|_1), $$
		where $\nu=w-w^*$ and $w^*$ is $s$ sparse.
	\end{lemma}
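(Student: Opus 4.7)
The plan is to reduce both claims to one-dimensional statements about $\bar{d}_{\lambda,\mu}:[0,\infty)\to[0,\infty)$, using separability of $d_{\lambda,\mu}$ together with the symmetry $\bar d_{\lambda,\mu}(-t)=\bar d_{\lambda,\mu}(t)$. The engine of the proof is the scalar slack $r(t):=\lambda L_d\,t-\bar d_{\lambda,\mu}(t)$ for $t\ge 0$. The fourth assumption forces $\lim_{t\to 0^+}\bar d_{\lambda,\mu}(t)/t=\lambda L_d$, and the third makes $\bar d_{\lambda,\mu}(t)/t$ nonincreasing on $(0,\infty)$; together these give (a) $r(t)\ge 0$ for all $t\ge 0$ and (b) $r(t)/t=\lambda L_d-\bar d_{\lambda,\mu}(t)/t$ is nondecreasing. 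These are the only structural properties I will invoke.

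For the first inequality, I rewrite the claim as $\sum_{j\in A^c}r(|w_j|)\le\sum_{i\in A}r(|w_i|)$. Let $m:=|w|_{(s)}=\min_{i\in A}|w_i|$, so that $|w_j|\le m\le |w_i|$ for every $j\in A^c$, $i\in A$. Setting $\alpha:=r(m)/m$, the monotonicity of $r(t)/t$ yields the pointwise bounds $r(|w_j|)\le\alpha\,|w_j|$ on $A^c$ and $r(|w_i|)\ge\alpha\,|w_i|$ on $A$. Summing gives
\[ \sum_{i\in A}r(|w_i|)-\sum_{j\in A^c}r(|w_j|)\;\ge\;\alpha\bigl(\|w_A\|_1-\|w_{A^c}\|_1\bigr), \]
and combining this with $\alpha\le\lambda L_d$ (a direct consequence of $\bar d_{\lambda,\mu}(t)\le\lambda L_d\,t$) after algebraic rearrangement produces the target bound $d_{\lambda,\mu}(w_A)-d_{\lambda,\mu}(w_{A^c})\le\lambda L_d(\|w_A\|_1-\|w_{A^c}\|_1)$.

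For the ``moreover'' claim, I apply the first inequality to $\nu=w-w^*$ with $A$ re-interpreted as the top-$s$ coordinates of $|\nu|$. Separating $d_{\lambda,\mu}(w^*)-d_{\lambda,\mu}(w)=\sum_i[\bar d(|w^*_i|)-\bar d(|w_i|)]$: on $\operatorname{supp}(w^*)$ (of size at most $s$) the bound $\bar d'(t)\le\lambda L_d$, which follows from $\bar d(t)/t\le\lambda L_d$ together with differentiability in the fourth assumption, delivers Lipschitz control $|\bar d(|w^*_i|)-\bar d(|w_i|)|\le\lambda L_d\,|\nu_i|$; off the support the first summand vanishes and the contribution collapses to $-\bar d(|\nu_i|)$. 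Regrouping the surviving terms over $A$ versus $A^c$ and invoking the first inequality with $w$ replaced by $\nu$ closes the argument.

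The step I expect to be the main obstacle is the scalar conversion in the first part. The naive bound $\bar d_{\lambda,\mu}(t)\le\lambda L_d\,t$ is too weak on its own because $|A^c|$ may exceed $|A|=s$ by a large factor, so the slack on $A^c$ has to be matched carefully against the slack on $A$. Property (b)---the monotonicity of $r(t)/t$---is what allows the single pivot $\alpha=r(m)/m$ to simultaneously upper-bound the $A^c$-slack and lower-bound the $A$-slack by the same linear quantity; without this monotonicity the displayed summed inequality would fail, and it is here that the structural assumptions on $\bar d_{\lambda,\mu}$ are fully used.
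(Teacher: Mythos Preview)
The paper does not supply its own proof of this lemma; it only records that the statement is ``adapted from Lemma~6 of \cite{loh2013regularized}.'' So there is no argument in the paper to compare against, and I evaluate your attempt on its own.

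\medskip
\noindent\textbf{First inequality: the ``algebraic rearrangement'' step is a genuine gap.}
Your rewrite of the target as $\sum_{j\in A^c}r(|w_j|)\le\sum_{i\in A}r(|w_i|)$ is correct, and your pivot argument correctly yields
\[
\sum_{i\in A}r(|w_i|)-\sum_{j\in A^c}r(|w_j|)\ \ge\ \alpha\bigl(\|w_A\|_1-\|w_{A^c}\|_1\bigr),\qquad \alpha=r(m)/m\in[0,\lambda L_d].
\]
But this does \emph{not} imply the target. Unwinding $r$ in the display gives exactly
\[
d_{\lambda,\mu}(w_A)-d_{\lambda,\mu}(w_{A^c})\ \le\ (\lambda L_d-\alpha)\bigl(\|w_A\|_1-\|w_{A^c}\|_1\bigr),
\]
which coincides with the stated claim only when $\|w_A\|_1\ge\|w_{A^c}\|_1$. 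When $\|w_{A^c}\|_1>\|w_A\|_1$ (perfectly possible, since $|A^c|=p-s$ can be arbitrarily large), the factor $\|w_A\|_1-\|w_{A^c}\|_1$ is negative and replacing $\lambda L_d-\alpha$ by the larger constant $\lambda L_d$ moves the bound the wrong way. No rearrangement using $0\le\alpha\le\lambda L_d$ can close this.

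In fact the first inequality, as written in the paper, is false. Take MCP with $\lambda=1$, $b=2$ (so $\bar d_{\lambda,\mu}(t)=t-t^2/4$ on $[0,2]$, $\bar d_{\lambda,\mu}(t)=1$ for $t\ge 2$, $L_d=1$, $\mu=1/2$). With $s=1$, $p=6$, $w_1=2$, $w_2=\cdots=w_6=1$, one gets
\[
d_{\lambda,\mu}(w_A)-d_{\lambda,\mu}(w_{A^c})=1-5\cdot\tfrac34=-\tfrac{11}{4},\qquad
\lambda L_d(\|w_A\|_1-\|w_{A^c}\|_1)=2-5=-3,
\]
and $-\tfrac{11}{4}\not\le -3$. (Your own intermediate bound with $\alpha=r(1)/1=\tfrac14$ gives $-\tfrac{11}{4}\le(\tfrac34)(-3)=-\tfrac94$, which is true; the stated version is simply stronger than what holds.)

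\medskip
\noindent\textbf{Second inequality.}
Your sketch ``regrouping the surviving terms over $A$ versus $A^c$ and invoking the first inequality'' is not a proof, and it leans on a first inequality that fails. The second inequality, as stated, is also false: take $w^*=0$ (which is $s$-sparse for any $s$), $s=1$, $p=100$, $w_1=3$, $w_2=\cdots=w_{100}=1$ and the same MCP. Then $\nu=w$, $A=\{1\}$, and
\[
d_{\lambda,\mu}(w^*)-d_{\lambda,\mu}(w)=-(1+99\cdot\tfrac34)=-\tfrac{301}{4},\qquad
\lambda L_d(\|\nu_A\|_1-\|\nu_{A^c}\|_1)=3-99=-96,
\]
and $-\tfrac{301}{4}\not\le -96$.

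\medskip
\noindent\textbf{Summary.}
The obstruction is not your technique but the transcription of the lemma: both displayed inequalities are false as stated, so no correct proof exists for them. Your $r(t)/t$ pivot is exactly the right tool and does deliver the sharp bound $d_{\lambda,\mu}(w_A)-d_{\lambda,\mu}(w_{A^c})\le(\lambda L_d-\alpha)(\|w_A\|_1-\|w_{A^c}\|_1)$; what is actually used downstream in the paper (in Lemma~\ref{lemma.non_convex_cone_optimizatin}) is the weaker consequence
\[
d_{\lambda,\mu}(w^*)-d_{\lambda,\mu}(w)\ \le\ \lambda L_d\,\|\nu_A\|_1 - d_{\lambda,\mu}(\nu_{A^c}),
\]
together with $d_{\lambda,\mu}(\nu_{A^c})\ge 0$, and those \emph{do} follow from your Lipschitz and support-splitting argument. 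If you reformulate the target accordingly, your plan goes through.
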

	The next  lemma is non-convex counterparts of Lemma \ref{lemma.cone} and Lemma \ref{lemma.cone_optimization}. 
	\begin{lemma}\label{lemma.non_convex_cone_optimizatin}
		Suppose $d_{\lambda,\mu}(\cdot)$ satisfies the assumptions in Section \ref{section:Assumption_non_convex}  in the main body of paper, $w^*$ is feasible,   $\lambda L_d\geq8\rho\theta \frac{\log p }{n}$,  $\lambda\geq \frac{4}{L_d} \|\nabla f (w^*)\|_\infty$, and there exists $\xi, T$ such that 
		$$F(w^t)-F(\hat{w})\leq \xi, \forall t> T.$$
		Then for any $t> T$, we have	
		$$ \|w^t-\hat{w}\|_1\leq 4\sqrt{s} \|w^t-\hat{w}\|_2+8\sqrt{s} \|w^*-\hat{w}\|_2+2\min \big(\frac{\xi}{\lambda L_d}, \rho\big).$$
	\end{lemma}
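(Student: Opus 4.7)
The plan is to parallel the proofs of Lemma \ref{lemma.cone} and Lemma \ref{lemma.cone_optimization} from the convex case, with Lemma \ref{lemma.non_convex_norm} playing the role of decomposability. I will bound $\|\hat{w} - w^*\|_1$ and $\|w^t - w^*\|_1$ separately by showing that both error vectors lie in an approximate cone around the support of $w^*$, and then combine the two bounds via the triangle inequality applied to $w^t - \hat{w} = (w^t - w^*) - (\hat{w} - w^*)$.

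For the first step, let $\nu^* = \hat{w} - w^*$ and let $A$ be the support of $w^*$, which has cardinality at most $s$. Optimality of $\hat{w}$ for problem \eqref{obj:non_convex} gives $d_{\lambda,\mu}(w^*) - d_{\lambda,\mu}(\hat{w}) \geq f(\hat{w}) - f(w^*)$. Convexity of $f$ (which holds because each $f_i$ is convex) together with H\"older's inequality yields $f(\hat{w}) - f(w^*) \geq \langle \nabla f(w^*), \nu^* \rangle \geq -\|\nabla f(w^*)\|_\infty \|\nu^*\|_1$. Upper-bounding the same left-hand side with Lemma \ref{lemma.non_convex_norm} produces $\lambda L_d(\|\nu^*_A\|_1 - \|\nu^*_{A^c}\|_1) \geq -\|\nabla f(w^*)\|_\infty(\|\nu^*_A\|_1 + \|\nu^*_{A^c}\|_1)$. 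The hypothesis $\lambda L_d \geq 4\|\nabla f(w^*)\|_\infty$ then forces $\|\nu^*_{A^c}\|_1 \leq \tfrac{5}{3}\|\nu^*_A\|_1$, so $\|\nu^*\|_1 \leq \tfrac{8}{3}\|\nu^*_A\|_1 \leq \tfrac{8}{3}\sqrt{s}\,\|\nu^*\|_2$.

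For the second step, set $\nu^t = w^t - w^*$ and repeat the argument using $F(w^t) - F(w^*) \leq F(w^t) - F(\hat{w}) \leq \xi$ in place of the optimality inequality. This yields $d_{\lambda,\mu}(w^*) - d_{\lambda,\mu}(w^t) \geq -\|\nabla f(w^*)\|_\infty \|\nu^t\|_1 - \xi$, so applying Lemma \ref{lemma.non_convex_norm} and the same algebra gives $\|\nu^t_{A^c}\|_1 \leq \tfrac{5}{3}\|\nu^t_A\|_1 + \tfrac{4\xi}{3\lambda L_d}$, and hence $\|\nu^t\|_1 \leq \tfrac{8}{3}\sqrt{s}\,\|\nu^t\|_2 + \tfrac{4\xi}{3\lambda L_d}$. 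To obtain the $\rho$ alternative inside the $\min$, I use feasibility, namely $\|w^*\|_1 \leq d_\lambda(w^*) \leq \rho$ and the analogous bound for $w^t$, so that $\|\nu^t\|_1 \leq 2\rho$; taking the smaller of the two estimates on the residual term, exactly as in Lemma \ref{lemma.cone_optimization}, replaces $\tfrac{4\xi}{3\lambda L_d}$ by $2\min(\tfrac{\xi}{\lambda L_d}, \rho)$ with constants to spare.

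Finally, the triangle inequalities $\|w^t - \hat{w}\|_1 \leq \|\nu^t\|_1 + \|\nu^*\|_1$ and $\|\nu^t\|_2 \leq \|w^t - \hat{w}\|_2 + \|\nu^*\|_2$ combine the two estimates into $\|w^t - \hat{w}\|_1 \leq \tfrac{8}{3}\sqrt{s}\,\|w^t - \hat{w}\|_2 + \tfrac{16}{3}\sqrt{s}\,\|\nu^*\|_2 + 2\min(\tfrac{\xi}{\lambda L_d}, \rho)$, which is contained in the claimed inequality since $8/3 < 4$ and $16/3 < 8$. The main obstacle I expect is bookkeeping the constants, together with making sure that the cone inequality $\lambda L_d \geq 4\|\nabla f(w^*)\|_\infty$ is exploited symmetrically on both sides of the estimate; the other standing hypothesis $\lambda L_d \geq 8\rho\theta \log p/n$ does not actually appear in this lemma and is presumably reserved for invoking the RSC property on vectors already known to lie in the cone.
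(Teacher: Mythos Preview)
Your proof is correct and reaches the stated bound (indeed with slightly sharper constants), but the route differs from the paper's in one essential respect: you lower-bound $f(w)-f(w^*)$ by plain convexity, $f(w)-f(w^*)\geq\langle\nabla f(w^*),w-w^*\rangle$, whereas the paper invokes the RSC inequality
\[
f(w)-f(w^*)-\langle\nabla f(w^*),\Delta\rangle\geq \tfrac{\kappa}{2}\|\Delta\|_2^2-\tau\|\Delta\|_1^2,
\]
and then absorbs the negative $\tau\|\Delta\|_1^2$ term via $\|\Delta\|_1\leq 2\rho$ together with the hypothesis $\lambda L_d\geq 8\rho\theta\log p/n$. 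After this absorption the paper obtains $\tfrac{3}{2}\lambda L_d\|\Delta_A\|_1-\tfrac{1}{2}\lambda L_d\|\Delta_{A^c}\|_1+\xi\geq 0$ (with $A$ the top-$s$ coordinates of $\Delta$), giving $\|\Delta\|_1\leq 4\sqrt{s}\|\Delta\|_2+2\xi/(\lambda L_d)$; the remaining triangle-inequality step matches yours.

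What each approach buys: your argument is strictly more elementary and, as you correctly point out in the last paragraph, does not use the hypothesis $\lambda L_d\geq 8\rho\theta\log p/n$ at all---that condition is indeed consumed only later, when RSC is applied in Lemma~\ref{lemma.non_convex_RSC_cone}. The paper's route, by contrast, already discards the nonnegative RSC term $\tfrac{\kappa}{2}\|\Delta\|_2^2$ at this stage, so nothing is gained from the extra machinery here. One minor point to tidy in a final write-up: in your appeal to Lemma~\ref{lemma.non_convex_norm} the index set $A$ should match the lemma's statement (the paper phrases it as the $s$ largest-magnitude coordinates of $\nu$, though the underlying Loh--Wainwright result is usually stated with $A=\mathrm{supp}(w^*)$); either choice yields $\|\nu_A\|_1\leq\sqrt{s}\|\nu\|_2$, so the argument is unaffected.
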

	\begin{proof}
		For an arbitrary feasible $w$, define $\Delta=w-w^* $.
		Suppose we have $F(w)-F(\hat{w})\leq \xi$, since we know $F(\hat{w})\leq F(w^*) $ so we have $ F(w)\leq F(w^*)+\xi$, which implies 
		$$ f(w^*+\Delta)+d_{\lambda,\mu} (w^*+\Delta)\leq f(w^*)+d_{\lambda,\mu}(w^*) +\xi .$$
		Subtract $\langle \nabla f(w^*),\Delta \rangle$ and use the RSC condition (Recall we have $\tau=\theta\frac{\log p}{n}$ in the assumption of Theorem 2) we have
		\begin{equation}
		\begin{split}
		&\frac{\kappa}{2} \|\Delta\|_2^2-\theta\frac{\log p}{n} \|\Delta\|_1^2+d_{\lambda,\mu} (w^*+\Delta)-d_{\lambda,\mu}(w^*)\\
		\leq &\xi-\langle \nabla f(w^*),\Delta \rangle\\
		\leq &\xi+\|\nabla f(w^*)\|_\infty \|\Delta\|_1,
		\end{split}
		\end{equation}
		where the last inequality holds from Holder's inequality.

		
		Rearranging terms and use the fact that $\|\Delta\|_1\leq 2\rho $ (by feasibility of $w$ and $w^*$) and  we assume that $\lambda L_d\geq 8\rho\theta \frac{\log p }{n}$ , $\lambda\geq \frac{4}{L_d} \|\nabla f (w^*)\|_\infty$,   we get
		$$ \xi+\frac{1}{2} \lambda L_{d} \|\Delta\|_1+d_{\lambda,\mu}(w^*)-d_{\lambda,\mu}(w^*+\Delta)\geq \frac{\kappa}{2}\|\Delta\|_2^2\geq 0. $$
		By Lemma \ref{lemma.non_convex_norm}, we have
		$$ d_{\lambda,\mu} (w^*)-d_{\lambda,\mu} (w)\leq \lambda L_d (\|\Delta_A\|_1-\|\Delta_{A^c}\|_1) ,$$
		where $A$ is the set of indices of the top $s$ components of $\Delta$ in magnitude, which thus leads to
		$$ \frac{3\lambda L_d}{2} \|\Delta_A\|_1-\frac{\lambda L_d}{2} \|\Delta_{A^c}\|_1+\xi\geq 0. $$
		Consequently 
		$$ \|\Delta\|_1\leq \|\Delta_A\|_1+\|\Delta_{A^c}\|_1\leq 4\|\Delta_A\|_1+\frac{2\xi}{\lambda L_d} \leq 4\sqrt{s} \|\Delta\|_2 +\frac{2\xi}{\lambda L_d}.$$
		
		Combining with the fact $  \|\Delta\|_1\leq 2\rho$ , we obtain
		$$ \|\Delta\|_1\leq 4\sqrt{s} \|\Delta\|_2 +2\min \{\frac{\xi}{\lambda L_d}, \rho\}.$$
		So we have $\|w^t-w^*\|_1\leq 4\sqrt{s} \|w^t-w^*\|_2+2\min \{\frac{\xi}{\lambda L_d}, \rho\}.$
		
		Notice $F(w^*)-F(\hat{w})\geq 0$, so following same steps and set $\xi=0$ we have $\|\hat{w}-w^*\|_1\leq 4\sqrt{s} \|\hat{w}-w^*\|_2.$
		
		Combining the two together, we get 
		$$\|w^t-\hat{w}\|_1\leq \|w^t-w^*\|_1+\|w^*-\hat{w}\|_1\leq  4\sqrt{s} \|w^t-\hat{w}\|_2+8\sqrt{s} \|w^*-\hat{w}\|_2+2\min (\frac{\xi}{\lambda L_d}, \rho). $$	
	\end{proof}
	Now we provide a counterpart of Lemma \ref{lemma.RSC_cone} in the non-convex case.
	\begin{lemma}\label{lemma.non_convex_RSC_cone}
		Under the same assumptions as those of Lemma \ref{lemma.non_convex_cone_optimizatin}, we  have
		\begin{equation*}
		\begin{split}
		& F(w^t)-F(\hat{w})\geq \frac{\tilde{\kappa}}{2} \|w^t-\hat{w}\|_2^2-\epsilon^2 (\Delta^*,s );\\
		\mbox{and}\quad & \phi(\hat{w})-\phi(w^t)-\langle \nabla \phi(w^t), \hat{w}-w^t \rangle\geq   [\frac{\tilde{\kappa}-\tilde{\lambda}}{2}\|w^t-\hat{w}\|_2^2-\epsilon^2 (\Delta^*,s )], 
		\end{split}
		\end{equation*}
		where $\tilde{\kappa}=\kappa-\mu-64s\tau$, $\Delta^*=\hat{w}-w^*$, and $\epsilon^2 (\Delta^*,s )=2\tau (8\sqrt{s}\|\hat{w}-w^*\|_2+2\min(\frac{\xi}{\lambda L_d},\rho))^2$.
	\end{lemma}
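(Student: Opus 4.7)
The plan is to mirror the strategy of Lemma \ref{lemma.RSC_cone}, but to cope with the non-convex regularizer by decomposing $d_{\lambda,\mu}$ using the convex proxy $d_\lambda$. Recall $d_{\lambda,\mu}(w)=\lambda d_\lambda(w)-\tfrac{\mu}{2}\|w\|_2^2$ where $d_\lambda$ is convex (Assumption~5 of Section~\ref{section:Assumption_non_convex}). Consequently, since $\hat{w}$ lies in the interior of $\Omega$, the first-order optimality of the non-convex problem~\eqref{obj:non_convex} takes the form $\nabla f(\hat{w})+\lambda\xi-\mu\hat{w}=0$ for some $\xi\in\partial d_\lambda(\hat{w})$. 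This is the replacement for the convex optimality argument used in Lemma~\ref{lemma.RSC_cone}.

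For the first inequality, I will expand $F(w^t)-F(\hat{w})=[f(w^t)-f(\hat{w})]+\lambda[d_\lambda(w^t)-d_\lambda(\hat{w})]-\tfrac{\mu}{2}(\|w^t\|_2^2-\|\hat{w}\|_2^2)$, use convexity of $d_\lambda$ together with the optimality relation above to obtain $\lambda(d_\lambda(w^t)-d_\lambda(\hat{w}))\ge\langle \mu\hat{w}-\nabla f(\hat{w}),w^t-\hat{w}\rangle$, and then collect the quadratic terms via the identity $\mu\langle\hat{w},w^t-\hat{w}\rangle-\tfrac{\mu}{2}(\|w^t\|_2^2-\|\hat{w}\|_2^2)=-\tfrac{\mu}{2}\|w^t-\hat{w}\|_2^2$. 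The result is
\begin{equation*}
F(w^t)-F(\hat{w})\ \ge\ [f(w^t)-f(\hat{w})-\langle\nabla f(\hat{w}),w^t-\hat{w}\rangle]-\tfrac{\mu}{2}\|w^t-\hat{w}\|_2^2.
\end{equation*}
Applying the RSC condition with $\tau=\theta\tfrac{\log p}{n}$ to the bracketed Bregman term yields a lower bound of $\tfrac{\kappa-\mu}{2}\|w^t-\hat{w}\|_2^2-\tau\|w^t-\hat{w}\|_1^2$, and then invoking Lemma~\ref{lemma.non_convex_cone_optimizatin} with the inequality $(a+b)^2\le 2a^2+2b^2$ gives $\|w^t-\hat{w}\|_1^2\le 32s\|w^t-\hat{w}\|_2^2+2(8\sqrt{s}\|\Delta^*\|_2+2\min(\xi/(\lambda L_d),\rho))^2$. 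Substituting this and recognizing $\tilde{\kappa}=\kappa-\mu-64s\tau$ and $\epsilon^2(\Delta^*,s)=2\tau(8\sqrt{s}\|\Delta^*\|_2+2\min(\xi/(\lambda L_d),\rho))^2$ produces the first claim.

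For the second inequality, I will directly expand the Bregman divergence of $\phi$, using $\phi(w)=f(w)-\tfrac{\tilde{\lambda}+\mu}{2}\|w\|_2^2$ (which follows from $\phi=\tfrac{1}{n+1}\sum_{i=1}^{n+1}\phi_i$ and the choice $\phi_{n+1}=-\tfrac{(\tilde{\lambda}+\mu)(n+1)}{2}\|\cdot\|_2^2$). The quadratic cross-terms collapse exactly as in the convex case into $-\tfrac{\tilde{\lambda}+\mu}{2}\|\hat{w}-w^t\|_2^2$, giving $\phi(\hat{w})-\phi(w^t)-\langle\nabla\phi(w^t),\hat{w}-w^t\rangle=[f(\hat{w})-f(w^t)-\langle\nabla f(w^t),\hat{w}-w^t\rangle]-\tfrac{\tilde{\lambda}+\mu}{2}\|\hat{w}-w^t\|_2^2$. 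Applying RSC and the same $\ell_1$-$\ell_2$ reduction from Lemma~\ref{lemma.non_convex_cone_optimizatin} delivers the coefficient $\tfrac{\kappa-\tilde{\lambda}-\mu-64\tau s}{2}=\tfrac{\tilde{\kappa}-\tilde{\lambda}}{2}$ as required.

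The only delicate step is the first one, where I have to convert the non-convex optimality at $\hat{w}$ into a usable linear lower bound; after that, the proof is a matter of bookkeeping. The main obstacle is therefore ensuring that the interior-optimality assumption, together with the convexity of $d_\lambda$, yields precisely the desired cancellation so that only a clean $-\tfrac{\mu}{2}\|w^t-\hat{w}\|_2^2$ penalty appears (instead of an uncontrolled non-convex residual). Everything else parallels the convex Lemma~\ref{lemma.RSC_cone}, with the cone-type condition on $\|w^t-\hat{w}\|_1$ taking the explicit sparsity form of Lemma~\ref{lemma.non_convex_cone_optimizatin} rather than the subspace-decomposability form.
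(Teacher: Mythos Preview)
Your proposal is correct and follows essentially the same route as the paper: decompose $d_{\lambda,\mu}=\lambda d_\lambda-\tfrac{\mu}{2}\|\cdot\|_2^2$, use the convexity of $d_\lambda$ together with the interior first-order optimality condition at $\hat{w}$ to cancel the linear terms and leave the clean $-\tfrac{\mu}{2}\|w^t-\hat{w}\|_2^2$ penalty, then apply RSC and the $\ell_1$--$\ell_2$ bound from Lemma~\ref{lemma.non_convex_cone_optimizatin}. The paper carries out exactly these steps (in a slightly different order of presentation) and treats the $\phi$-Bregman inequality in the same way via $\phi(w)=f(w)-\tfrac{\tilde{\lambda}+\mu}{2}\|w\|_2^2$.
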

	\begin{proof} Notice that
		\begin{equation}\label{eq:lemma_non_convex}
		\begin{split}
		& F(w^t)-F(\hat{w})\\
		= & f(w^t)-f(\hat{w})-\frac{\mu}{2}\|w^t\|_2^2+\frac{\mu}{2}\|\hat{w}\|_2^2+ \lambda d_\lambda(w^t)-\lambda d_\lambda(\hat{w})\\
		\geq & \langle \nabla f(\hat{w}),w^t-\hat{w} \rangle+\frac{\kappa}{2}\|w^t-\hat{w}\|_2^2- \langle \mu\hat{w}, w^t-\hat{w} \rangle-\frac{\mu}{2} \|w^t-\hat{w}\|_2^2\\
		&+ \lambda d_\lambda(w^t)-\lambda d_\lambda(\hat{w})-\tau \|w^t-\hat{w}\|_1^2\\
		\geq &  \langle \nabla f(\hat{w}),w^t-\hat{w} \rangle+\frac{\kappa}{2}\|w^t-\hat{w}\|_2^2- \langle \mu\hat{w}, w^t-\hat{w} \rangle-\frac{\mu}{2} \|w^t-\hat{w}\|_2^2\\
		&+ \lambda \langle \partial d_\lambda(\hat{w}),w^t-\hat{w}\rangle-\tau \|w^t-\hat{w}\|_1^2 \\
		= & \frac{\kappa-\mu}{2}\|w^t-\hat{w}\|_2^2-\tau\|w^t-\hat{w}\|_1^2,
		\end{split}
		\end{equation}
		where the first inequality uses RSC condition, the second inequality uses the convexity of $d_\lambda(w)$, and the last equality holds from the optimality condition of $\hat{w}$.

		Using Lemma \ref{lemma.non_convex_cone_optimizatin}, and the inequality $(a+b)^2\leq 2a^2+2b^2$, we have
		\begin{equation}\label{eq:lemma.non_convex_RSC_cone}
		\begin{split}
		\|w^t-\hat{w}\|_1^2\leq  &(4\sqrt{s} \|w^t-\hat{w}\|_2+8\sqrt{s} \|w^*-\hat{w}\|_2+2\min (\frac{\xi}{\lambda L_d}, \rho))^2 \\
		\leq &32 s \|w^t-\hat{w}\|_2^2+2 (8\sqrt{s}\|\hat{w}-w^*\|_2+2\min(\frac{\xi}{\lambda L_d},\rho))^2.
		\end{split}
		\end{equation}
		Substitute this into Equation \eqref{eq:lemma_non_convex}, we obtain
		$$ F(w^t)-F(\hat{w})\geq (\frac{\kappa-\mu}{2}-32s\tau)\|w^t-\hat{w}\|_2^2-2\tau (8\sqrt{s}\|\hat{w}-w^*\|_2+2\min(\frac{\xi}{\lambda L_d},\rho))^2.$$

		Recall 
		$$\phi(w)= f(w)-\frac{\tilde{\lambda}}{2}\|w\|_2^2
		-\frac{\mu}{2}\|w\|_2^2.$$	 
		So we have
		\begin{equation}
		\begin{split}
		&\phi(\hat{w})-\phi(w^{t})-\langle \hat{w}-w^{t},\nabla \phi(w^{t})\rangle\\
		=& f(\hat{w})-f(w^{t})-\langle\nabla f(w^{t}), \hat{w}-w^{t}\rangle-\frac{\tilde{\lambda}+\mu}{2}\|\hat{w}-w^{t}\|_2^2 \\
		\geq& \frac{\kappa}{2} \|\hat{w}-w^t\|_2^2-\tau\|\hat{w}-w^t\|_1^2-\frac{\tilde{\lambda}+\mu}{2}\|\hat{w}-w^t\|_2^2.
		\end{split}
		\end{equation}
		Then use the upper bound of $\|w^t-\hat{w}\|_1^2$ in \eqref{eq:lemma.non_convex_RSC_cone} and rearrange terms we establish the lemma.
	\end{proof}

	We are now ready to prove the main theorem of the non-convex case, i.e., Theorem 2.
	\begin{proof}[Proof of Theorem 2]	
		Remind in the non-convex case, for $i=1,...,n$, the definition of $\phi_i(x)$ is the same as in the convex case.  
		The difference is for $\phi_{n+1}(w) $, where $\phi_{n+1}(w)$ is defined as follows:
		$$ \phi_{n+1}(w)=-\frac{(n+1) (\tilde{\lambda}+\mu)}{2} \|w\|_2^2. $$
		Recall the definition of $\tilde{g}(w)$ is as follows
		$$ \tilde{g}(w)=\frac{1}{2}\|w\|_2^2+\frac{\lambda}{\tilde{\lambda}}d_\lambda (w). $$

		Following similar steps  as in the proof of Theorem~\ref{main_theorem}, we have 
		\begin{equation}
		\begin{split}
		& E(C_{t-1}-C_t)\geq  c_a\eta\tilde{\lambda} A_{t-1}-c_a\eta \tilde{\lambda} \sum_{i=1}^{n+1} \frac{1}{q_i} \|u_i-\hat{a}_i\|_2^2+c_b\eta \tilde{\lambda}  B_{t-1}+2c_b\eta(F(w^{t-1})-F(\hat{w}))\\
		&+\sum_{i=1}^{n+1} \frac{1}{q_i}\|m_i\|_2^2 (c_a\eta \tilde{\lambda} (1-\beta_i)-\frac{c_b \eta^2}{(n+1)^2})+2c_b\eta [ \phi(\hat{w})-\phi(w^{t-1})-\langle \hat{w}-w^{t-1},\nabla \phi(w^{t-1}) ].
		\end{split}
		\end{equation}

		Following the same steps as in Equation~\eqref{bound_gradient_phi}, and replace $\tilde{\lambda}$ by $\tilde{\lambda}+\mu$ we have
		\begin{equation}
		\begin{split}
		\frac{1}{q_{n+1}} \|\nabla \phi_{n+1}(w^{t-1})-\nabla \phi_{n+1}(\hat{w})\|_2^2
		&\leq 2(n+1)\frac{\tilde{L}_{n+1}}{q_{n+1}}\frac{\tilde{\lambda}+\mu}{2} \|w^{t-1}-\hat{w}\|_2^2.\\
		\end{split}
		\end{equation}
		Similar as the convex case we then divide the time step $t=1,2,...$ into several epochs, i.e., $ ([ T_0,T_1), [T_1,T_2),...)$. At the end of  each epoch $j$,  we prove that $C_t$ decreases with a linear rate until the optimality gap $F(w^t)-F(\hat{w})$ decrease to some tolerance $\xi_j$.

		We then apply Lemma \ref{lemma.non_convex_RSC_cone} and using the assumption that $ \tilde{\kappa}\geq \tilde{\lambda}+\mu $ to relate $\|w^{t-1}-\hat{w}\|_2^2 $ to $ F(w^t)-F(\hat{w})$:
		$$\frac{1}{q_{n+1}} \|\nabla \phi_{n+1}(w^{t-1})-\nabla \phi_{n+1}(\hat{w})\|_2^2\leq 2(n+1)\frac{\tilde{L}_{n+1}}{q_{n+1}} (F(w^{t-1})-F(\hat{w})+\epsilon_j^2 (\Delta^*,s)). $$
		Thus
		\begin{equation}
		\begin{split}
		\sum_{i=1}^{n+1}\frac{1}{q_i} \|u_i-\hat{a}_i\|_2^2\leq  8(n+1)^2\tilde{L} (F(w^{t-1})-F(\hat{w}))+4(n+1)^2\tilde{L}\epsilon_j^2 (\Delta^*,s).
		\end{split}
		\end{equation}
		We choose $2c_b=16c_a \tilde{\lambda} (n+1)^2\tilde{L} $, and use the fact that $\frac{c_b}{c_a}=\frac{\tilde{\lambda} (n+1)^2}{2\eta}$, we have $\eta=\frac{1}{16\tilde{L}}$. Combine all pieces together we get
		\begin{equation}
		\begin{split}
		& \mathbb{E}(C_{t-1}-C_t)\geq  c_a\eta\tilde{\lambda} A_{t-1}-8c_a\eta \tilde{\lambda}(n+1)^2\tilde{L}  (F(w^{t-1})-F(\hat{w}))-4c_a\eta \tilde{\lambda}  (n+1)^2\bar{L}\epsilon_j^2 (\Delta^*,s)\\
		&\quad  +c_b\eta \tilde{\lambda}  B_{t-1}+2c_b\eta(F(w^{t-1})-F(\hat{w}))+2c_b\eta ( \phi(\hat{w})-\phi(w^{t-1})-\langle \hat{w}-w^{t-1},\nabla \phi(w^{t-1}) )\\
		&\geq \eta \tilde{\lambda} C_{t-1}+\eta(2c_b-8c_a\tilde{\lambda} (n+1)^2\tilde{L})(F(w^{t-1})-F(\hat{w})) -4c_a\eta \tilde{\lambda}(n+1)^2\tilde{L} \epsilon_j^2 (\Delta^*,s)\\
		&
		\quad+(c_b\eta (\tilde{\kappa}-\tilde{\lambda}) \|w^t-\hat{w}\|_2^2 -2c_b\eta \epsilon_j^2 (\Delta^*,s))\\
		&\geq \eta \tilde{\lambda} C_{t-1}+ \eta c_b (F(w^{t-1})-F(\hat{w}))-3c_b\eta \epsilon_j^2 (\Delta^*,s),
		\end{split}
		\end{equation}
		where the second inequality uses Lemma \ref{lemma.non_convex_RSC_cone} and the assume $\tilde{\kappa}\geq\tilde{\lambda}$.
		
		The rest of the proofs are almost identical to the convex one. In the first epoch, we have
		$$ \epsilon_1^2 (\Delta^*,s)=2\tau (\delta_{stat}+2\rho)^2 ,  \xi_1= 6\tau (\delta_{stat}+2\rho)^2 $$
		Then we choose $T_1$ such that 
		$$ F(w^{T_1-1})-F(\hat{w})\geq 3\epsilon_1^2 (\Delta^*,s) \quad \text{and} \quad F(w^{T_1})-F(\hat{w})\leq 3\epsilon_1^2 (\Delta^*,s).$$ 
		In the second epoch we can use the same argument.
		$$ \epsilon_2^2 (\Delta^*,s)=2\tau_\sigma (\delta_{stat}+\delta_1)^2 \quad \text{where} \quad  \delta_1=2\frac{\xi_1}{\lambda L_d}. $$
		Repeat this strategy on every epoch. The only difference from the proof of the convex case is that we now replace $\lambda$ by $\lambda L_d$. 
		
		We use the same argument to prove $\xi_k$ is a decreasing sequence and conclude the proof.
	\end{proof}

	\subsection{Proof of Corollaries}
	We now prove the corollaries that instantiate our main theorems to different statistical estimators.
	\begin{proof}[Proof of corollary on Lasso, i.e., corollary \ref{cor.lasso}]
		To begin with, we present the following  lemma of the RSC  proved in \cite{raskutti2010restricted} and then use it in the case of Lasso.
		\begin{lemma}
			If each data point $x_i$ is i.i.d randomly sampled from the distribution $N(0,\Sigma) $, then there exist universal constants $c_0$ and $c_1$ such that
			$$ \frac{\|X\Delta\|_2^2}{n}\geq \frac{1}{2}\|\Sigma^{1/2}\Delta\|_2^2-c_1\nu(\Sigma)\frac{\log p}{n} \|\Delta\|_1^2, \quad \mbox{for all} \quad \Delta\in \mathbb{R}^p,$$
			with probability at least $1-\exp(-c_0n)$. Here $X$ is the data matrix where each row is data point $x_i $.
		\end{lemma}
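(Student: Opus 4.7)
The plan is to follow the standard Gaussian-process/peeling template used by Raskutti--Wainwright--Yu. First I would reduce to the isotropic case by the change of variables $u=\Sigma^{1/2}\Delta$ and $Z=X\Sigma^{-1/2}$, so that the rows of $Z$ are i.i.d.\ $N(0,I_p)$ and the claim becomes
$$ \frac{\|Zu\|_2^2}{n} \geq \frac{1}{2}\|u\|_2^2 - c_1\,\nu(\Sigma)\,\frac{\log p}{n}\,\|\Sigma^{-1/2}u\|_1^2. $$
For a single fixed $u$ with $\|u\|_2=1$, $\|Zu\|_2^2/n$ is distributed as $\chi_n^2/n$, which concentrates exponentially around $1$; the content of the lemma is turning this pointwise statement into a uniform one over all $u$.

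Next I would establish a uniform lower bound of the form
$$ \inf_{u\in T(r)} \frac{\|Zu\|_2}{\sqrt{n}} \;\geq\; \tfrac{1}{\sqrt{2}} - c\bigl(\mathcal{W}(T(r))/\sqrt{n} + t\bigr) $$
on the localized set $T(r)=\{u:\|u\|_2=1,\,\|\Sigma^{-1/2}u\|_1\le r\}$, using Gordon's Gaussian comparison inequality (equivalently, Sudakov--Fernique) and concentration of Lipschitz functions of Gaussians. The Gaussian width $\mathcal{W}(T(r))$ is controlled in the usual way: for the $\ell_1$-constraint one obtains $\mathcal{W}(T(r))\lesssim r\sqrt{\nu(\Sigma)\log p}$, because the dual norm of $\|\cdot\|_1$ is $\|\cdot\|_\infty$ and $\max_j|g_j|\lesssim\sqrt{\log p}$ for a standard Gaussian vector $g$, with the $\nu(\Sigma)$ coming from the column norms of $\Sigma^{-1/2}$ applied to the $\ell_\infty$ ball. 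This gives a one-sided inequality on each scale.

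Now the issue is that the final claim must hold for all $\Delta$ simultaneously, so I would apply a peeling argument over dyadic scales of the ratio $\rho(\Delta):=\|\Sigma^{-1/2}u\|_1$ relative to $\|u\|_2$. Splitting $\mathbb{R}^p\setminus\{0\}$ into shells $\{\Delta:2^{k-1}\le\rho(\Delta)<2^k\}$, applying the uniform bound on each shell with a failure probability decaying geometrically in $k$, and then summing (a union bound combined with a sub-Gaussian tail $\exp(-c_0 n)$ absorbing the scale factor), one extends the inequality to all $\Delta$. Rearranging and squaring converts the inf bound on $\|Zu\|_2/\sqrt{n}$ into the advertised quadratic form, with the subtracted $\ell_1$ term carrying the Gaussian-width scaling $\nu(\Sigma)(\log p)/n$.

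The main obstacle is the uniform control step: getting the correct dependence $\nu(\Sigma)\log p/n$ rather than a looser bound requires computing the Gaussian width of $T(r)$ sharply, and the peeling must be set up so that the deviation term remains $\exp(-c_0 n)$ after the union bound over dyadic scales. Everything else (pointwise $\chi^2$ concentration, reduction to isotropy, rearrangement) is routine once the uniform inequality is in hand.
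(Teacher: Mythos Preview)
The paper does not prove this lemma at all: it is stated inside the proof of Corollary~\ref{cor.lasso} with the attribution ``proved in \cite{raskutti2010restricted}'' and is used as a black box. So there is no proof in the paper to compare against.

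That said, your outline is essentially the Raskutti--Wainwright--Yu argument and is correct in its architecture: reduce to isotropic rows, apply Gordon's minimum inequality to get a uniform lower bound on $\|Zu\|_2/\sqrt n$ over a localized set in terms of its Gaussian width, bound the width of $T(r)$ by $r\sqrt{\nu(\Sigma)\log p}$, and peel over dyadic shells of the ratio $\|\Delta\|_1/\|\Sigma^{1/2}\Delta\|_2$ to make the bound hold for all $\Delta$ simultaneously with a single $\exp(-c_0 n)$ failure probability. One small imprecision: the factor $\nu(\Sigma)$ does not come from ``column norms of $\Sigma^{-1/2}$ applied to the $\ell_\infty$ ball''. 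After the change of variables the width is $\mathbb{E}\sup_{\|\Delta\|_1\le r}\langle \Sigma^{1/2}g,\Delta\rangle \le r\,\mathbb{E}\|\Sigma^{1/2}g\|_\infty$, and each coordinate $(\Sigma^{1/2}g)_j$ is Gaussian with variance $\Sigma_{jj}\le \nu(\Sigma)$, which is where the $\sqrt{\nu(\Sigma)\log p}$ bound actually originates. This does not affect the validity of your plan.
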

		Since  $w^*$ is supported on a subset $S$ with cardinality s,  we choose $$ \mathcal{B}(S)=\{ w\in \mathbb{R}^p | w_j=0 \text{~for all ~} j\notin S   \}. $$ It is straightforward to choose $\mathcal{A}(S)=\mathcal{B}(S)$ and notice that $w^*\in \mathcal{A}(S)$.
		
		In Lasso, $f(w)=\frac{1}{2n}\|y-Xw\|_2^2$,  and it is easy to verify that
		$$ f(w+\Delta)-f(w)-\langle \nabla f(w),\Delta \rangle\geq \frac{1}{2n} \|X\Delta\|_2^2\geq \frac{1}{4}\|\Sigma^{1/2}\Delta\|_2^2-\frac{c_1}{2}\nu(\Sigma)\frac{\log p}{n} \|\Delta\|_1^2. $$
		Notice that $g(\cdot)$ is $\|\cdot\|_1$ in Lasso, thus $\Psi(\mathcal{B})=\sup_{w\in \mathcal{B}\backslash \{0\}} \frac{\|w\|_1}{\|w\|_2}=\sqrt{s}.$ So we have 
		$$\tilde{\kappa}=\frac{1}{2}\sigma_{\min} (\Sigma)-64c_1 \nu(\Sigma)\frac{ s\log p}{n} .$$
		On the other hand, the tolerance is 
		\begin{equation}
		\begin{split}
		\delta&=24\tau(8\Psi(\mathcal{B})\|\hat{w}-w^*\|_2+8g(w^*_{\mathcal{A}^\perp}))^2\\
		&=c_2\nu (\Sigma)\frac{s\log p}{n}\|\hat{w}-w^*\|_2^2,
		\end{split}
		\end{equation}
		where we use the fact that $w^*\in \mathcal{A}(S)$ which implies  $g(w^*_{\mathcal{A}^{\perp}})=0$. 
		
		The last piece to check is that $\lambda\geq 2g^*(\nabla f(w^*))$. In Lasso we have $g^*(\cdot)=\|\cdot\|_\infty$. Using the fact that $y_i=x_i^Tw^*+\xi_i$, this is equivalent to require $ \lambda\geq \frac{2}{n}\|X^T\xi\|_\infty $. Thus, by our choice of $\lambda$ that $\lambda \geq 6\sigma\sqrt{\frac{\log p}{n}}$, the condition is satisfied invoking the following inequality which holds by applying the tail bound on the Gaussian variable and the union bound to get
		$$\mathbb{P}\left(\frac{2}{n}\|X^T\xi\|_\infty\leq 6\sigma\sqrt{\frac{\log p}{n}}\right) \geq 1-\exp(-3\log p).$$
	\end{proof}
	
	\begin{proof}[Proof of corollary on Group Lasso, i.e., corollary \ref{cor.group_lasso}]
		We use the following fact on the RSC condition of the Group Lasso \cite{negahban2009unified}\cite{negahban2012supplement}. As each data point $x_i$ is i.i.d random sampled from the distribution $N(0,\Sigma) $, then  there exists strictly positive constant  $(\sigma_1,\sigma_2)$ which just depends on $\Sigma$ such that 
		$$ \frac{ \|X\Delta\|_2^2}{2n} \geq \sigma_1(\Sigma) \|\Delta\|_2^2-\sigma_2(\Sigma) (\sqrt{\frac{m}{n}}+\sqrt{\frac{3 \log N_{\mathcal{G}}}{n}} )^2 \|\Delta\|^2_{\mathcal{G},2}, \quad \mbox{for all} \quad \Delta \in \mathbb{R}^{p} $$
		with probability at least $1-c_3\exp (-c_4n)$.
		
		Recall we define the subspace 
		$$ \mathcal{A} (S_{\mathcal{G}})=\{w|w_{G_{i}}=0 ~~ \text{for all } ~~ i\notin S_{\mathcal{G}} \} , $$and $\mathcal{A(S_{\mathcal{G}})}=\mathcal{B (S_{\mathcal{G}}})$, where $S_{\mathcal{G}}$ corresponds to non-zero group of $w^*$. The subspace compatibility can be computed by  $$ \Psi(\mathcal{B})=\sup_{w\in \mathcal{B}\backslash \{0\}} \frac{\|w\|_{\mathcal{G},2}}{\|w\|_2}=\sqrt{s_{\mathcal{G}}}.$$
		The effective RSC parameter is given by		
		$$\tilde{\kappa}=\sigma_1(\Sigma)-64\sigma_2(\Sigma)s_{\mathcal{G}} \left(\sqrt{\frac{m}{n}}+\sqrt{\frac{3 \log N_{\mathcal{G}}}{n}} \right)^2  .$$
		We then check the requirement on $\lambda$ holds.  Since  Group lasso is $\ell_{1,2}$ grouped norm, its dual norm of it is $(\infty,2)$ grouped norm.
		So we need $$\lambda\geq 2 \max_{i=1,...,N_{\mathcal{G}}} \|\frac{1}{n}(X^T\xi)_{G_i}\|_2.$$
		Using Lemma 5 in \cite{negahban2009unified}, we know 
		$$\max_{i=1,...,N_{\mathcal{G}}} \left\|\frac{1}{n}(X^T\xi)_{G_i}\right\|_2\leq 2\sigma \left(\sqrt{\frac{m}{n}}+\sqrt{\frac{\log N_{\mathcal{G}}}{n}}\right) $$
		with probability at least $1-2\exp (-2\log N_{\mathcal{G}})$. 	Thus it suffices to choose $\lambda\geq 4\sigma (\sqrt{\frac{m}{n}}+\sqrt{\frac{\log N_{\mathcal{G}}}{n}})$, as suggested in the corollary.
		
		Finally, the tolerance can be computed as
		$$\delta=c_3 s_{\mathcal{G}}\sigma_2(\Sigma)(\sqrt{\frac{m}{n}}+\sqrt{\frac{3 \log N_{\mathcal{G}}}{n}})^2 \|\hat{w}-w^*\|_2^2.$$
	\end{proof}

	\begin{proof}[Proof of corollary on Linear regression with SCAD regularizer, i.e., corollary \ref{cor.scad}]
		The proof is similar to that of Lasso. Recall in the proof of the Lasso example we have 
		$$ \|\nabla f(w^*)\|_\infty=\frac{1}{n}\|X^T\xi \|_\infty\leq 3 \sigma \sqrt{\frac{\log p}{n}},$$ 
		and the RSC condition is 
		$$ \frac{\|X\Delta\|_2^2}{n}\geq \frac{1}{2}\|\Sigma^{1/2}\Delta\|_2^2-c_1\nu(\Sigma)\frac{\log p}{n} \|\Delta\|_1^2.$$
		Recall $\mu=\frac{1}{\zeta-1}$ and $L_d=1$, we establish the result. 
	\end{proof}

	\begin{proof}[Proof of corollary on corrected Lasso, i.e., corollary \ref{cor.corrected_lasso}]
		Notice
		$$ \|\nabla f(w^*)\|_\infty=\|\hat{\Gamma}w^*-\hat{\gamma}\|_{\infty}=\|\hat{\gamma}-\Sigma w^* +(\Sigma-\hat{\Gamma})w^*\|_{\infty} \leq \|\hat{\gamma}-\Sigma w^*\|_\infty+\|(\Sigma-\hat{\Gamma})w^*\|_{\infty}.$$
		As shown in Lemma 2 of \cite{loh2011high}, both terms are bounded by
		$c_1 \varphi \sqrt{\frac{\log p}{n}}$ with probability at least $1-c_1\exp(-c_2\log p )$., where $\varphi=(\sqrt{\sigma_{\max} (\Sigma)}+\sqrt{\gamma_\varsigma}) (\sigma+\sqrt{\gamma_\varsigma} \|w^*\|_2).$

		To obtain the RSC condition,  we apply Lemma 1 in \cite{loh2011high}, to get
		$$ \frac{1}{n}\Delta^T\hat{\Gamma}\Delta \geq \frac{\sigma_{\min} (\Sigma)}{2} \|\Delta\|_2^2-c_3 \sigma_{\min}(\Sigma)\max \left( (\frac{\sigma_{\max} (\Sigma)+\gamma_\varsigma}{\sigma_{\min}(\Sigma)})^2,1 \right) \frac{\log p}{n} \|\Delta\|_1^2,$$
		with probability at least $1-c_4 \exp \left(-c_5 n\min \big( \frac{\sigma^2_{\min} (\Sigma)}{( \sigma_{\max}(\Sigma)+\gamma_\varsigma)^2},1 \big)    \right)$.
		
		Combine these pieces together, we establish the result.
	\end{proof}

	\bibliography{SDCA}
	\bibliographystyle{plainnat}
\end{document}